%% file: no_journal.tex
\pdfoutput=1  
\documentclass[10.5pt,onecolumn]{article}
\usepackage{style_no_journal}
\input{math_commands_no_journal.tex}

\providecommand{\vU}{\bm{\mathrm{U}}}
\providecommand{\vY}{\bm{\mathrm{Y}}}
\providecommand{\vA}{\bm{\mathrm{A}}}
\providecommand{\vW}{\bm{\mathrm{W}}}
\providecommand{\vF}{\bm{\mathrm{F}}}
\providecommand{\vB}{\bm{\mathrm{B}}}
\providecommand{\vV}{\bm{\mathrm{V}}}
\providecommand{\vJ}{\bm{\mathrm{J}}}
\providecommand{\vX}{\bm{\mathrm{X}}}

\providecommand{\veta}{\bm{\eta}}
\providecommand{\vI}{\bm{I}}
\providecommand{\vxi}{\bm{\xi}}

\providecommand{\vOmega}{\bm{\Omega}}
\providecommand{\veps}{\bm{\epsilon}}
\providecommand{\vSigma}{\bm{\Sigma}}
\providecommand{\vG}{\bm{\mathrm{G}}}
\providecommand{\vQ}{\bm{\mathrm{Q}}}
\providecommand{\vR}{\bm{\mathrm{R}}}

\providecommand{\one}{\mathbf{1}}
\providecommand{\psin}{\psi_n}
\providecommand{\psip}{\psi_p}

\providecommand{\sH}{\sigma_H^2}
\providecommand{\sHsquare}{\sigma_H^4}
\providecommand{\vH}{\bm{\mathrm{H}}}
\providecommand{\vL}{\bm{\mathrm{L}}}
\providecommand{\vM}{\bm{\mathrm{M}}}
\providecommand{\vK}{\bm{\mathrm{K}}}
\providecommand{\vC}{\bm{\mathrm{C}}}
\providecommand{\vD}{\bm{\mathrm{D}}}
\providecommand{\vTheta}{\bm{\Theta}}
\providecommand{\vPsi}{\bm{\Psi}}

\providecommand{\Linfty}{\mathcal{L}^{\mathrm{emp}}_{\mathrm{test}}}
\providecommand{\Ltrain}{\mathcal{L}^{n,m}_{\mathrm{train}}}
\providecommand{\Ltest}{\mathcal{L}_{\mathrm{test}}}

\definecolor{schematicBlue}{HTML}{0000FF}
\definecolor{schematicGreen}{HTML}{008000}
\definecolor{schematicRed}{HTML}{FF0000}

\providecommand{\Tr}{\operatorname{Tr}}
\providecommand{\dd}{\mathrm{d}}

\providecommand{\MM}[1]{{\color{blue}{MM: #1}}}
\makeatletter
\renewcommand{\TODO}[1]{}
\renewcommand{\RU}[1]{}
\renewcommand{\TB}[1]{}
\renewcommand{\MM}[1]{}

\makeatother

\title{Double Descent and Malign Overfitting in Diffusion Models}

\author[1]{Rapha\"el Urfin$^{\dagger}$%
\thanks{Corresponding author: \href{mailto:raphael.urfin@phys.ens.fr}{\texttt{raphael.urfin@phys.ens.fr}}}%
}
\author[2]{Tony Bonnaire$^{\dagger}$}
\author[1]{Giulio Biroli}
\author[3]{Marc M\'ezard}
\affil[1]{Laboratoire de Physique de l'\'Ecole normale sup\'erieure, ENS, Universit\'e PSL, CNRS, Sorbonne Universit\'e, Universit\'e Paris Cit\'e, F-75005 Paris, France}
\affil[2]{Universit\'e Paris-Saclay, CNRS, Institut d'Astrophysique Spatiale, 91405 Orsay, France}
\affil[3]{Department of Computing Sciences, Bocconi University, Milano, Italy}
\date{\vspace{-7ex}}

\begin{document}

\pagenumbering{arabic}
\maketitle
\def\thefootnote{$\dagger$}\footnotetext{Equal contribution.}\def\thefootnote{\arabic{footnote}}
\selectlanguage{american}

\begin{abstract}
Conventional wisdom in deep learning holds that overparameterization---having more parameters $p$ than training samples $n$---is benign: larger models generalize better and, even without regularization, interpolating models generalize well, the test error following a double-descent curve.
One might expect the same benign overfitting for diffusion models, whose training reduces to regression, i.e. to minimizing a quadratic denoising score-matching loss. Yet the opposite is observed: overfitting here is catastrophic, driving the model into a memorization regime. We resolve this paradox by combining experiments on U-Nets trained on CelebA with a random-features model for which we derive closed-form learning curves.
We show that with a fixed number $m$ of noise realizations per training sample, an interpolation peak does occur, but at $p\sim nm$ rather than at $p\sim n$ as in standard regression. The rise of the test loss, however, sets in much earlier, at $p\sim n$, independently of $m$. This overfitting is \emph{malign} because, although the implicit regularization of training is fully at work, it drives the model toward the empirical score, which memorizes the training set, rather than toward the true score. A bias--variance decomposition pinpoints the mechanism: the bias of the score estimator starts to grow at $p\sim n$; past the peak the variance decays, as in regression, whereas the bias keeps growing and both saturate at a large value. Since diffusion models are trained with $m\gg1$, the peak is pushed to very large model sizes, and therefore sit on the rising branch that precedes it, where malign overfitting is already in play. Nevertheless, overparameterization remains beneficial when paired with regularization: in the random-features theory and in U-Net experiments, optimally regularized large models---via a ridge penalty or early stopping, respectively---outperform unregularized models of any size.
\end{abstract}

\textit{\small\textbf{Keywords: }%
 {Diffusion Models} $|$ {Double Descent} $|$ {Benign Overfitting} $|$ {Memorization} }%
\vspace{1cm}

\vspace{-5ex}

\input{Intro}

\input{double_descent_in_diffusion_models}

\input{Discussion}

\input{Tau}

\input{practice}

\input{Conclusion}

\section*{Acknowledgments}
GB acknowledges support from the French government under the management of ANR: PEPR-IA (project MAGICALL ANR-25-PEIA-0004) and PR[AI]RIE-PSAI (ANR-23-IACL-
0008). This work was performed using HPC resources from GENCI-IDRIS (Grants 2026-AD011016319R1 \& 2026-A0201016159).

\bibliography{Bibliography_arxiv}

\newpage

\begin{center}
    {\LARGE Double Descent and Malign Overfitting in Diffusion Models \\ \vspace{1ex} {\Large \bf Appendix}
    \\ \vspace{3ex} }{\large Rapha\"el Urfin, Tony Bonnaire, Giulio Biroli, Marc M\'ezard}
\end{center}

\appendix

\input{Appendix}

\end{document}

%% file: math_commands_no_journal.tex
\usepackage{amsmath,amsfonts,bm}

\def\eqref#1{equation~\ref{#1}}

\def\1{\bm{1}}

\def\vmu{{\bm{\mu}}}
\def\vtheta{{\bm{\theta}}}

\def\vb{{\bm{b}}}

\def\vg{{\bm{g}}}
\def\vh{{\bm{h}}}

\def\vs{{\bm{s}}}

\def\vv{{\bm{v}}}

\def\vx{{\bm{\mathrm{x}}}}  
\def\vy{{\bm{y}}}

\DeclareMathAlphabet{\mathsfit}{\encodingdefault}{\sfdefault}{m}{sl}
\SetMathAlphabet{\mathsfit}{bold}{\encodingdefault}{\sfdefault}{bx}{n}

\newcommand{\E}{\mathbb{E}}

\DeclareMathOperator{\Tr}{Tr}

%% file: Intro.tex
\section{Introduction}

One of the most surprising discoveries in modern machine learning is that overparameterization is not necessarily harmful, and can in fact be beneficial in classification and regression problems. Contrary to the traditional wisdom that one should ``never fit noisy training data exactly'', heavily overparameterized models in modern machine learning often interpolate the training data while achieving excellent test performance \citep{lawrence1997lessons,neyshabur2015search,zhang2017understanding}. 
The most striking manifestation of this phenomenon is the {\it double descent} curve of the test error as a function of model capacity---specifically, the number of learnable parameters in the neural network \citep{Belkin_2019,geiger2020scaling}. Rather than exhibiting the classical U-shaped curve associated to bias--variance tradeoff \citep{wasserman2004all}, the test error first follows this U-shape, then peaks at the \emph{interpolation threshold}---the point where the model can exactly fit the training data and the variance of the model is large---and then descends again, often reaching a global minimum in the heavily overparameterized regime. This phenomenon, and in particular this last descent, called {\it benign overfitting}, has proved remarkably ubiquitous across different model classes, underscoring the benefits of overparameterization. It was subsequently demonstrated at scale in deep networks by \citet{Nakkiran_2021} and analyzed theoretically in multiple studies \citep{Barlett_benign,hastie2022surprises,mei2020,Ascoli_2020}. \looseness=-1

Our work revisits this central phenomenon for diffusion and score-based models, \citep{sohl-dickstein_15,ho2020,song2019,song2021b} which have become the state of the art for generating images \citep{ esser2024scalingrectifiedflowtransformers}, sounds \citep{kong2021diffwave}, and videos \citep{yang2025cogvideox}. Diffusion models are trained by progressively noising the data and teaching the model to reverse this process.
In the language of nonequilibrium physics, they learn to time-reverse an Ornstein--Uhlenbeck (OU) process. The key ingredient is learning the \emph{score function}, which is the force field that enables time-reversal and denoising \citep{hyvarinen_05,Vincent_2011}. This reduces to {\it a sequence of regression problems}: at each noise level, one minimizes a quadratic loss between the neural network's predicted score and the ground truth.
As a result, in light of the modern machine learning findings discussed above, one might expect benign overfitting to apply here as well. Yet numerical experiments reveal a strikingly different story: increasing model capacity leads to memorization and poor generalization \citep{yoon2023diffusion, gu2023memorization, kadkhodaie_2024}. Overfitting proves highly detrimental (``malign''), as it compromises generative quality by causing the model to reproduce training data rather than capture the underlying distribution. \looseness=-1

This leads to a puzzle: diffusion models give rise to regression problems, and therefore one might expect benign overfitting. Yet experiments show the opposite.
The aim of our work is to resolve this paradox by revisiting the double descent phenomenon for diffusion models. This is relevant both for the theory of generative models and for applications. In particular, it allows us to clarify a fundamental question about generative models: {Does overparameterization help or hurt diffusion models?} \looseness=-1

\begin{figure}
    \centering
    \includegraphics[width=0.75\linewidth]{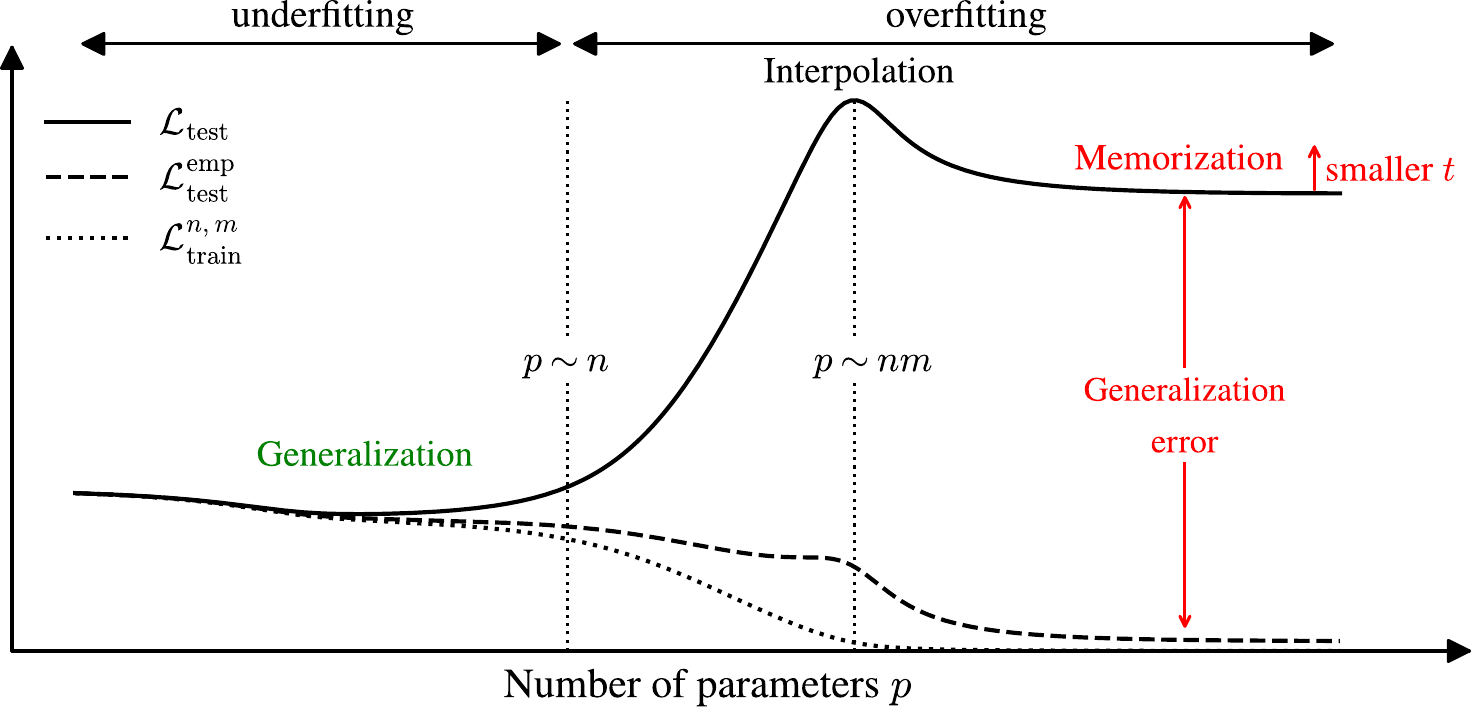}
    \caption{
    \textbf{Qualitative summary of our results.}
  Sketch of the standard test loss $\Ltest$ (solid), the empirical test loss $\Linfty$ (dashed), and the training loss $\Ltrain$ (dotted) versus the number of parameters $p$, for a diffusion model trained on $n$ samples with $m>1$ noise realizations each. $\Ltest$ reaches its minimum before $p\sim n$, where the model can \textcolor{schematicGreen}{generalize}, then rises to a peak at the \textit{interpolation threshold} $p\sim nm$, where $\Ltrain$ reaches $0$ (shifted from $p\sim n$, its value in the regression setting $m=1$). For $p\gg nm$, the test loss decreases a second time (\emph{double descent}) but settles on a plateau that lies above its earlier minimum, whereas $\Linfty$ decreases toward zero, more so the larger $m$: since $\Linfty$ measures the distance to the \emph{empirical score}, which memorizes the training set, the model is approaching the \textcolor{schematicRed}{memorization} regime. The generalization error is larger, the smaller is $t$ (the amount of noise). 
  In contrast to the regression setting $m=1$, where overfitting is \textcolor{schematicGreen}{benign}, here it is \textcolor{schematicRed}{malign}. \looseness=-1
}

    \label{fig:Sketch_results}
\end{figure}

\paragraph*{Generative Diffusion and Its Different Losses.}
Standard diffusion models transport a target distribution $P_0$ on $\mathbb{R}^d$ to Gaussian white noise $\mathcal{N}(0,\vI_d)$ via an OU \emph{forward process} $\dd\vx = -\vx\,\dd t + \sqrt{2}\,\dd\vW(t)$, where $\vW(t)$ is a standard Wiener process.
Exact time-reversal of this OU process can be implemented \citep{Anderson_1982, haussmann_1986} by using a guiding force field which is
the exact score function $\nabla_\vx\log P_t(\vx)$, where $P_t$ is the marginal density at time $t$.
 Following \citet{hyvarinen_05} and \citet{Vincent_2011}, generation is performed by using a parametrized score function $\vs(\vx,t)$. Given a database $\mathcal D=\{\vx^\nu\}_{\nu=1,\dots,n}$ consisting of $n$ i.i.d.\ samples of $P_0$, one learns this score function
 by minimizing the Denoising Score Matching (DSM) loss. At fixed $t$\footnote{The loss at fixed 
$t$, rather than averaged over $t$, reveals how the noise level controls malign overfitting.}, this DSM loss reads \looseness=-1
\begin{align}
      \Linfty(\vs) &= \frac{1}{nd}\sum_{\nu=1}^n\mathbb{E}_{\vxi}\!\left[\left\lVert\sqrt{\Delta_t}\,\vs(e^{-t}\vx^\nu+\sqrt{\Delta_t}\,\vxi)+\vxi\right\rVert^2\right],\label{eq:m_infty_loss}
\end{align}
where $\Delta_t=1-e^{-2t}$ and the expectation is over $\vxi\sim\mathcal{N}(0,\vI_d)$. In practice, the average over $\vxi$ is not performed exactly: the training is done on batches of the $n$ samples, drawing a fresh noise $\vxi$ each time a sample is visited. 
Each sample is thus seen with $m$ different noise realizations, $m$ being the number of epochs (typically\footnote{With fresh noise at each step, $m=N_{\rm steps}B/n$ with $N_{\rm steps}$ gradient steps and batch size $B$; this gives $m\approx2\cdot10^3$ for DDPM on CIFAR-10 and EDM on ImageNet-64 \citep{ho2020,karras2022elucidatingdesignspacediffusionbased}. \looseness=-1}$10^3$). To analyze the role of this finite noise sampling within the classical double-descent framework, which is formulated for the minimizer of an empirical loss \citep{Belkin_2019,geiger2020scaling}, we fix the noises and consider the loss\looseness=-1
\begin{align}\label{eq:train_loss}
  \Ltrain(\vs) = \frac{1}{nmd}\sum_{\nu=1}^n\sum_{\mu=1}^m
    \left\lVert\sqrt{\Delta_t}\,\vs(e^{-t}\vx^\nu+\sqrt{\Delta_t}\,\vxi^{\nu\mu})+\vxi^{\nu\mu}\right\rVert^2.
\end{align}
Fixing the noises is an idealization of the actual procedure, but it retains
its essential feature: only $nm$ pairs $(\vx^\nu,\vxi^{\nu\mu})$ are
presented to the network ($\Ltrain\to\Linfty$ for $m\to \infty$).
It is also the standard setting of theoretical analyses \citep{cui_2024, george_2025}.
While this resembles a regression problem, it is a non-standard one: the score $\vs$ is evaluated on points $\vy^{\nu,\mu}=e^{-t}\vx^\nu+\sqrt{\Delta_t}\,\vxi^{\nu\mu}$ that are correlated, forming clusters of $m$ points around each sample $\vx^\nu$.
Finally, generalization is measured by the \emph{standard} test loss \looseness=-1
\begin{align}
  \Ltest(\vs) &= \frac{1}{d}\,\mathbb{E}_{\vx,\vxi}\!\left[\left\lVert\sqrt{\Delta_t}\,\vs(e^{-t}\vx+\sqrt{\Delta_t}\,\vxi)+\vxi\right\rVert^2\right].\label{eq:test_loss}
\end{align}
where the expectation is taken over both fresh noises and fresh samples. 
We refer to $\Linfty(\vs)$ as the \emph{empirical} test loss: it is a test loss since the noise is averaged exactly, but also an empirical one because it 
depends on the $n$ training samples. In practice it is readily evaluated by averaging over fresh noises.
To identify where the test loss comes from, we analyze the bias and variance
contributions, $\mathcal{B}^2$ and $\mathcal{V}$, to
$\Ltest = C_t + \Delta_t\left(\mathcal{B}^2+\mathcal{V}\right)$, with \looseness=-1
\begin{align}
\label{eq:bias_variance}
    \mathcal{B}^2=\frac{1}{d}\,\mathbb{E}_{\vy}\!\left[\lVert \nabla_\vy\log P_t(\vy)-\langle \vs_{\mathcal{D},\vTheta}(\vy)\rangle\rVert^2\right], \quad
    \mathcal{V}=\frac{1}{d}\,\mathbb{E}_{\vy}\!\left[\left\langle \lVert \vs_{\mathcal{D},\vTheta}(\vy)-\langle \vs_{\mathcal{D},\vTheta}(\vy)\rangle\rVert^2\right\rangle\right].
\end{align}
Here $\vs_{\mathcal{D},\vTheta}$ denotes the minimizer of the train loss $\Ltrain$ for a given dataset $\mathcal{D}$ and initialization of the parameters $\vTheta$; $C_t$ a term 
which depends only on the data distribution and is
independent of the learned score (see Appendix \ref{app:bias_variance_general_case});
$\mathbb{E}_{\vy}$ is the average over test samples $\vy\sim P_t$, whereas $\langle\cdot\rangle$ denotes the average over the realizations of the training set, and of other sources of randomness such as $\vTheta$. \looseness=-1
As we will show, distinguishing between $\Linfty$ and $\Ltest$ is key to resolving the paradox: benign behavior in the former coexists with malign overfitting in the latter. \looseness=-1

\paragraph*{Contributions and theoretical picture.}
Figure~\ref{fig:Sketch_results} summarizes the behavior of the different losses, as obtained from both the theoretical analysis and our numerical experiments, each evaluated from the score learned by training on $n$ samples independently noised $m$ times. Importantly, we consider solutions obtained after very large training times $\tau$, representative of the $\tau \to \infty$ limit, and without regularization.
The effects of finite $\tau$ and of regularization are discussed later.
The first important observation is that the test loss $\Ltest$ does exhibit a double descent as the number of parameters increases: it first decreases toward a minimum, rises to a peak, then descends again. The peak occurs at the interpolation threshold, where the training loss reaches zero and the test loss is dominated by the variance of the estimator. Compared to the usual double descent, however, the peak is shifted to much larger model sizes: its position now grows with both $n$ and $m$ (for simple models, proportionally to $nm$). More importantly, while the test loss descends after the peak, it plateaus at a value larger than its first minimum; the smaller $t$, the
higher this plateau is.
Whereas in the supervised setting both variance and bias decrease after the interpolation peak \citep{neal2019a,Ascoli_2020}, here the variance decays and the bias grows. Both saturate at large values, producing the high plateau observed at small $t$.
Here, overfitting is clearly detrimental.
The key to understanding this phenomenon lies in the empirical test loss $\Linfty$, which, up to a constant, measures the mean square distance to the \emph{empirical score}, the score associated with the mixture of Gaussians centered on the points $e^{-t} \vx^\nu$ with variance $\Delta_t$
(see Appendix~\ref{app:loss_decomposition}). Its global minimizer is the empirical score itself \citep{Biroli_2024, li_2024_good_score}, which memorizes the training set unless $n$
grows exponentially with $d$ \citep{Biroli_2024}. 
Numerically, $\Linfty$ first follows the training loss closely. It then displays a peak at
small $m$, which fades as $m$ grows, and beyond the interpolation threshold it
reaches very small values.
This is the key to solving the paradox: the
benign-overfitting mechanism---whereby the training dynamics selects, among all
interpolating solutions, those with the smallest error---is fully operative, but
it acts on $\Linfty$ and not on the loss we actually care about, $\Ltest$.
The implicit regularization of training drives the model toward the minimizer of
the \emph{empirical} test loss, i.e. the empirical score which memorizes the training set, rather than toward the exact score: hence the name \emph{malign} overfitting\footnote{Although this loss vanishes only for $m,p\to\infty$, we find that a very small value is obtained already for moderate $m$, thus implying a very small distance to the empirical score and hence partial memorization.
}. This also explains the growth of the bias in the overparameterized regime: it does not reflect a limitation of
the model class, but the minimization of the ``wrong'' loss. None of this implies that overparameterization hurts diffusion models. Overparameterization is still beneficial provided it is paired with a suitable regularization: with a ridge penalty (in the theory) or early stopping (in the experiments), both the peak and the plateau of Fig.~\ref{fig:Sketch_results} disappear, and the test loss and the FID reach their lowest values in the overparameterized regime ($p\gg n$).  \looseness=-1

The theoretical picture described above rests on two complementary analyses. Analytically, we study Random Feature Neural Networks \citep{Rahimi_2007} at arbitrary finite $m$, in the proportional limit where the dimension $d$, the data $n$ and the number of parameters $p$ all go to infinity with fixed ratios, extending \citet{george_2025}. Numerically, we train DDPM-style U-Nets \citep{ho2020, Ronneberger2015} on the CelebA dataset \citep{CelebA} across a range of network widths $W$, tracking how the losses and the Fréchet-Inception Distance \citep[FID,][]{heusel2017gans} evolve. In the following sections, we present these findings in detail, discussing the role of $m$, the bias and variance of the estimator, and the impact of regularization. \looseness=-1

\paragraph*{Related works.} 
We present here the closest relevant works, and refer to Appendix~\ref{app:discussion_related_works} for a more thorough discussion. Diffusion models are known empirically to overfit and memorize the training data, with state-of-the-art image models reproducing a non-negligible fraction of their training set \citep{Carlini_2023, somepalli_2022, somepalli_2023}. This memorization has been shown to depend heavily on the data distribution, the model, and the training procedure \citep{gu2023memorization, yoon2023diffusion}, but also to be significantly reduced by weight decay and early-stopping \citep{gu2023memorization, baptista_2025, Favero2025_bigger}.
However, it is unavoidable under the empirical score hypothesis \citep{Biroli_2024, achilli2024, ventura2025} unless $n$ grows exponentially with $d$ \citep{Biroli_2024}. Several theoretical analyses account for the network architecture and the high dimensionality of the data, in particular by studying the score learned by a parametric model \citep{cui_2024, cui_2025, merger2026,buchanan2025}. Closest to our work, \citet{george_2025} compute asymptotic training and test DSM losses for a Random Feature model \citep{Rahimi_2007} and link them to memorization for $m=1$ and $m=\infty$. We extend their result to arbitrary finite $m$. \citet{bonnaire2025diffusionmodelsdontmemorize} study the generalization and
memorization timescales of the corresponding $m=\infty$ training dynamics, while, concurrently with our work,
\citet{latourellevigeant2026generalizationmemorizationoverfittingdiffusion}
analyze the lazy regime ($p\to\infty$) training dynamics, together with the
resulting sampling dynamics.
Finally, that diffusion models do not overfit benignly is by now well documented empirically \citep{yoon2023diffusion} and proven in a model-independent way by \citet{farghly2026benign}: overfitting and good generalization cannot coexist unless the sample size grows exponentially with $d$.
What remains open, and what we clarify in this work, is twofold: the mechanism behind this failure, i.e.\ what is specific to diffusion models that switches off the benign overfitting at work in supervised learning, and whether double descent carries over at all. On the latter, the evidence is conflicting: in the RFNN, \citet{george_2025} find an interpolation peak for $m=1$ that vanishes for $m=\infty$, while \citet{marion2026understandingdiffusionmodelsrequires} observe an epoch-wise double descent in distributional metrics such as the FID, but none in the test loss. Working at finite $m$ is the key ingredient to solve this puzzle: it is what connects the benign overfitting of regression ($m=1$) to the malign overfitting observed in practice ($m\gg1$), and allow us to identify the mechanism behind the latter. \looseness=-1

%% file: double_descent_in_diffusion_models.tex
\section{Double Descent in Diffusion Models}
\input{Numerics}
\input{Analytical_results}
\subsection{Main Results}

\begin{figure}
    \centering
    \includegraphics[width=0.361\linewidth]{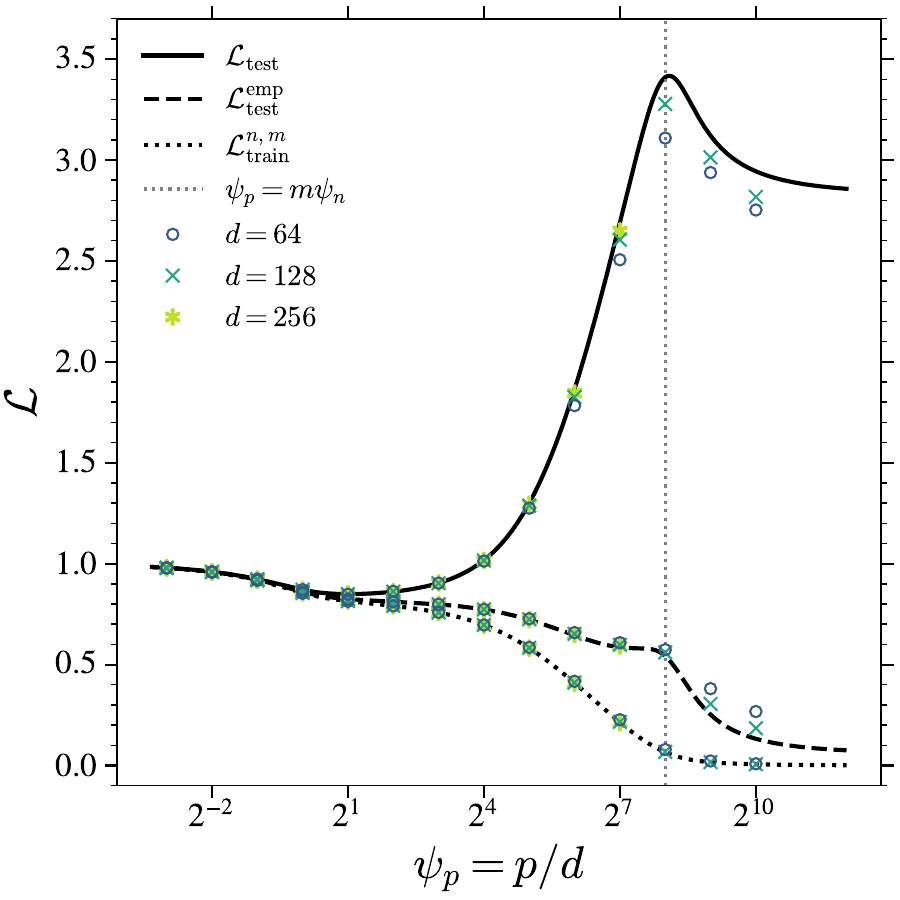}%
    \hspace{0.009\linewidth}%
    \includegraphics[width=0.361\linewidth]{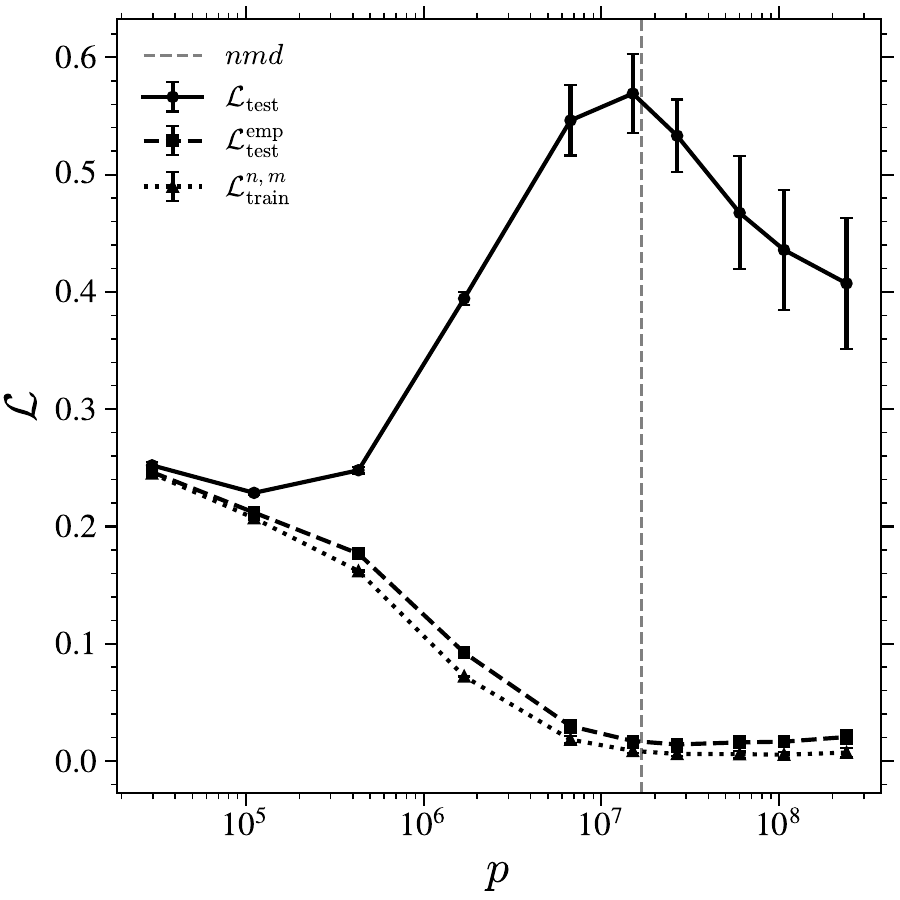}%
    \hspace{0.009\linewidth}%
    \begin{minipage}[b]{0.178\linewidth}
        \centering
        \includegraphics[width=\linewidth]{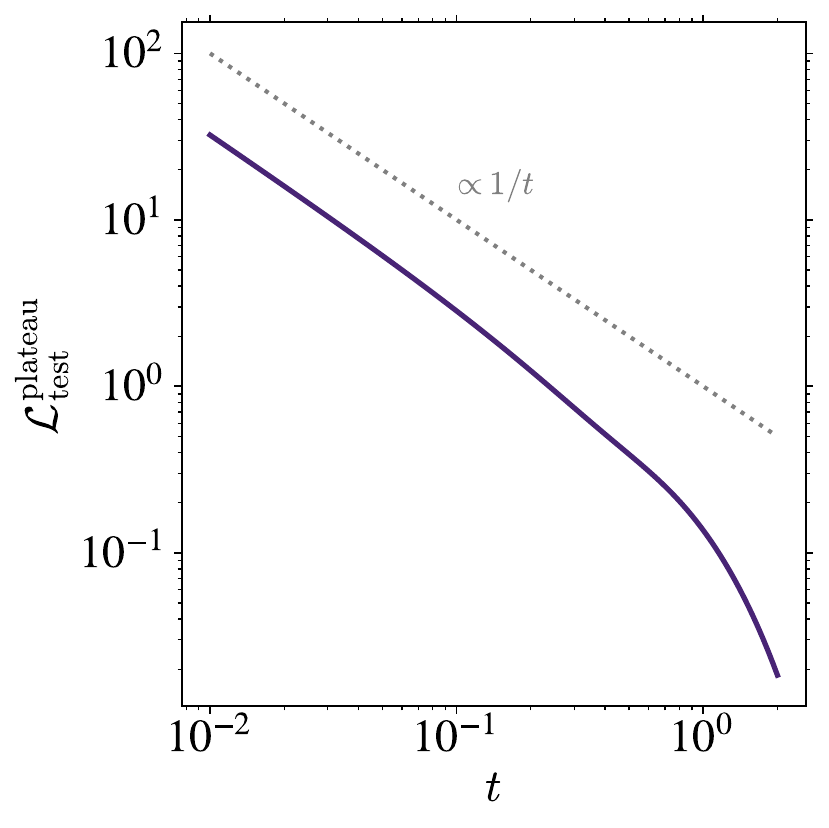}\\[0pt]
        \includegraphics[width=\linewidth]{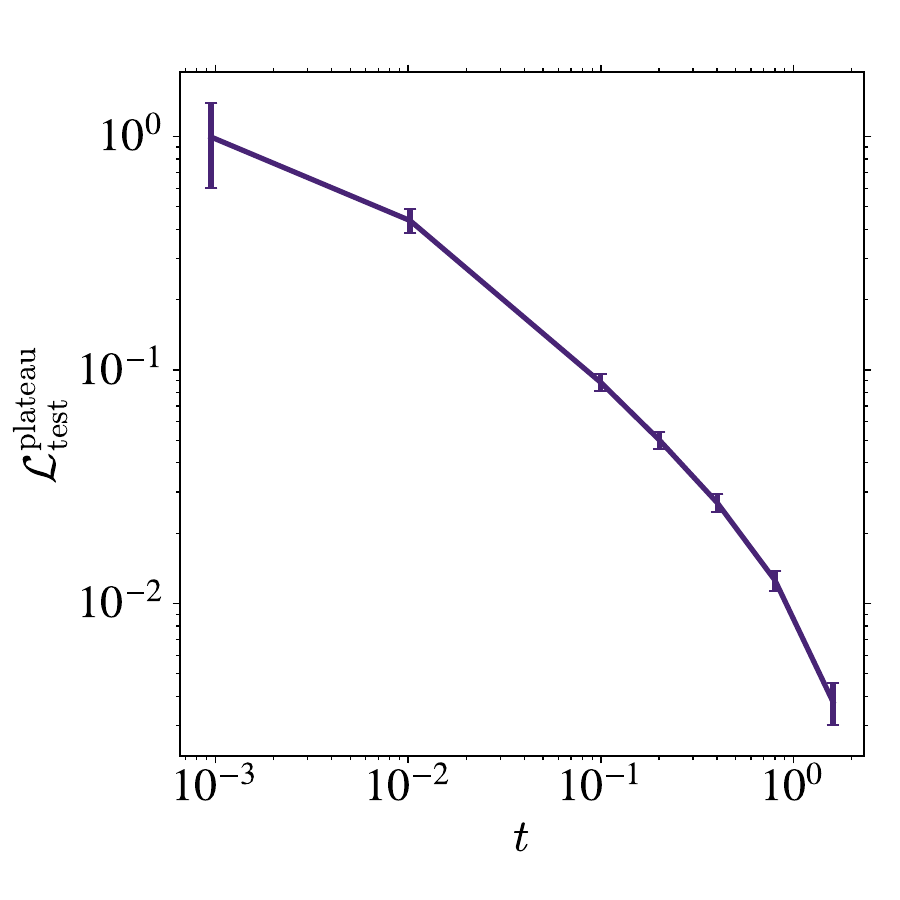}
    \end{minipage}%
    \caption{
\textbf{Double descent in Diffusion Models.} \emph{(Left)} Losses vs. $\psi_p=p/d$ for an RFNN with Gaussian data at $\psi_n=n/d=16$, $m=16$, $t=0.1$, $\lambda=10^{-3}$; black curves are the predictions of Theorem~\ref{thm:main}, markers the RFNN trained with Adam on Eq.~\ref{eq:train_loss} with ridge penalty $\lambda\Delta_t\lVert\vA\rVert_F^2/(dp)$, averaged over 10 runs. \emph{(Middle)} Losses vs. $p$ for U-Nets trained on CelebA at $n=2048$, $m=8$, evaluated at $t\approx0.01$; error bars are $\pm3$SE over 6 initial seeds. \emph{(Right)} Large $p$ plateau value of the test loss vs. $t$ for the RFNN \emph{(top)} and the U-Net \emph{(bottom)}, $W=128$, $p\approx106$M. \looseness=-1
    } 
    \label{fig:RF_UNET_three_losses}
\end{figure}

\paragraph*{Parameter-wise double descent exists in diffusion models.} 
Both the RFNN theory and the U-Net experiments exhibit \emph{double descent}, as
shown in Fig.~\ref{fig:RF_UNET_three_losses}. As a function of the model size $p$, the test loss $\Ltest$ first
decreases to a minimum and then rises to a peak located where the training loss $\Ltrain$ vanishes, i.e.\ at the interpolation threshold, corresponding to the smallest $p$ reaching very small values of the train loss. Past the peak, in the overparameterized regime, $\Ltest$ settles at a
value larger than at its first minimum; the smaller $t$, the higher is this
plateau, as shown in the right panels of
Fig.~\ref{fig:RF_UNET_three_losses}.
The empirical test loss $\Linfty$, by contrast, tracks the training loss and
reaches its minimum in the overparameterized regime. For the RFNN, $\Linfty$
exhibits a double descent at moderate $m$, which fades away as $m$ grows. These results, obtained at fixed $t$, persist for the test loss integrated over
$t$ (Appendix~\ref{app:num:integrated_loss}), which also displays a peak and a
plateau---as expected, since the integral collects all noise levels and the
small-$t$ ones exhibit these features most strongly. \looseness=-1

\paragraph*{Both $n$ and $m$ shift the interpolation peak.} As $p$ increases, the RFNN model exhibits three regimes with distinct scalings, as shown in the three leftmost panels of Fig.~\ref{fig:scaling}: \textit{(i)} for
$p < n$, the test loss decreases and depends on $p$ alone, independently of
$n$ and $m$ (left panel); \textit{(ii)} for $n < p < nm$, $\Ltest$
increases and depends only on the ratio $p/n$ (top central panel); \textit{(iii)} the peak
height, reached at $p = nm$ (lower central panel), and the plateau beyond it are independent of
$n$ (left panel). The threshold $p=nm$ is simply where the $pd$ readout parameters match the $nmd$ scalar conditions imposed by $\Ltrain$, i.e. one $d$-dimensional equation per
noisy point $\vy^{\nu\mu}$.
For the U-Net, the peak shifts to larger $p$ as both $n$ and $m$ grow, as displayed in the right panel of Fig.~\ref{fig:scaling}. The RFNN scalings carry over to the U-Net across the $(n,m,p)$ range we explore (see Appendix~\ref{app:num:scaling_nm})---the peak is also approximately located where the number of parameters equals $nmd$---but we do not expect it to hold
in general, in particular due to the strong correlations among the $nmd$ targets and to the weight sharing of the convolutional architecture, which decouples $p$ from the effective degrees of freedom. \looseness=-1

\begin{figure}
    \centering
    \includegraphics[width=0.552\linewidth]{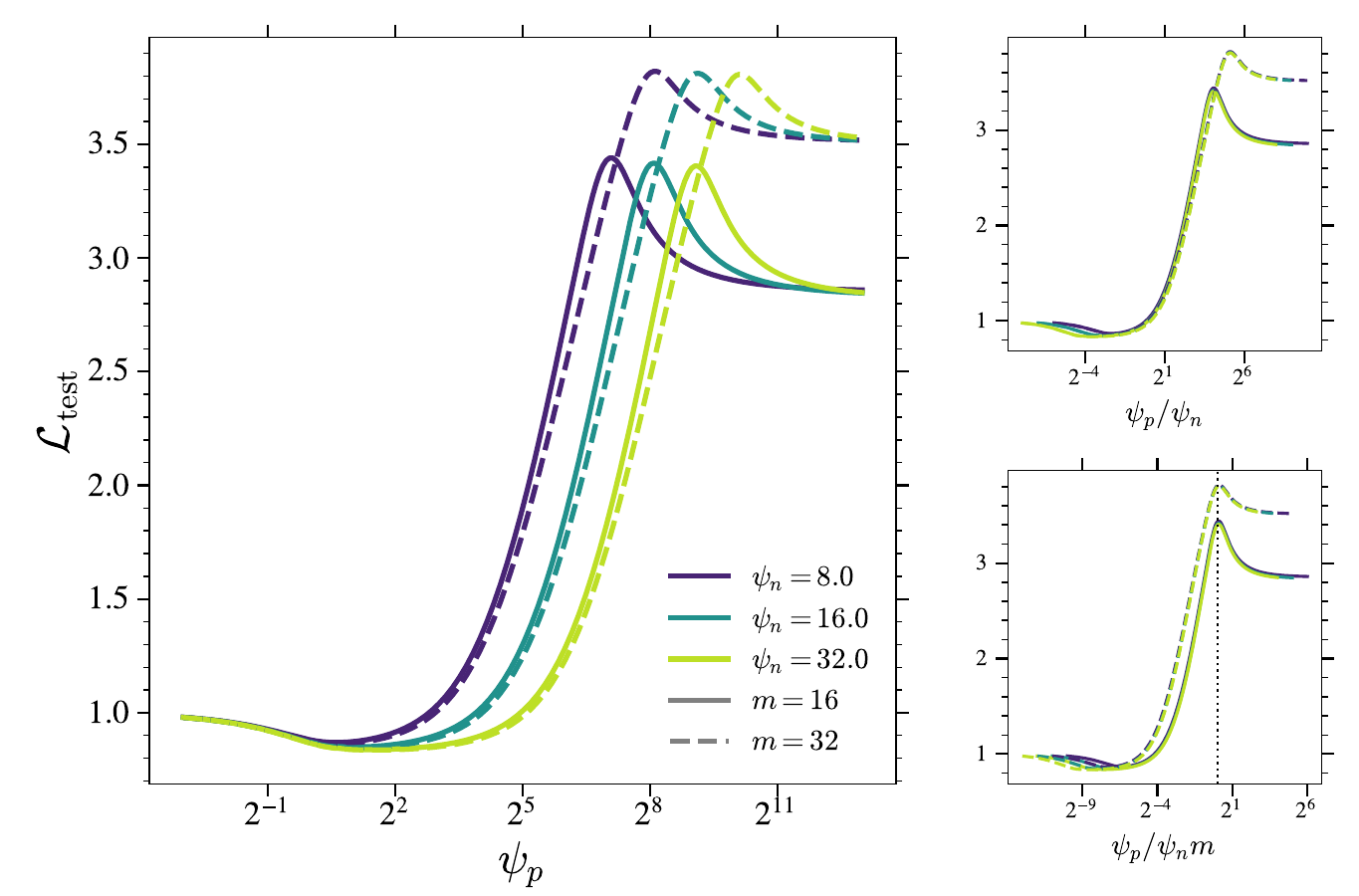}%
    \hspace{0.009\linewidth}%
    \includegraphics[width=0.354\linewidth]{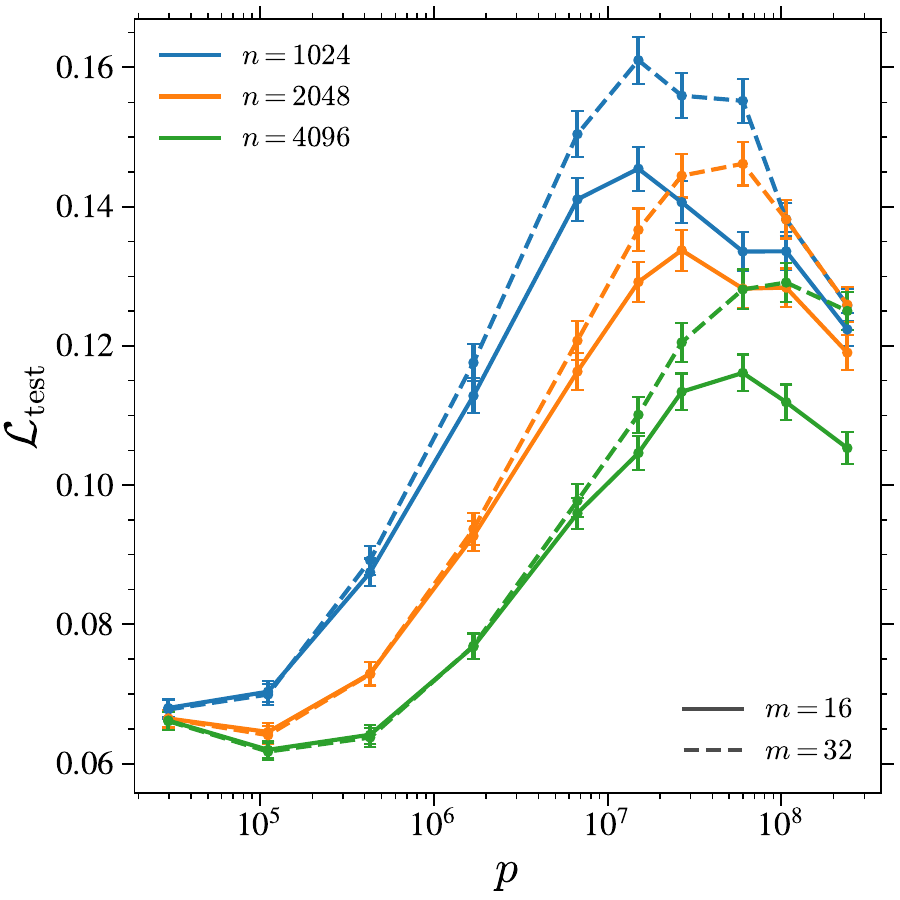}
    \caption{
    \textbf{Effects of $n$ and $m$ on the double descent.} \emph{(Left)} RFNN test loss vs. $\psi_p$ for several $\psi_n$ and $m$, at $t=0.1$, $\lambda=10^{-3}$, $\sigma=\tanh$. \emph{(Middle)} Same, with the $x$-axis rescaled by $\psi_n$ (top) and $\psi_n m$ (bottom). \emph{(Right)} U-Net test loss vs. $p$ for several $m$ at $t\approx0.1$; error bars are $\pm3$SE of one seed over test sets. \looseness=-1
    }
    \label{fig:scaling}
\end{figure}

\begin{figure}
    \centering
    
    \includegraphics[width=0.45\linewidth]{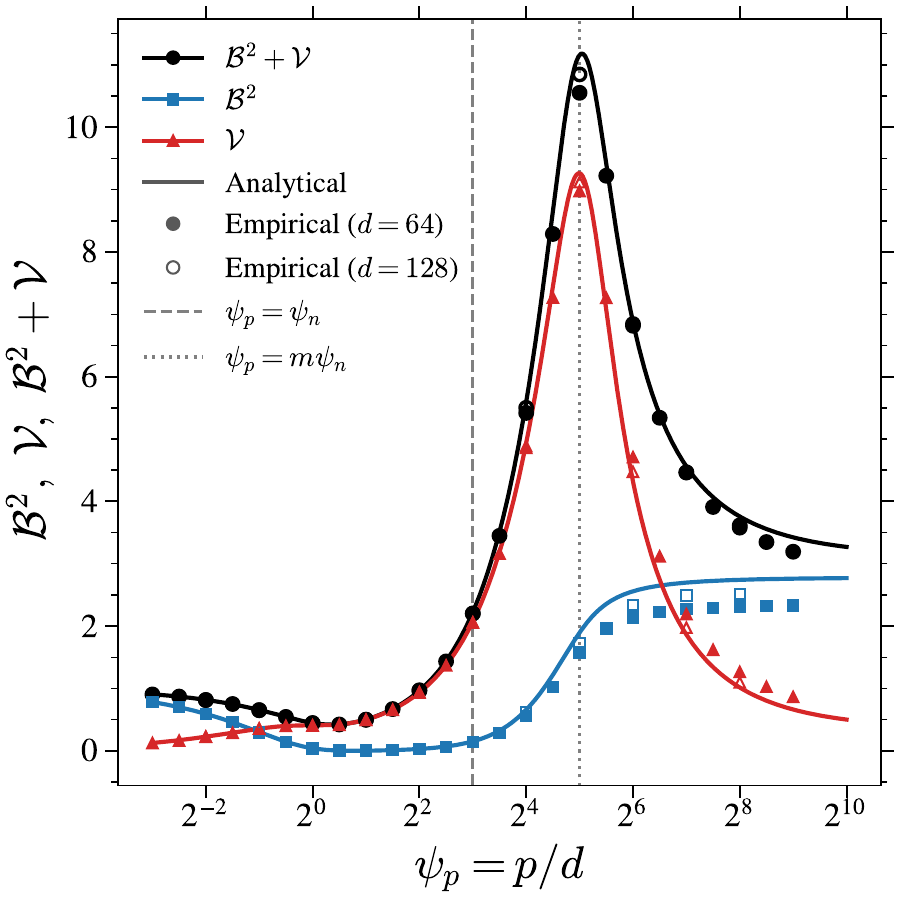
    }
    \includegraphics[width=0.45\linewidth]{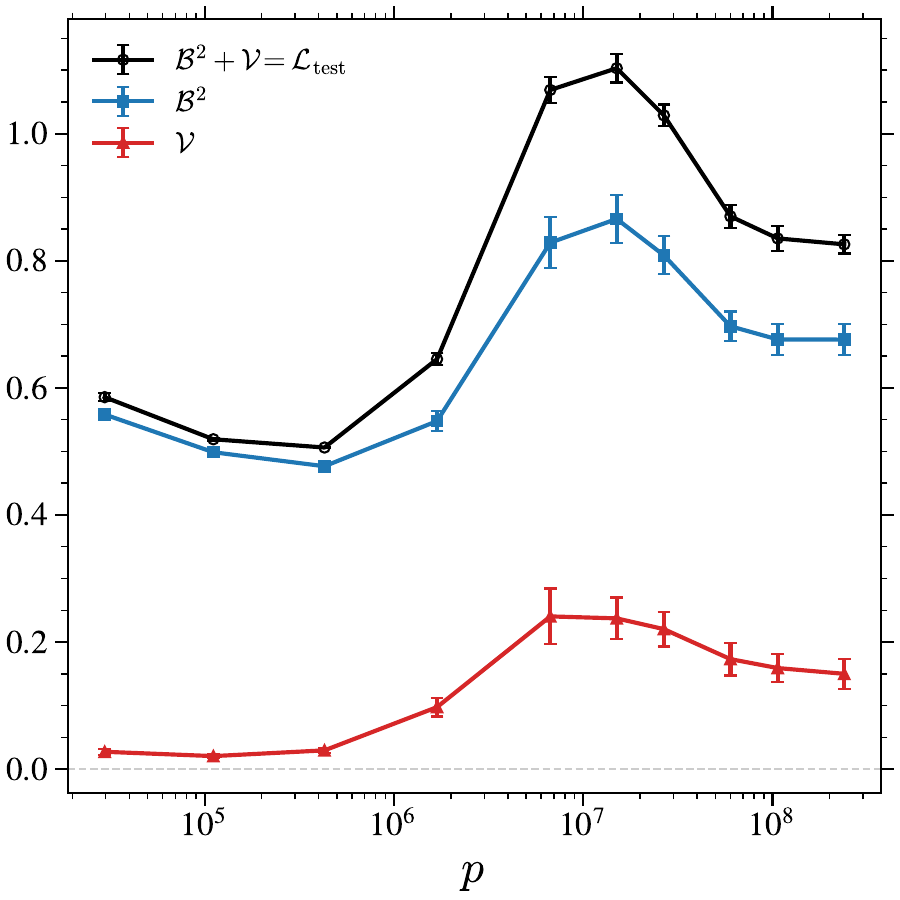}
    \caption{\textbf{Bias and variance of the score estimator.} \emph{(Left)} Bias and variance for the RFNN model vs. $\psi_p$ for $\psi_n=8, t=0.1, m=4, \lambda=10^{-3},\sigma=\tanh$ and several values of $d$. Solid curves are the analytical predictions from Theorem~\ref{thm:main}; markers are numerical estimates from 10 independent runs. \emph{(Right)}
     Bias and variance for U-Net models as a function of $p$ ($n=2048$, $m=4$, evaluated at $t\approx0.01$). Error bars correspond to $\pm3$SE over 10 models trained on disjoint datasets. \looseness=-1}
    \label{fig:bias_variance}
\end{figure}

\paragraph*{Malign overfitting and the growth of the bias.}
A finer picture emerges from the bias--variance decomposition of the test loss from Eq.~\ref{eq:bias_variance}, reported in Fig.~\ref{fig:bias_variance} for the RFNN (left) and the U-Net (right). For small models the test loss is pure bias; as $p$ grows the bias decreases while the variance builds up which sets the first minimum of the test error, as in the classical bias--variance trade-off \citep{wasserman2004all}. Approaching the interpolation threshold $p\sim nm$, both terms contribute: the variance diverges, producing the peak, while the bias---rather than continuing to decrease---starts to grow \emph{before} the peak. Past the peak the two terms decouple: the variance decays, as in the classical supervised setting \citep{neal2019a,Ascoli_2020}, whereas the bias keeps rising and saturates. The same decomposition for the U-Nets (right panel of Fig.~\ref{fig:bias_variance}) reproduces the key feature: the variance decays after the interpolation peak while the bias increases before peaking. At large $p$, both saturate at a strictly positive plateau.\footnote{For the U-Net, the bias dominates the test error at all $p$. This is expected: the exact score being inaccessible, we estimate it from the target noises, which shifts the bias by a $p$-independent constant. See Appendix~\ref{app:num:bias_var_GMM}.} Further supporting experiments on Gaussian mixture data and simpler feedforward network models of the score reproduce this behavior and can be found in Appendix~\ref{app:num:bias_var_GMM}.
In all cases, the residual test error at large $p$ is a combination of comparable contributions from the variance and the bias.
Both are due to memorization and ``benign overfitting'' of the empirical score: variance is high because of dependence on the precise drawing of the training set, and the bias is high because the empirical score is different from the population score. The two bias regimes are of different nature. At small $p$, it is an \emph{approximation error}: the model is too simple to represent the score, so the bias decreases as the capacity grows. At large $p$, the bias is instead a \emph{property of the objective}: the training drives the estimator towards the minimizers of the empirical test loss, which differ from those of the population one, a gap that no amount of capacity can close. \looseness=-1

%% file: Numerics.tex
\subsection{Numerical Method}
\label{sect:numerical}

We train DDPM-style \citep{ho2020} U-Net architectures \citep{Ronneberger2015} to predict the noise on $n$ (from $512$ to $8192$) CelebA \citep{CelebA} grayscale images downsampled to $32\times 32$. Each image $\vx^\nu$ is assigned $m$ fixed noise realizations $\{\vxi^{\nu\mu}\}_{\mu=1,\ldots, m}$, yielding a total of $n\times m$ frozen training pairs. The architecture of the U-Net follows \citet{bonnaire2025diffusionmodelsdontmemorize}, except it has four resolution levels with channel multipliers $1, 2, 4, 4$ relative to a base width $W$ that we vary from $W=2$ ($p\approx29$K parameters) to $192$ ($p\approx240$M parameters). Note that with such an architecture, $p$ scales as $W^2$. 
We discretize diffusion time on a nonuniform grid of $T=1000$ points, specified in Appendix~\ref{app:num:training}. Each model is trained to predict the noise $\vxi^{\nu\mu}$ from interpolated samples $\vx_t^{\nu\mu}
= e^{-t}\vx^\nu + \sqrt{1-e^{-2t}}\,\vxi^{\nu\mu}$ by minimizing the standard DDPM MSE loss $(nmd)^{-1} \sum_{\mu\nu} \lVert \vxi_\vtheta(\vx_t,t)-\vxi^{\nu\mu} \rVert^2$ using the Adam optimizer for $\tau_\mathrm{max}=2$M steps. At each optimization step, a diffusion time is sampled independently for each pair, uniformly over the eligible grid points. \looseness=-1 

%% file: Analytical_results.tex
\subsection{Analytical Method}
\label{sect:Analytical}

\paragraph*{Setting.} We model the score function with a Random Features Neural Network \citep[RFNN,][]{Rahimi_2007}: \looseness=-1
\begin{align}
    \vs_{\vA}(\vx)=\frac{\vA}{\sqrt{p}}\sigma\!\left(\frac{\vW\vx}{\sqrt{d}}\right).\label{eq:score}
\end{align}
An RFNN is a two-layer neural network whose first layer weights ($\vW\in \mathbb{R}^{p\times d}$) are drawn from a Gaussian distribution and remain frozen while the second layer weights ($\vA\in\mathbb{R}^{d\times p}$) are learned during training. The activation function $\sigma$, which is applied element-wise, is assumed to admit a Hermite expansion and to satisfy $\mathbb{E}[\sigma(z)]=0$. Here, we study the empirical risk minimizer of the DSM loss Eq.~\ref{eq:train_loss} with ridge regularization $\frac{\Delta_t\lambda}{pd}\|\vA\|_F^2$, trained on a dataset composed of $n$ training data points $\vx^\nu\sim\mathcal{N}(0,\vI_d)$ each corrupted $m$ times by $\vxi^{\nu\mu}\sim\mathcal{N}(0,\vI_d)$. Most of our results can be extended to any zero-mean sub-Gaussian distribution with extensive trace covariance (see Appendix~\ref{app:subgaussian}).  We work in the high-dimensional setting $p,n,d\gg 1$ with $n/d\to \psi_n$, $p/d\to \psi_p$ and $\psi_n,\psi_p,m=O(1)$. We derive a tight characterization of the asymptotic learning curves of the losses Eq.~\ref{eq:train_loss}, Eq.~\ref{eq:m_infty_loss}, and Eq.~\ref{eq:test_loss}.
This model has already been studied in the context of diffusion \citep{george_2025, bonnaire2025diffusionmodelsdontmemorize} for $m\in\{1,\infty\}$ and recently for linear activation $\sigma(x)=x$ \citep{farghly2026benign}. Note that the number of adjustable parameters of the RFNN model (\ref{eq:score}) is $pd$. \looseness=-1

\paragraph*{Gaussian Equivalence Principle.} To derive asymptotic closed-form expressions for the losses, we build on the \emph{Gaussian Equivalence Principle} \citep[GEP,][]{Pennington_2017, Peche2019,  mei2020,goldt2020modeling, Gerace_2020, goldt_2021, hu2023}, which states that the nonlinear features $\vF=\sigma(\vW\vY/\sqrt{d})$ can be replaced, in the high-dimensional regime, by a linear surrogate matching their first two moments: $\vF^{\mathrm{GEP}} = \mu_1\,\vW\vY/\sqrt{d}+\mu_*\,\vOmega$, with $\mu_1=\mathbb{E}[\sigma(z)z]$, $\mu_*^2=\mathbb{E}[\sigma^2(z)]-\mu_1^2$ and $\vOmega$ a Gaussian tensor independent of $\vW$ and $\vY$ with covariance $\mathbb{E}[\Omega^{\nu\mu}_\alpha\Omega^{\nu'\mu'}_\beta]=\delta_{\alpha\beta}\,\delta^{\nu\nu'}(\delta^{\mu\mu'}+\kappa(1-\delta^{\mu\mu'}))$. The constant $\kappa$ captures the correlation between features of different noise copies of the same sample and reads \looseness=-1
\begin{equation}\label{eq:kappa}
  \kappa=\frac{1}{\mu_*^2}\,\mathbb{E}_{u,v,w}\big[\big(\sigma(e^{-t}u+\sqrt{\Delta_t}\,v)-\mu_1 e^{-t}u\big)\big(\sigma(e^{-t}u+\sqrt{\Delta_t}\,w)-\mu_1 e^{-t}u\big)\big],
\end{equation}
with $u,v,w\sim\mathcal{N}(0,1)$ independent (see Appendix~\ref{app:gep} for more details).\looseness=-1

\paragraph*{Decomposition of the losses, bias and variance.} The GEP allows us to express the losses as rational polynomials of random matrices. Define the resolvent and the two traces \looseness=-1
\begin{align}\label{eq:resolvent}
  \mathcal{G}_{z,\zeta,\epsilon} = \left(\frac{\vF\vF^T}{nm}-z\vI_p-\zeta\frac{\vW\vW^T}{d}+\epsilon\frac{\vH\vH^T}{n}\right)^{-1}, \quad \vH=\mathbb{E}_{\vxi}[\vF],
\end{align}
\begin{align}\label{eq:traces}
  T_1(z,\zeta,\epsilon) = \frac{1}{d(nm)^2}\Tr\!\left(\vxi\vF^T\mathcal{G}_{z,\zeta,\epsilon}\vF\vxi^T\right),\qquad
  T_2 = \frac{1}{dnm}\Tr\!\left(\vxi\vF^T\mathcal{G}_{-\lambda,0,0}\frac{\vW}{\sqrt{d}}\right).
\end{align}
Using the GEP, the three losses and the bias--variance decomposition Eq.~\ref{eq:bias_variance} read
\begin{align}
  \Ltrain &= 1-T_1(-\lambda,0,0),\label{eq:loss_trainm}\\
  \Ltest &= 1-2\mu_1\sqrt{\Delta_t}\,T_2
    +\mu_1^2\,\partial_\zeta T_1\big|_{(-\lambda,0,0)}
    +\mu_*^2\,\partial_z T_1\big|_{(-\lambda,0,0)},\label{eq:loss_test}\\
  \Linfty &= 1-2\mu_1\sqrt{\Delta_t}\,T_2
    -\partial_\epsilon T_1\big|_{(-\lambda,0,0)}
    +\Delta_t\mu_1^2\,\partial_\zeta T_1\big|_{(-\lambda,0,0)}
    +\mu_*^2(1-\kappa)\,\partial_z T_1\big|_{(-\lambda,0,0)},\label{eq:loss_traininf}\\
  \mathcal{B}^2 &= \left(1-\frac{\mu_1 T_2}{\sqrt{\Delta_t}}\right)^{\!2},
  \qquad
  \mathcal{V} = \frac{\mu_1^2\,\partial_\zeta T_1\big|_{(-\lambda,0,0)}
    +\mu_*^2\,\partial_z T_1\big|_{(-\lambda,0,0)}}{\Delta_t}
    -\frac{\mu_1^2 T_2^2}{\Delta_t}.\label{eq:bias_variance_traces}
\end{align}
The traces $T_1$ and $T_2$ can be computed asymptotically, via a pair of auxiliary order parameters $(q,r)$.

\begin{thm}[Asymptotic characterization of the traces]\label{thm:main}
For $(z,\zeta,\epsilon)\in\mathbb{C}^3$, define the Stieltjes transforms \looseness=-1
\begin{equation}\label{eq:qr-def}
  q(z,\zeta,\epsilon)=\frac{1}{p}\Tr\!\left(\mathcal{G}_{z,\zeta,\epsilon}\right),\qquad
  r(z,\zeta,\epsilon)=\frac{1}{p}\Tr\!\left(\frac{\vW^T}{\sqrt{d}}\,\mathcal{G}_{z,\zeta,\epsilon}\,\frac{\vW}{\sqrt{d}}\right).
\end{equation}
Then, as $d\to\infty$: \emph{(i)} $(q,r)$ concentrate and solve a system of algebraic equations (see Appendix~\ref{app:self-consistent}); \looseness=-1 \emph{(ii)} $T_1(z,\zeta,\epsilon)=f_1\big(q(z,\zeta,\epsilon),\,r(z,\zeta,\epsilon),\,\epsilon\big)$ and $T_2=f_2\big(q(-\lambda,0,0),\,r(-\lambda,0,0)\big)$, for explicit functions $f_1,f_2$ (see Appendix~\ref{app:comp-traces}). \looseness=-1 

\end{thm}

The limiting equations in the overparameterized regime $\psi_p\to\infty$ used in Fig.~\ref{fig:RF_UNET_three_losses} (top right panel), and in the infinite-noise regime $m\to\infty$, used in Fig.~\ref{fig:effect_m}, are given in Appendix~\ref{app:psip_infty_limit} and Appendix~\ref{app:m_infty_limit} respectively. In the following figures, when presenting RFNN asymptotic results, we will also show simulations of the RFNN at finite $d$ to illustrate the convergence toward the asymptotic limit.\looseness=-1

%% file: Discussion.tex
\section{Discussion}
\label{Sect:Discussion}

\begin{figure}
    \centering
    \includegraphics[width=0.62\linewidth]{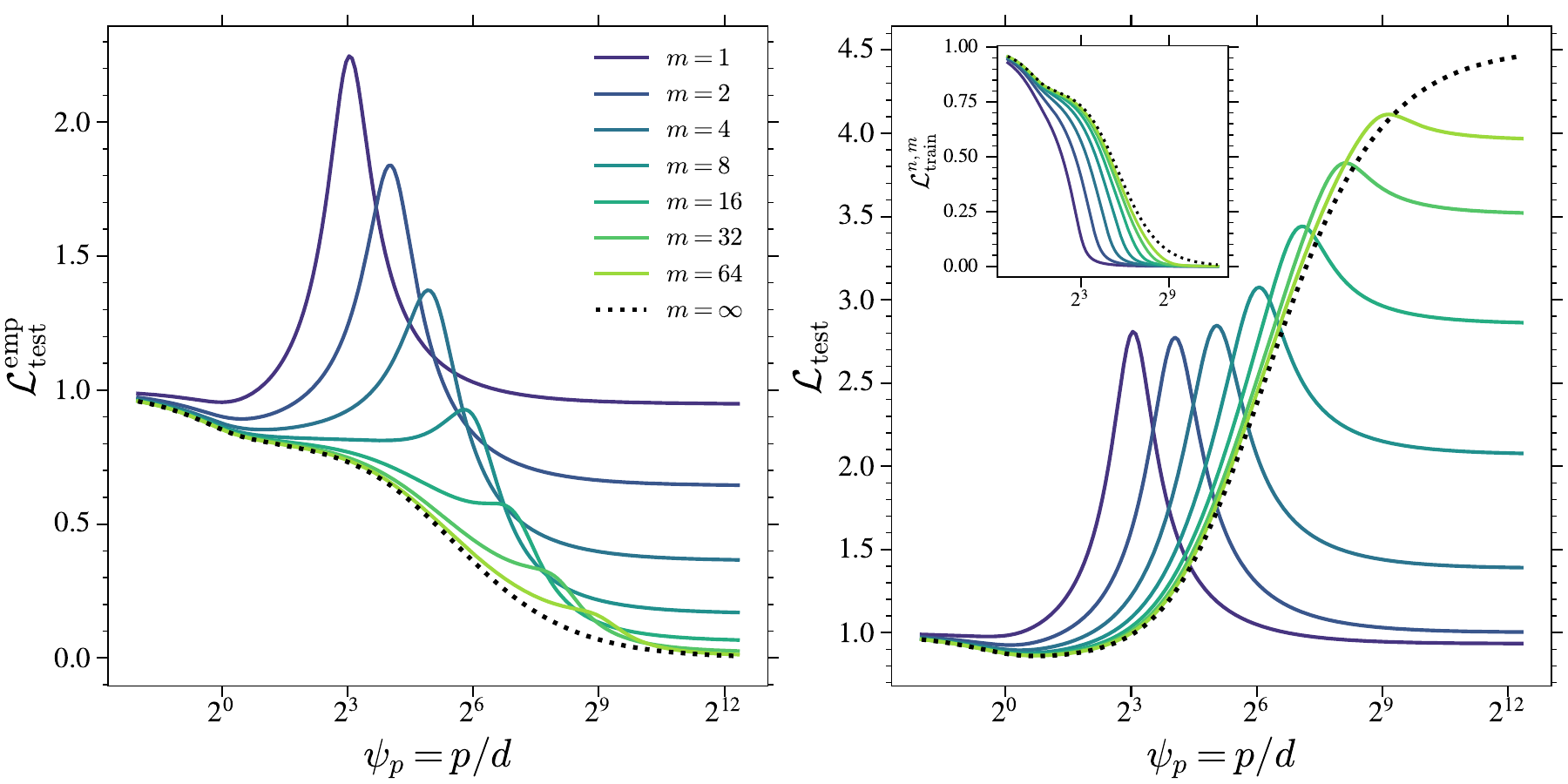}
     \includegraphics[width=.307\linewidth]{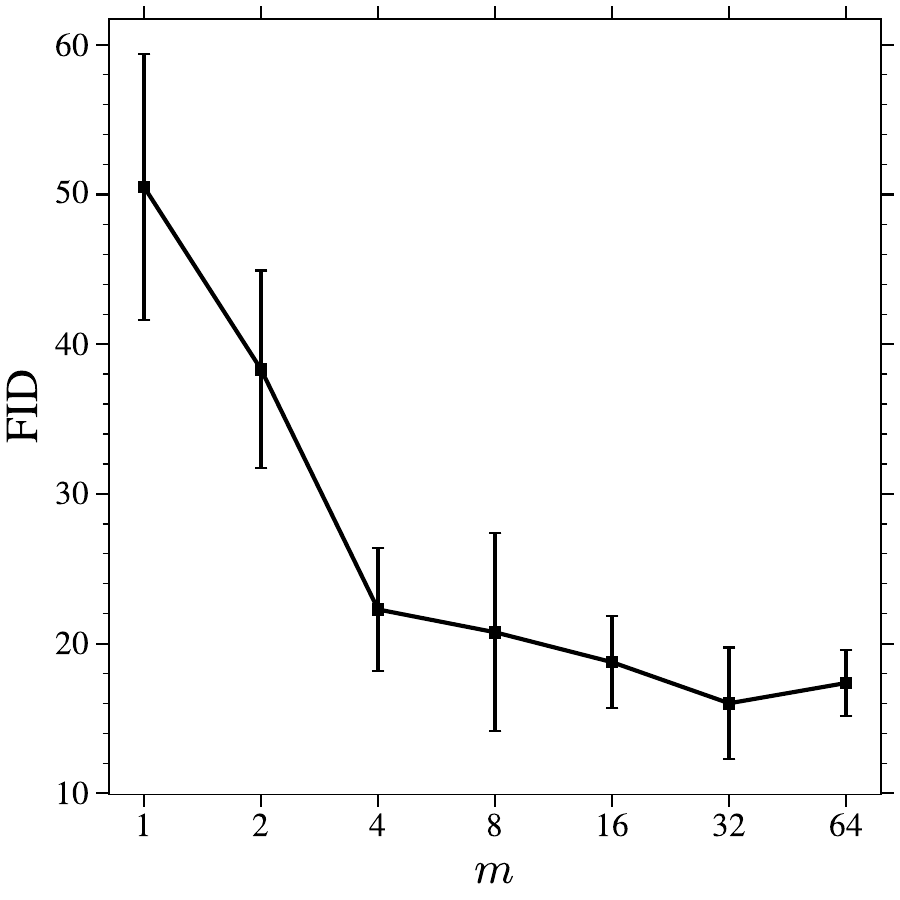}

    \caption{
    \textbf{Effect of $m$ on the double descent.} \emph{(Left)} Empirical test loss $\Linfty$ and \emph{(Middle)} test loss $\Ltest$ vs $\psi_p$ for an RFNN at several $m$, with $\psi_n=n/d=8$, $t=0.1$, $\lambda=10^{-3}$, $\sigma=\tanh$; inset: training loss. \emph{(Right)} FID vs $m$ for early-stopped U-Nets trained on CelebA at $n=16384$, $W=32$; error bars are $\pm3$SE over 5 seeds. \looseness=-1
} 
\label{fig:effect_m}
\end{figure}

\paragraph*{From benign to malign: the role of $m$.}

We now characterize how $m$ controls the interpolation threshold and the nature
of overfitting, focusing on the RFNN setting. Recall that $m=1$ corresponds to
standard regression, where each sample carries a single target, while practice
operates at $m\gg1$ (large number of epochs). For $m=1$, the middle panel of Fig.~\ref{fig:effect_m}
shows an interpolation peak in $\Ltest$ at $\psi_p=\psi_n$;
after the peak, $\Ltest$ decreases below its underparameterized minimum: overfitting is here \emph{benign}. Upon increasing $m$, the interpolation threshold shifts to $\psi_p=m\psi_n$. As $m$ increases, $\Ltest$ decreases for $\psi_p<\psi_n m$ but increases for
$\psi_p>\psi_n m$: overfitting turns from benign to malign. This effect is already visible at $m=2$. In the limit $m\to\infty$, the double descent
disappears altogether, and $\Ltest$ has a single minimum, located in the
underparameterized regime \citep{george_2025}. The left panel shows the evolution of $\Linfty$, which also shows a standard double descent for $m=1$. Increasing $m$ lowers both the peak height and the loss throughout the $\psi_p>m\psi_n$ regime; for large $m$ the peak fades away and $\Linfty$ becomes strictly decreasing, converging to $0$ as $\psi_p\to\infty$. Note that already for moderate values of $m$, $\Linfty$, which measures the distance to the empirical score, reaches very small values for large $p$.
Given the harmful effect of increasing $m$ on the test loss, one might be
tempted to conclude that it is preferable to work at $m=1$. 
In practice, however, one works at $m\gg1$, i.e.\ many epochs, and for good
reason: once regularized, the model performs better at larger $m$, as shown in the right panel of
Fig.~\ref{fig:effect_m}: for early-stopped U-Nets trained on $n=16384$ images with $W=32$, the FID decreases swiftly as $m$ increases, before saturating. The effect of regularization is discussed further below. \looseness=-1

%% file: Tau.tex
\begin{figure}
    \centering
    
    \includegraphics[width=0.45\linewidth]{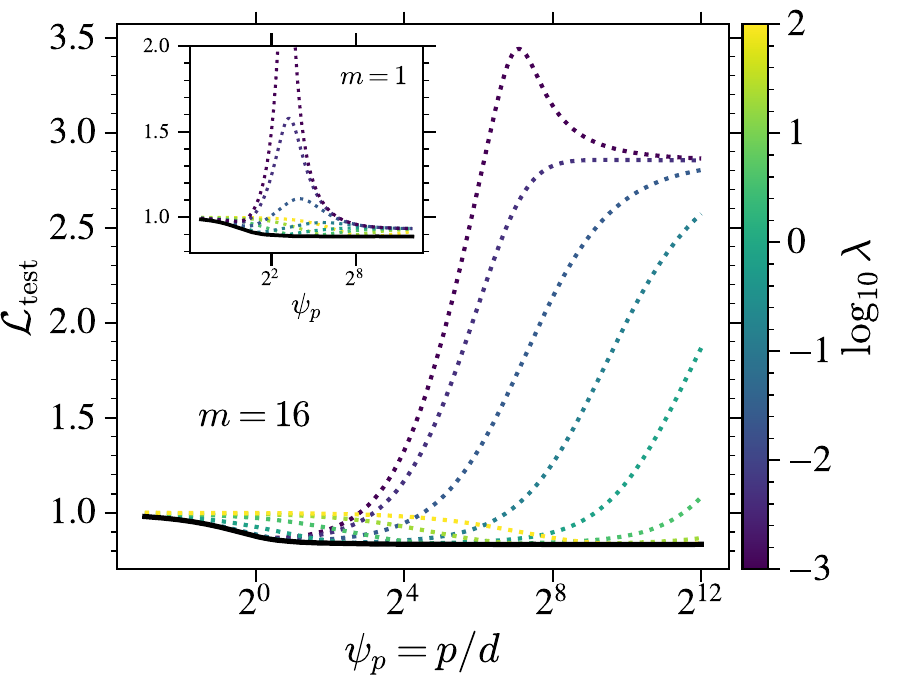}
    \includegraphics[width=.45\linewidth]{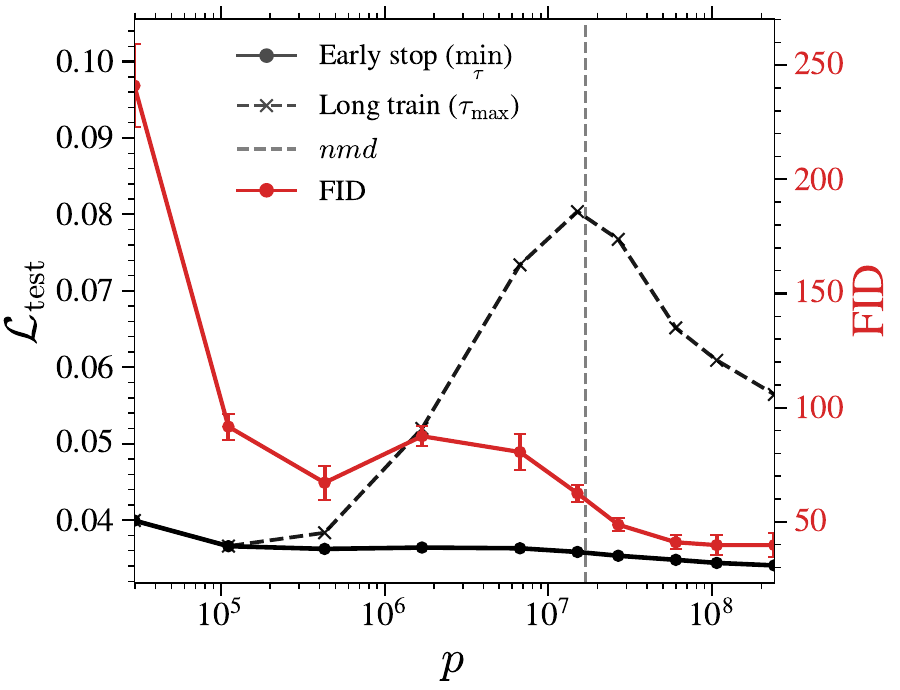}

    \caption{
     \textbf{Benefits of regularization on generalization.} \emph{(Left)} RFNN test loss $\Ltest$ vs. $\psi_p$ for several $\lambda$, at $\psi_n=8$, $t=0.1$, $m=16$, $\sigma=\tanh$ (inset: $m=1$); the black curve is the lower envelope. \emph{(Right)} U-Net integrated test loss vs. $p$ at $n=2048$, $m=8$, early-stopped (solid) and fully trained (dashed), with the FID of the early-stopped models (red, right axis); the vertical dashed line marks the naive interpolation threshold $p=nmd$. \looseness=-1
}
     
    \label{fig:effect_regularization}
\end{figure}

%% file: practice.tex
\paragraph*{Overparameterize, but regularize.} In supervised learning, regularization can drastically alter the double descent picture: optimally tuned ridge regularization is known to flatten the interpolation peak and induce a monotone test error \citep{nakkiran2021optimal, mei2020}. We now show that the same holds in the diffusion setting, and in particular that overparameterizing is beneficial when paired with regularization. The left panel of Fig.~\ref{fig:effect_regularization} shows that, at small regularization (dark blue dotted curve), the RFNN still displays malign overfitting. If instead $\lambda$ is optimized for each $p$, which yields the lower envelope of the family of curves (solid black line), then $\Ltest$ decreases monotonically with $\psi_p$: an optimally regularized large model outperforms any smaller unregularized one. We discuss the effect of $t$ and the optimal regularization in Appendix~\ref{App:analytical_discussion_t}.
For U-Net models trained on CelebA with no explicit regularization, early stopping plays a role analogous to the ridge penalty \citep{ali2019_earlystopping}. \citet{bonnaire2025diffusionmodelsdontmemorize} and \citet{Favero2025_bigger} identify two characteristic timescales: a generalization timescale $\tau_{\mathrm{gen}}=O(1)$ and a memorization timescale $\tau_{\mathrm{mem}}=O(n)$, beyond which the model starts to collapse onto individual training samples; stopping within the window $\tau_{\mathrm{gen}}\ll\tau^\star\ll\tau_{\mathrm{mem}}$ yields high-quality samples while avoiding memorization. Our numerical results complement this picture by showing that the test loss of optimally early-stopped models ($\tau^\star=\arg\min_\tau \Ltest(\tau)$, right panel of Fig.~\ref{fig:effect_regularization}) no longer exhibits the interpolation peak nor the associated double descent. It decreases instead monotonically with the number of parameters. The FID of these early-stopped models does still exhibit a double descent; we conjecture this reflects the test loss being an imperfect proxy for sample quality, and that early-stopping on the FID itself would remove it (see Appendix~\ref{app:num:fid_tau} for a lengthier discussion). In particular, an overparameterized early-stopped model systematically beats any fully-trained underparameterized one: overparameterization remains beneficial once paired with early stopping. \looseness=-1

%% file: Conclusion.tex
\section{Conclusion, Limitations and Future Work}

In this work, we have shown that diffusion models do exhibit \emph{double descent} but with crucial differences from the classical supervised regression setting. 
The interpolation threshold shifts from $p\sim n$ in regression to $p\sim nm$ in diffusion, where $m$ is the number of noise realizations per training sample. The peak vanishes as $m\to\infty$, but the rise of the test loss beyond its minimum, and the bias mechanism behind it, do not. 
Since $m\gg1$ in practice, the peak is pushed to very large model sizes and the overfitting observed in practice, corresponding to the rising branch of a U-shaped curve \citep{marion2026understandingdiffusionmodelsrequires, farghly2026benign}, is the approach to it, where
malign overfitting is already at work.
A bias--variance decomposition traces this malign overfitting back to a persistent bias of the score estimator, which grows past the interpolation peak and saturates at a plateau. Finally, we establish that an optimally regularized overparameterized model outperforms any unregularized one: overparameterization paired with regularization is beneficial despite malign overfitting. \looseness=-1

\paragraph*{Limitations \& future work.} The RFNN and the U-Net differ mainly in the $n$-dependence of the interpolation peak height and in the bias peak seen for the U-Net, both absent from the RFNN. Clarifying the role of feature learning in these effects is an exciting question we leave for future work. Our empirical validation is moreover restricted to a single dataset, moderate number of samples, and models substantially smaller than the state of the art. Establishing our predictions at scale is certainly an important future step. \looseness=-1

%% file: Appendix.tex
\makeatletter
\@ifundefined{proof}{%
  \newenvironment{proof}{\par\noindent\textit{Proof.}\ }{\hfill$\square$\par\medskip}%
}{}
\makeatother

This appendix provides detailed derivations and additional results supporting the main text (MT).  Appendix~\ref{app:discussion_related_works} expands the discussion of closely related works \citep{george_2025, latourellevigeant2026generalizationmemorizationoverfittingdiffusion, marion2026understandingdiffusionmodelsrequires, farghly2026benign}.  Appendix~\ref{app:numerical} provides further details about the numerical experiments carried out in Sect.~\ref{sect:numerical}, as well as additional experiments and discussions.
Appendix~\ref{app:analtical_proofs} gives formal proofs of the main theorems from Sect.~\ref{sect:Analytical}, together with additional results and discussions. We also discuss LLM usage in the conception of this work in Appendix \ref{app:llm_usage}.

\input{Appendix_discussion_related_works}

\input{Appendix_numerical}

\input{Appendix_analytical}

%% file: Appendix_discussion_related_works.tex
\section{Discussion of related works}
\label{app:discussion_related_works}

We discuss here in detail four works close to ours: the precise learning curves of \citet{george_2025}, the lazy regime dynamics of \citet{latourellevigeant2026generalizationmemorizationoverfittingdiffusion}, the empirical study of \citet{marion2026understandingdiffusionmodelsrequires}, and the impossibility result of \citet{farghly2026benign}. The four tackle a similar question along complementary axes. The common thread of our comparison is that all four works operate either at $m=1$ or at $m=\infty$, whereas the phenomenology we report---the interpolation peak at $p\sim nm$, the benign-to-malign transition, and the bias mechanism behind it---require to understand the finite-$m$ interpolation between these two limits.

\paragraph*{Precise learning curves for the RFNN score.} \citet{george_2025} derive asymptotically exact train and test errors using the Gaussian Equivalence Principle and the theory of linear pencils for the same framework as ours: nonlinear RFNN and Gaussian data, studied in the proportional regime of finite $\psi_n = n/d$ and $\psi_p=p/d$. Their central message is a crossover at $p\sim n$ between a regime where the model generalizes and a regime where it approaches the empirical score and memorizes, showing that increasing $m$ both improves generalization for $p\leq n$ and intensifies memorization for $p\geq n$. First, our arbitrary-$m$ solutions reveal the intermediate regime with an interpolation threshold sitting at $p\sim nm$. Second, our dissection of the problem into three losses ($\Ltrain$, $\Linfty$, $\Ltest$) identifies what is actually being overfit and the origin of the malign overfitting.

\paragraph*{Training and generative dynamics in the lazy regime.} \citet{latourellevigeant2026generalizationmemorizationoverfittingdiffusion} study the gradient-flow dynamics of neural networks trained on the score-matching objective in the proportional regime $n\asymp d$, in the lazy regime where the network is linearized around its initialization. The infinite-width limit then replaces the network by a kernel operator $\mathcal{K}$ acting on a vector-valued RKHS,
\begin{align}
    (\mathcal{K}f)(\vx)=\int \dd\vy\; p_t^{\mathrm{emp}}(\vy)\, K(\vx,\vy)\, f(\vy),
\end{align}
with $p_t^{\mathrm{emp}}$ the noised empirical distribution. They derive closed-form equations for the training and test losses along the whole trajectory. Their central message is a separation of training timescales: the model generalizes on $\tau=O(d)$, where $\Ltest$ decreases; it then overfits the empirical score on $\tau=d^{1+\Theta(1)}$, where $\Ltest$ rises again; and it memorizes only on the much later $\tau=d^{\omega(1)}$. Because they also characterize the generated distribution, they can separate these last two stages and conclude that overfitting the empirical score during training does not necessarily entail memorization in the backward dynamics. The differences with our work are twofold. First, they follow the training and generative dynamics, whereas we characterize only the empirical risk minimizer solution. Second, their lazy limit sits at $p\to\infty$ with an infinite number of noise samples $m\to\infty$, whereas our analysis holds at finite $p$ and $m$.

\paragraph*{Empirical study of memorization dynamics.} \citet{marion2026understandingdiffusionmodelsrequires} perform an extensive numerical study of U-Nets trained on CIFAR-10 subsets at various $n$ and $p$ and track their train and test denoising losses. They never observe a parameter-wise peak or double descent in the test loss. Crucially, a fresh noise realization is drawn at every optimization step, so their setting corresponds to our $m\to\infty$ case. Consistently with our findings, there is therefore no peak to see as $p$ increases in this case. What they do observe however is a training-time-wise double descent of the distributional distances with a peak located at the onset of memorization, which is accelerated by the model size. This is consistent with our $\tau$-$W$ discussion reported in Appendix~\ref{app:num:epochs}. We emphasize that the two double descents are distinct phenomena occurring on different axes and with different mechanisms: theirs is dynamical, observed at $m\to\infty$, and only appears in \emph{distribution space}; ours is static, parameter-wise, exists only at finite $m$, and appears in the test loss. Their position that the field should rethink generalization in diffusion models is precisely what our two distinct test losses formalize: the benign overfitting mechanism \emph{does exist}, but it targets the minimizers of $\Linfty$, the empirical score, rather than those of $\Ltest$.

\paragraph*{Impossibility of benign overfitting.} \citet{farghly2026benign} establish, in a model-independent way, that overfitting and
generalization cannot coexist in diffusion models unless the sample size grows exponentially with the data dimension. Their first argument is information-theoretic: as $t\to0$, the relative Fisher information between the empirical and population noised distributions diverges, which is consistent with our picture as it forces the test loss to be large. They also study a linear RFNN and find a U-shaped test loss as $p$ increases. Our analysis refines their conclusions in two ways. First, their setting is again $m\to\infty$: at any finite $m$, double descent \emph{does} occur, with an interpolation peak at $p\sim nm$ that moves to infinite model size as $m\to\infty$, recovering their U-shape. Along this crossover, we show that overfitting is benign only in the regression case $m=1$ and increasingly malign as $m$ grows. The impossibility that they derive is thus the $m\to\infty$ endpoint of a benign-to-malign crossover. 
Second, their finding that time-integrated objective and early-stopping act as implicit regularizers parallels our regularization analysis, to which we add that, once suitably regularized, overparameterization becomes strictly beneficial. Interestingly, they also trace the difference with standard regression to an alignment obstruction: benign overfitting of the minimum-norm interpolator relies on a favorable alignment between the target and the empirical covariances, which the score-matching target cannot reach. This is complementary to our bias mechanism: both analyses locate the malign origin of the overfitting in a \emph{deterministic} distortion of the learned score.

%% file: Appendix_numerical.tex
\section{Additional numerical results} \label{app:numerical}

\subsection{Architecture \& Training} \label{app:num:training}

The model is made of an initial $3\times 3$ convolution mapping to $W$ channels. The encoder has 4 resolution levels with channel multipliers $(1, 2, 4, 4)$, meaning the successive width is $W, 2W, 4W, 4W$ at resolutions $L, L/2, L/4, L/8$. Each level is composed of 2 residual blocks with $3\times 3$ stride-2 convolutions for downsampling. The encoder is followed by a bottleneck of 2 residual blocks and then by a symmetric decoder using nearest-neighbor upsampling and traditional skip connections. A final group-norm convolution projects back to a single channel. We apply multi-head self-attention with 4 heads at the two coarsest levels and in the bottleneck. Each residual block uses GroupNorm and SiLU activations. Finally, the diffusion time is encoded with a standard sinusoidal embedding passed through a 2-layer MLP and added to the features in every residual block. At very small $W=2$, the attention head count and GroupNorm groups are reduced so the layers remain valid, hence the configuration slightly changes at this $W$ compared to the others.

As stated in the main text, each model is trained to predict the noise $\vxi^{\nu\mu}$ by minimizing the standard noise-prediction objective. We use Adam with default moment parameters and without weight decay or exponential moving average.
The forward pass is discretized using $T=1000$ steps, indexed by $k=\{0,\ldots,T-1\}$, with a linear noise schedule $\beta_k$ going from $\beta_0=10^{-4}$ to $\beta_{T-1}=2\times10^{-2}$ and $\bar\alpha_k=\prod_{j=0}^{k}(1-\beta_j)$ \citep{ho2020}. To connect this schedule to the continuous-time notation of the
main text, we define $t_k=-\frac12\log\bar\alpha_k$. Consequently, the noisy samples can equivalently be written as $\vx_{t_k}^{\nu\mu} =\sqrt{\bar\alpha_k}\,\vx^\nu
 +\sqrt{1-\bar\alpha_k}\,\vxi^{\nu\mu}$.
Throughout training, the learning rate is fixed to $10^{-4}$ with a batch size $B=128$. Unlike standard diffusion models training, the noise is here fixed: each of the $n$ images $\vx^\nu$ has $m$ frozen noise vectors $\vxi^{\nu\mu}$, giving a total of $n\times m$ fixed pairs. The training is time-dependent: at every step we pick a diffusion index $k\sim\mathcal{U}\left[1,\dots,T-1\right]$ and we compute the interpolated noisy sample at $t_k$. In the numerical experiments, no explicit regularization is used: the only regularizer we study is the finite training time $\tau$ through early-stopping.

\begin{figure}
    \centering
    \includegraphics[width=.49\linewidth]{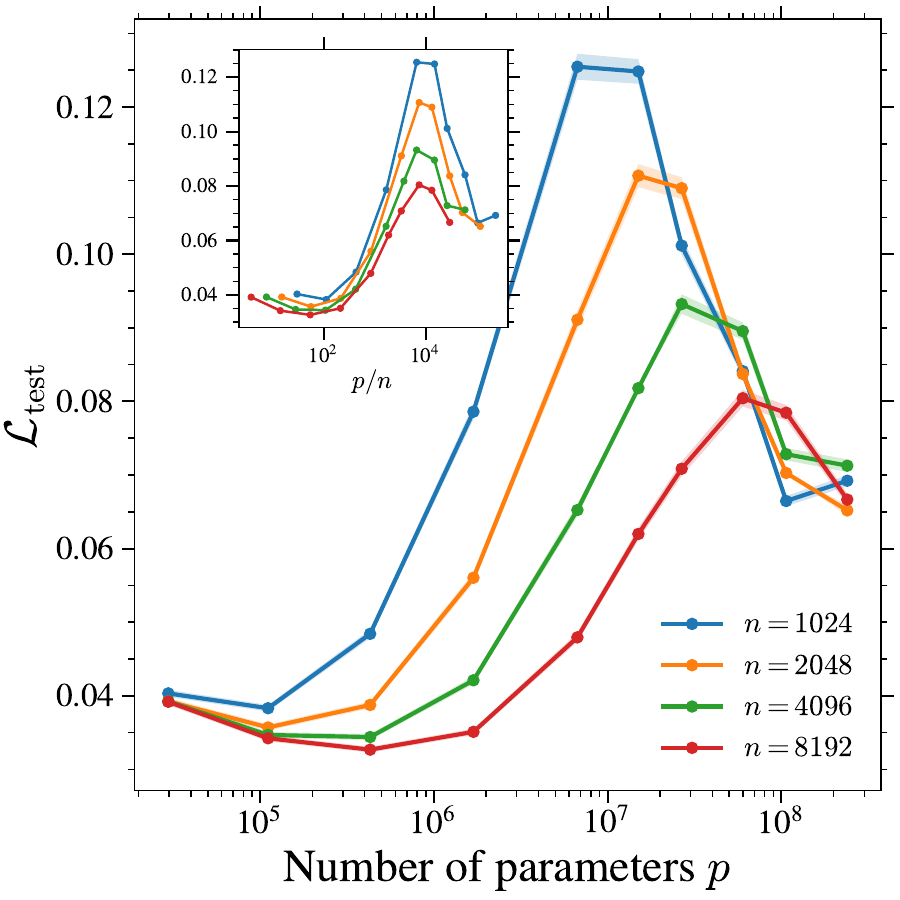}
    \includegraphics[width=.49\linewidth]{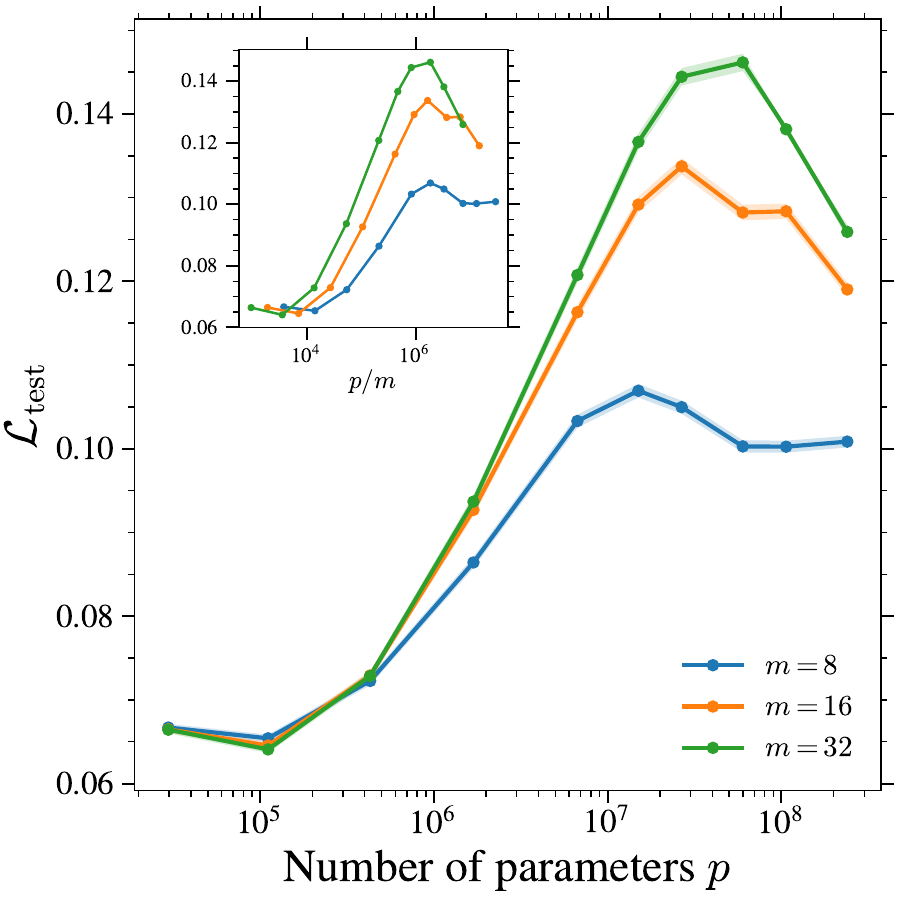}
    \caption{\textbf{Additional scaling experiments.} Evolution of \emph{(Left)} the time-integrated test loss with $p$ for several $n$ at $m=32$, and \emph{(Right)} the $t\approx0.1$ test loss with $p$ for several $m$ at $n=2048$. Both plots are obtained at $\tau=\tau_\mathrm{max}=2$M steps.} 
    \label{fig:app:nm_scalings}
\end{figure}

\subsection{More scaling experiments of the double descent with $n$ and $m$} \label{app:num:scaling_nm}

In the right panel of Fig.~\ref{fig:scaling}, we show the evolutions of the test loss for several $n$ and $m$ at once. In Fig.~\ref{fig:app:nm_scalings} we provide complementary experiments with $p$ for more $n$ and $m$ values, shown individually. In this case, we clearly see the collapse of the (time-integrated) test loss at both small and large $p$, independently of $n$ in the left panel, with the clear scaling of the peak with $n$ in the inset. The right panel shows three values of $m$ for a fixed $n=2048$ training. At small $p$, all curves collapse independently of $m$ while the height of the large $p$ plateau is increasing with it. The inset also shows the $p/m$ evolution, showing the scaling of the peak with $m$ as well. Together, these plots emphasize the scaling of the double descent peak with $nm$, as in the random features case.

\subsection{Training-time-wise double descent in U-Net diffusion models} \label{app:num:epochs}

\paragraph*{$\tau$--$W$ coupling of the double descent.}
The left panel of Fig.~\ref{fig:app:epoch_wise} traces the interaction between model size (through the width $W$) and training time in the double descent phenomenon: each curve shows $\Ltest$ as a function of $W$ at fixed number of updates $\tau$. A double descent emerges only for large enough $\tau$, and its peak is not static, slipping towards smaller $W$ as $\tau$ grows. This couples $W$ and $\tau$ as two substitutable axes of the double descent phenomenon, echoing the view of the supervised learning setting from \citet{Nakkiran_2021} in which training time and model size feed a single effective complexity measure governing the peak location, although our analysis does not provide any insight on the nature of the origin of these two peaks.

\paragraph*{Training-time double descent.} In the middle panel of Fig.~\ref{fig:app:epoch_wise}, we show the evolution of the test loss with the number of Adam updates $\tau$ for different $W$. It reveals three regimes. For small-size models, the test loss decreases monotonically to a minimum value reached at the end of training. For intermediate-size models ($16\leq W \leq 64$) the test loss reaches a minimum and then rises again at a moderate number of updates, the signature of overfitting and memorization at large training times. Larger models ($W\geq 96$), however, exhibit a \emph{training-time-wise double descent}: the test loss quickly decreases to a minimum ($\tau \approx 10^4$), rises, and then decreases again at even larger $\tau$. Strikingly, the first two regimes with the descent and the subsequent rise are largely robust to $W$ with all the curves collapsing, whereas the second descent clearly depends on the width, with larger models exhibiting a sharper late-time decrease. The right panel actually informs us about the mechanism behind this rise at large $W$: the gap between $\Linfty$ and $\Ltrain$ grows with both $\tau$ (at fixed $W$) and $W$ (at fixed $\tau$), showing that the model progressively fits the specific training noise realizations rather than the underlying score, meaning it is heading towards memorization and the overfitting of $\Linfty$ is detrimental. These three regimes were also observed in the supervised setting \citep{Nakkiran_2021} but with a crucial difference: there, the large-model second descent is persistent so that longer trainings decrease the test loss, whereas in diffusion models, every sufficiently expressive model eventually turns back towards memorization \citep{bonnaire2025diffusionmodelsdontmemorize}. This marks a clear departure from the epoch-wise double descent in supervised learning.

\begin{figure}
    \centering
    
    \includegraphics[height=.3\linewidth]{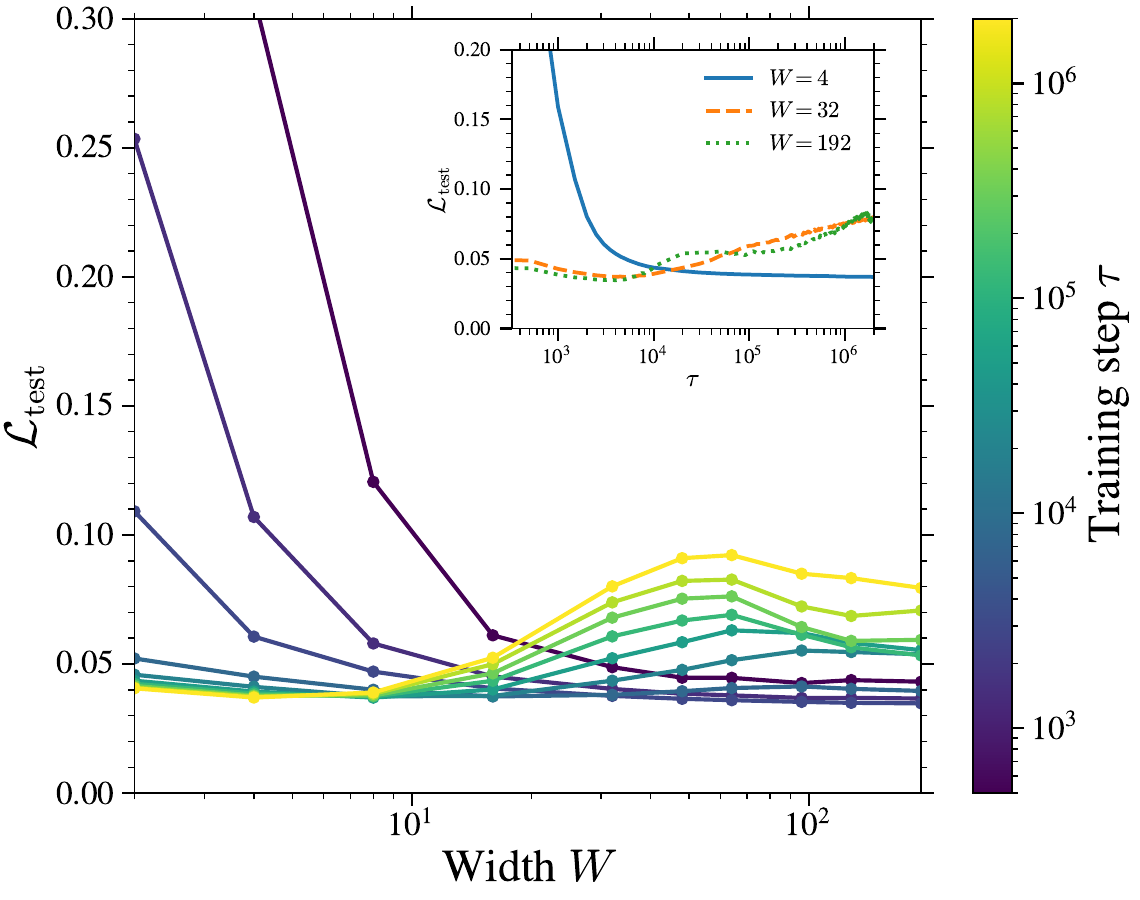} 
    \includegraphics[width=.6\linewidth]{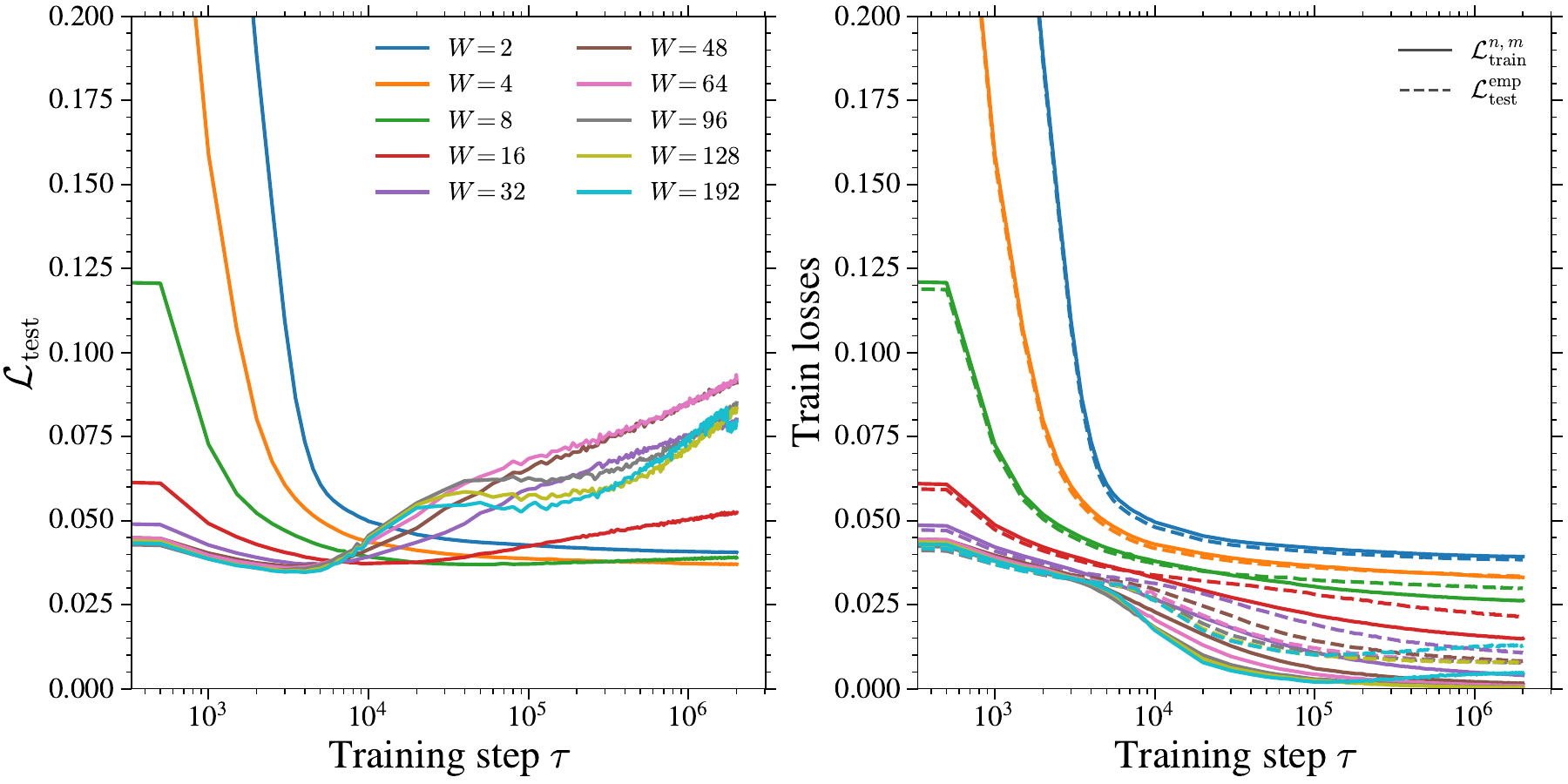}
    \caption{\textbf{Training-time double descent in diffusion models.}
    \emph{(Left)} Evolution of the integrated test loss with $W$ for several values of training times $\tau$. The middle and right panels show the evolution of the time-integrated \emph{(Middle)} test and \emph{(Right)} train losses against $\tau$ for several $W$. All the models were trained with $n=2048$ and $m=8$.}
    \label{fig:app:epoch_wise}
\end{figure}

\subsection{Visual inspection of the generated samples} \label{app:num:visual}

\paragraph*{Sample generation.} For each configuration $(n, m, W)$, we generate from the early-stopped model at $\tau^\star =\arg\min_\tau \Ltest(\tau)$ that minimizes the time-integrated test loss. From this checkpoint, we draw $10^4$ samples with the deterministic DDIM sampler from \citet{song2022DDIM} ($\eta=0$, $T'=100$ steps). All models are initialized with the same Gaussian noise seed so that a given position in the grid of Fig.~\ref{fig:app:visual} corresponds to the same starting point for all models. Samples are then de-normalized to the original range and the quality is measured using the Fréchet-Inception Distance \citep[FID,][]{heusel2017gans} against $10^4$ CelebA held-out test images that no models have seen during training, and using the \texttt{PyTorch-FID} package\footnote{Available at \url{https://github.com/mseitzer/pytorch-fid}.}.

\paragraph*{Results.} In Fig.~\ref{fig:app:visual}, we show for $n=2048$ a $3\times3$ set of generated samples from the early-stopped models at different $m$ and $W$, with increasing $m$ over the rows and $W$ across the columns. The FID of the full generated set is reported in each panel. Since all panels share the same initial noise, it is easy to visually appreciate the quality variations with $m$ and $W$. Two trends are visible. First, increasing the number of noise realizations $m$ improves the sample quality up to a certain point: the $m=1$ row is clearly the blurriest (FID$\approx 100$), whereas $m=8$ and $m=32$ produce sharper faces at fixed $W$, although the trend is not consistent for all widths. Second, at fixed $m$ (say 8), the FID decreases with the width, reflecting the benefit of additional capacity coupled with regularization, as emphasized in the main text. The only notable exception is $m=1$ where the largest model is \emph{not} the best one. Note that all models shown here are trained with only $n=2048$ images with no data augmentation, hence the modest sample quality. This is intentional to keep $n$ moderate so that the generalization--memorization transition during training \citep{bonnaire2025diffusionmodelsdontmemorize} and the interpolation peak in the test loss remain visible in a reasonable amount of training time and number of parameters.

\begin{figure}
    \centering
    \includegraphics[width=.7\linewidth]{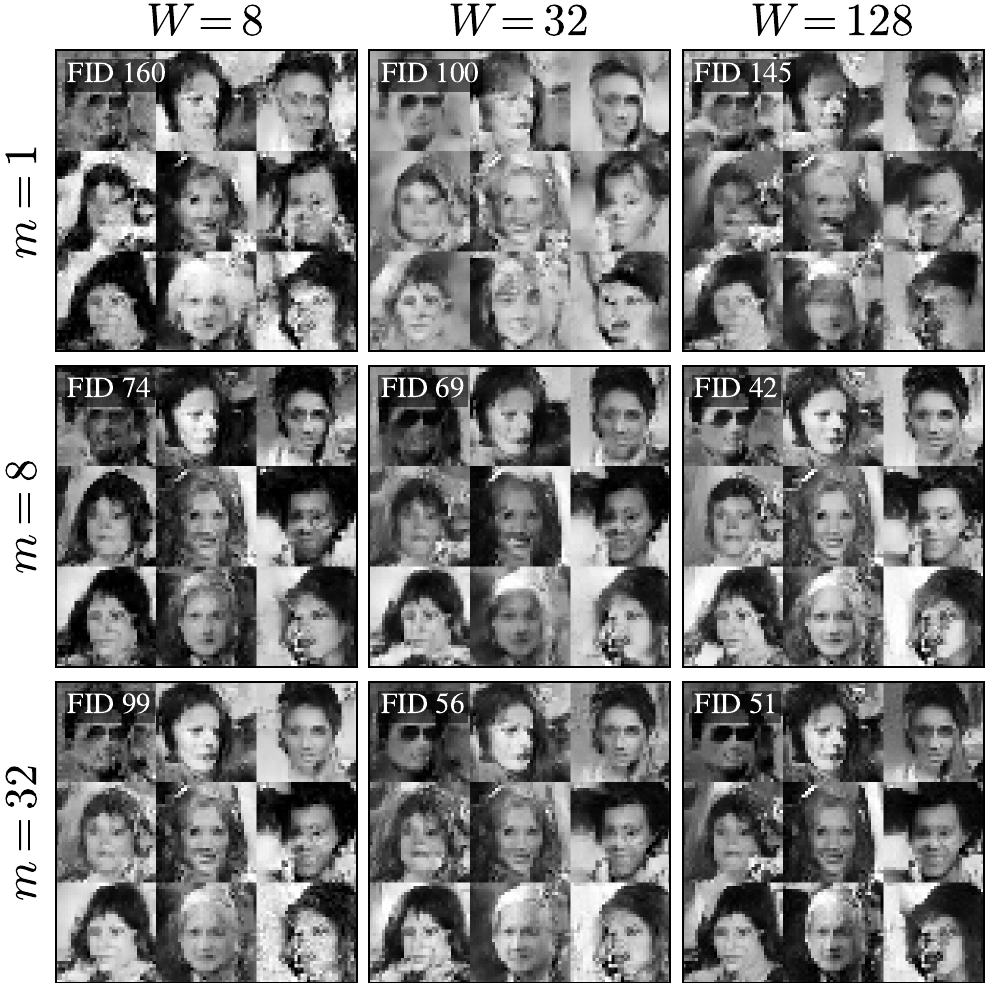}
    \caption{\textbf{Non-curated generated samples.} Example of CelebA generated samples for various combinations of $m$ (across rows) and $W$ (across columns) along with the test FID on the full set of generated images. All the models were trained with the same $n=2048$ data.} 
    \label{fig:app:visual}
\end{figure}

\subsection{More details on the effect of $m$ on U-Nets}

In Fig.~\ref{fig:app:effect_m} we display the evolution of the time-integrated test loss with the training step $\tau$ for several $m$ at fixed $n=16384$ and $W=64$. We deliberately take $n$ large so that the models reach good generalization and operate in a regime closer to practical settings. Every curve follows the dynamics discussed previously: the test loss first decreases to a minimum and then rises again, making early stopping necessary, even at this $n$. The effect of $m$ is clear and monotonic: as $m$ increases, the minimum test loss decreases and is reached at progressively larger $\tau$. This improvement saturates for $m\gtrsim 16$, where the curves essentially collapse onto one another.
Together with the FID measurements in the right panel of Fig.~\ref{fig:effect_m}, this confirms that a larger $m$ is beneficial for generalization, both at the level of the test loss and of the quality of the generated samples.

\begin{figure}[t]
    \centering
    \begin{minipage}[t]{0.48\linewidth}
        \centering
        \includegraphics[width=\linewidth]{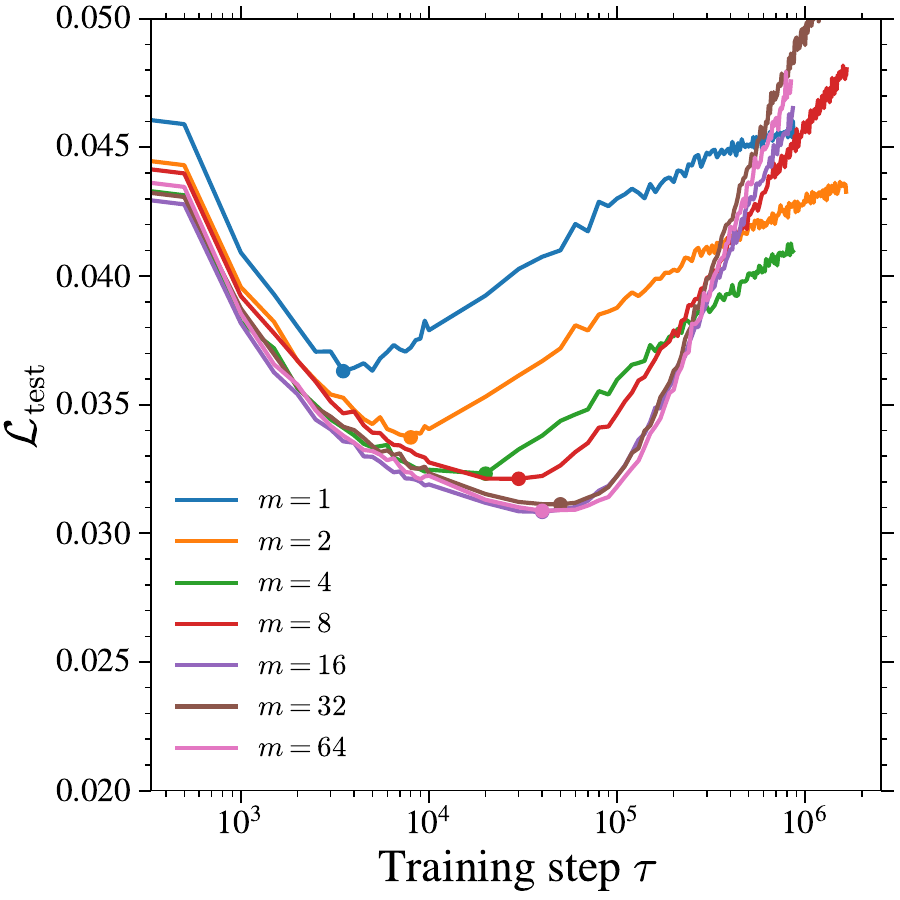}
        \caption{\textbf{Effect of $m$ on the test loss dynamics.} Time-integrated test loss as a function of $\tau$ at fixed $n=16384$ and $W=64$ for several values of $m$. The colored dots mark the per-curve minima, i.e. the early-stopped checkpoints $\tau^\star$ used in the main text.}
        \label{fig:app:effect_m}
    \end{minipage}
    \hfill
    \begin{minipage}[t]{0.48\linewidth}
        \centering
        \includegraphics[width=\linewidth]{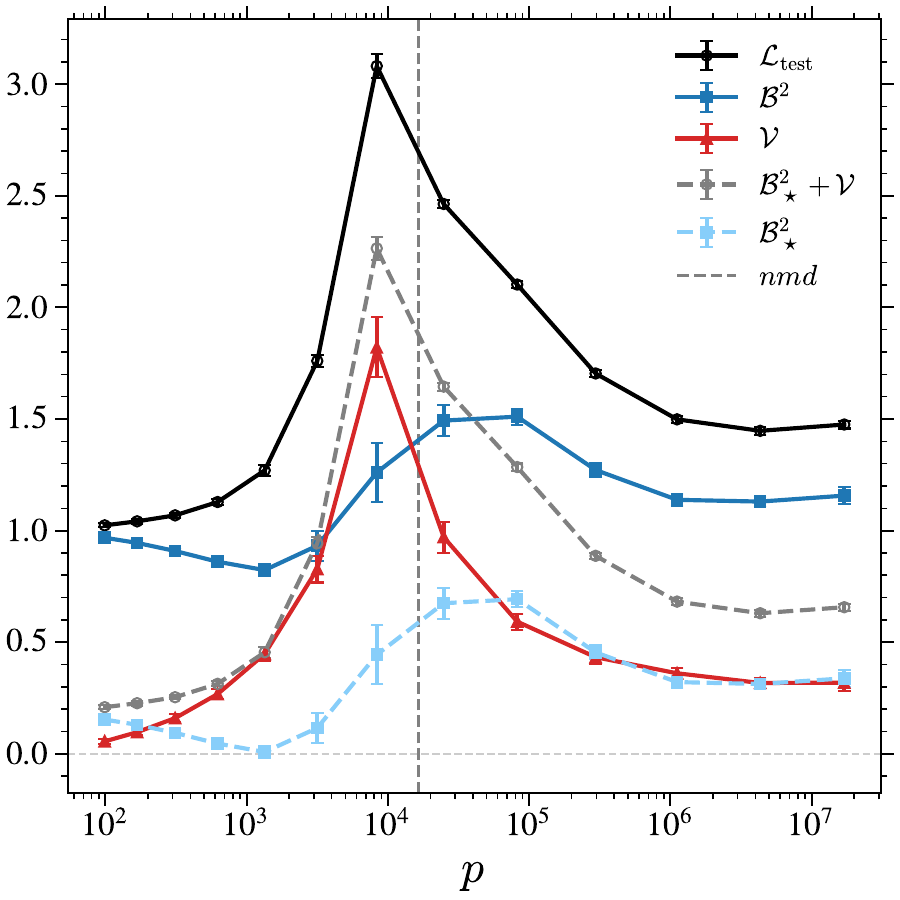}
        \caption{\textbf{Bias--variance decomposition for Gaussian Mixture Model.} Evolution of the bias and variance for a 2-hidden-layer neural network trained on data from a 2-component Gaussian Mixture Model. $\mathcal{B}_\star$ is the population bias computed from the analytically-known score while $\mathcal{B}$ is the one accessible in practice. Here, $n=64$, $m=8$, $d=32$, and $t\approx0.1$.}
        \label{fig:app:bias_var}
    \end{minipage}
\end{figure}

\subsection{Bias--Variance decomposition in the Gaussian Mixture Model} \label{app:num:bias_var_GMM}

\paragraph*{Data and exact score.} We use here a symmetric two-component mixture model such that
\begin{equation}
  \vx_0 \sim \tfrac12\,\mathcal{N}(+\vmu,\vI_d)+\tfrac12\,\mathcal{N}(-\vmu,\vI_d),
  \qquad \vmu=\mu\,\mathbf{1}_d,\quad \mu=1,\ \ \vSigma=\vI_d.
\end{equation}
In this case, the analytical (population) score is known at all $t$ and corresponds to
\begin{equation}
  \vs^\star(\vx,t)=\nabla_{\vx}\log p_t(\vx)=\sqrt{\bar\alpha_t}\,\vmu\, \tanh\!\big(\sqrt{\bar\alpha_t}\,\vmu^\top\vx\big)-\vx.
  \label{eq:gmm_score}
\end{equation}

\paragraph*{Model and training.} We learn the score at a single, fixed diffusion time $t$ with a fully-connected network of $L$ hidden layers of width $W$ and $\tanh$ activations. With
$\vh_0=\vx$ and
\begin{equation}
\vh_\ell=\tanh\!\big(\mathbf{W}_\ell\,\vh_{\ell-1}+\vb_\ell\big),\quad \ell=1,\dots,L,
  \qquad
  \mathbf{W}_1\in\mathbb{R}^{W\times d},\ \ \mathbf{W}_\ell\in\mathbb{R}^{W\times W}\ (\ell\ge2),
\end{equation}
the noise prediction is the linear final read-out
\begin{equation}
\veps_\vtheta(\vx)=\mathbf{W}_{L+1}\,\vh_L+\vb_{L+1}, \qquad \mathbf{W}_{L+1}\in\mathbb{R}^{d\times W},
\end{equation}
so that the model has a total of $p = 2dW + d + LW + (L-1)W^2$ trainable parameters. The network predicts the noise, equivalently the score $\vs_\vtheta=-\veps_\vtheta/\sqrt{1-\bar\alpha_t}$, and is time-agnostic since $t$ is fixed to the discrete step $t=137$, which corresponds to $t\approx0.1$ in the continuous-time notation of the main text. Following the benign-overfitting protocol of the main text, we draw $n$ clean points
$\{\vx_0^i\}_{i=1}^n$ from $P_0$ and, for each, $m$ \emph{fixed} noise realizations
$\{\veps^{ij}\}_{j=1}^m$, yielding $nm$ anchors
$\vx_t^{ij}=\sqrt{\bar\alpha_t}\,\vx_0^i+\sqrt{1-\bar\alpha_t}\,\veps^{ij}$, and
minimize the empirical denoising loss
\begin{equation}
  \hat{\mathcal{L}}(\vtheta)=\frac{1}{nm}\sum_{i=1}^n\sum_{j=1}^m
    \big\|\veps_\vtheta(\vx_t^{ij})-\veps^{ij}\big\|^2
\end{equation}
with Adam for $\tau_\mathrm{max}=100$K steps.

\paragraph*{Estimating $\mathcal{B}^2$ and $\mathcal{V}$.} We train an ensemble of $K=20$ independent models that differ only in their random
seed (data, noises, and initialization), and evaluate all of them on a single frozen test set of $nm$ fresh points $\{\vx_t^\nu\}$ from the same mixture at the same fixed $t$. We fix $L=2$ and vary only the width $W$ to increase the number of parameters in the model. We then estimate empirically the bias, variance and test loss. The bias is computed against two targets: the empirical labels $\veps$ (i.e. the target noises) giving the quantities $\mathcal{B}^2$ and $\Ltest=\mathcal{B}^2 + \mathcal{V}$ as measured in main text; and against the population target $\veps^\star$ from Eq.~\ref{eq:gmm_score}, granting access to population estimates of $\mathcal{B}^2_\star$ and $\mathcal{B}^2_\star+\mathcal{V}$.

\paragraph*{Empirical vs. population targets.} Fig.~\ref{fig:app:bias_var} reports both decompositions. The variance is target-independent, and the two biases differ only by the \emph{irreducible} denoising error
\begin{equation}
  \mathcal{B}^2-\mathcal{B}_\star^2=\mathbb{E}\left[\big\|\veps^\star-\veps\big\|^2\right],
\end{equation}
i.e., a $p$-independent constant. Consequently, the population curves $\mathcal{B}_\star^2$ and $\mathcal{B}_\star^2+\mathcal{V}$ (dashed lines in Fig.~\ref{fig:app:bias_var}) exactly correspond to the empirical $\mathcal{B}^2$ and $\Ltest$ (solid lines) shifted downward by $\mathcal{B}^2-\mathcal{B}_\star^2$. Beyond this effect, we observe that the plot reproduces exactly the non-monotonic behaviors of the bias and the variance that were depicted in the main text for the U-Net architecture trained on CelebA, both saturating to a non-vanishing $O(1)$ value at large $p$. The interpolation threshold, corresponding to where the train loss becomes small and the test loss peaks, is also consistent with $p\approx nmd$.

\subsection{Integrated test loss} \label{app:num:integrated_loss}

The main text reports losses at a fixed diffusion time $t$, which isolates the phenomenon at the noise levels where it is most pronounced. However, the double descent and the peak-and-plateau structure appear across the whole small-$t$ range. Figure~\ref{fig:app:integrated_loss} displays the integrated test loss over all diffusion times\footnote{Note that many figures from the Appendix (e.g., Fig.~\ref{fig:app:nm_scalings} (Left), Fig.~\ref{fig:app:epoch_wise}, Fig.~\ref{fig:app:effect_m} and Fig.~\ref{fig:app:FID_tau}) in fact display the integrated loss.} as a function of the model size, for the same model and settings as the middle panel of Fig.~\ref{fig:RF_UNET_three_losses}. The peak is found at around the same value of $p$ and then plateaus above its minimum. The qualitative picture we draw in the main text is therefore robust to aggregating over diffusion times, although the peak amplitude and the plateau may depend on the precise loss parameterization and the weights associated to diffusion times.

\subsection{Test loss and FID during training} \label{app:num:fid_tau}

\begin{figure}[t]
    \centering
    \begin{minipage}[t]{0.48\linewidth}
        \centering
        \includegraphics[width=\linewidth]{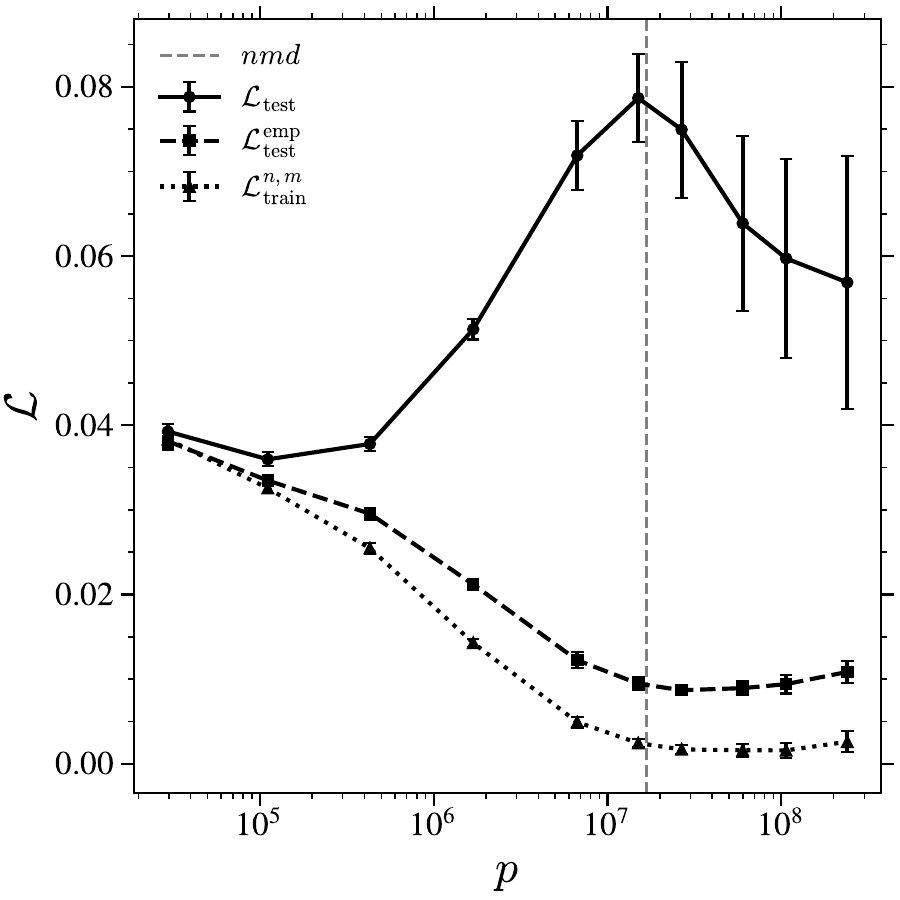}
        \caption{\textbf{Integrated losses behavior.} Losses versus $p$ for U-Net models trained on CelebA at fixed $n=2048$, $m=8$. Error bars correspond to $\pm3$SE obtained from 6 random initial seeds. The vertical dashed line corresponds to the naive interpolation threshold $nmd$.}
        \label{fig:app:integrated_loss}
    \end{minipage}
    \hfill
    \begin{minipage}[t]{0.48\linewidth}
        \centering
        \includegraphics[width=\linewidth]{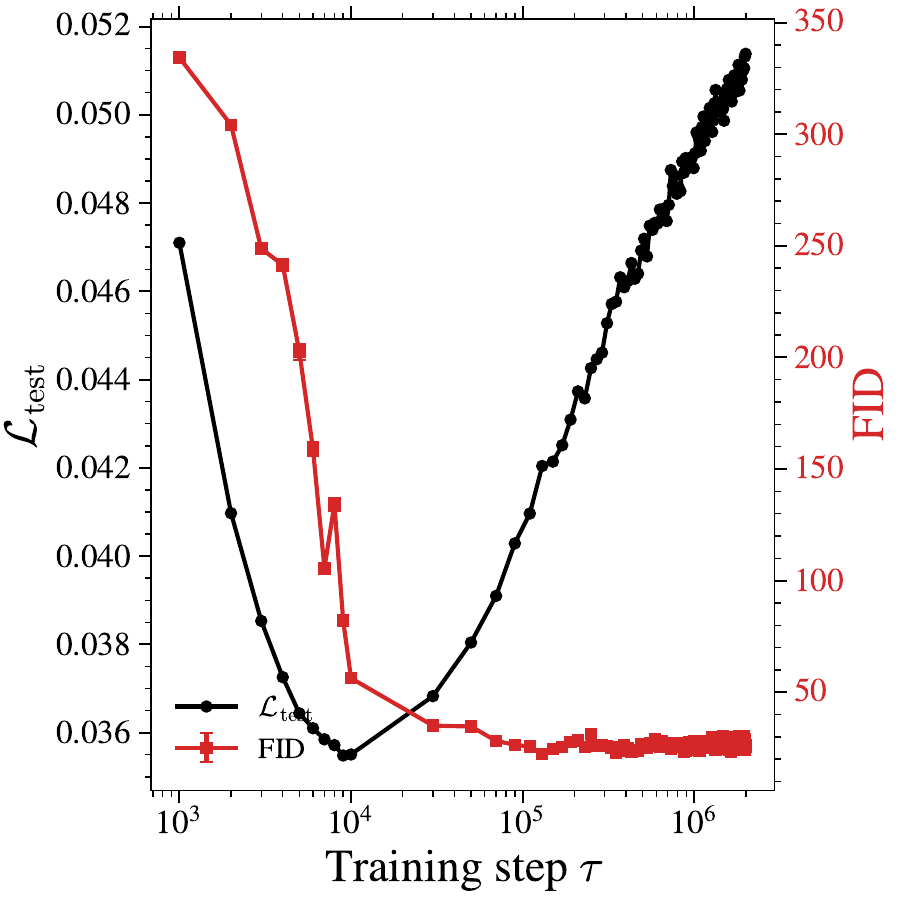}
        \caption{\textbf{Test loss and FID with $\tau$.} Time-integrated test loss (black, left axis) and FID (red, right axis) as a function of $\tau$ at fixed $n=2048$, $m=8$ and $W=16$. Throughout the training, the evaluation of $f_\mathrm{mem}$ stays below $10^{-3}$.}
        \label{fig:app:FID_tau}
    \end{minipage}
\end{figure}

Figure~\ref{fig:app:FID_tau} shows the evolution of the time-integrated test loss and of the FID as a function of the training time $\tau$ for one of the models used in the right panel of Fig.~\ref{fig:effect_regularization} at $n=2048$, $m=8$ and $W=16$ ($p\approx1.8$M). We select this model in particular because it corresponds to the first point at which the FID of the early-stopped models increases in Fig.~\ref{fig:effect_regularization}.
The two early-stopping criteria clearly disagree: while the test loss selects a finite
$\tau^\star=\mathrm{argmin}_\tau\,\Ltest(\tau)$ (used in the main text), the FID decreases monotonically during training and its own optimum lies at
long training times. This supports the conjecture of Sect.~\ref{Sect:Discussion} that early-stopping on the FID itself would remove the FID double descent as the peak value ($W=16$) would now sit below the FID measure of $W=8$. The two criteria (test loss and FID) do not optimize the same quantities as the FID only measures similarities in the features of the training and generated samples and is therefore insensitive to memorization: a model copying exactly its training set would get a good FID. The test loss, however, penalizes the deviation of the learned score from the population one, and therefore increases as the estimator starts to correlate with the training set. Stopping at $\tau^\star$ therefore minimizes this precise criterion and remains far cheaper to monitor.\footnote{Evaluating the FID requires generating many samples at different training times, whereas the test loss only computes a few forward passes on a test set.} In the precise configuration of Fig.~\ref{fig:app:FID_tau} (small $m$ and $W$), the memorization fraction (as measured by the ratio of first to second nearest neighbor in the $L_2$ sense being below $1/3$, see \citealp{yoon2023diffusion, bonnaire2025diffusionmodelsdontmemorize}) remains below $10^{-3}$: starting from random noise does not lead to memorized copies of training samples.

%% file: Appendix_analytical.tex
\section{Proof of the analytical results and additional results}
\label{app:analtical_proofs}
In this section of the Appendix we first present some preliminary notions on random matrix theory, then we present the derivation of the results of the main text, namely the closed-form losses and bias--variance decomposition of Theorem~\ref{thm:main}, together with the overparameterized ($\psi_p\to\infty$) and infinite noise ($m\to\infty$) limits, which we state and prove here (Proposition~\ref{prop:app-psip-inf} and Proposition~\ref{prop:app-minf}).

\subsection{Preliminaries}
In this subsection, we present some useful results and formulae of Random Matrix Theory and Gaussian calculus.
\label{sec:preliminaries}

\subsubsection{Resolvent and Stieltjes transform}
For a symmetric matrix $\vM\in\mathbb{R}^{N\times N}$ and $z$ outside its spectrum,
we define its \emph{resolvent} as the matrix $\mathbf{\mathcal{G}}_N=(\vM-z\vI)^{-1}$ and
its \emph{Stieltjes transform} as
\begin{equation}
    g_N(z)=\frac{1}{N}\Tr\bigl(\vM-z\vI\bigr)^{-1}
        =\int\frac{\dd\rho_N(\lambda)}{\lambda-z},
\end{equation}
where $\rho_N$ is the empirical spectral distribution of $\vM$. Although $g_N(z)$ is a
random variable, typically, in the
high-dimensional limit $N\gg 1$ it
converges almost surely to a deterministic limit $\bar g(z)$. 
\subsubsection{Gaussian Toolkit}
When computing the traces, we will use usual formulae of Gaussian calculus,
\begin{enumerate}
    \item For symmetric positive-definite $\vM\in\mathbb{R}^{N\times N}$ and $\vJ\in\mathbb{R}^N$,
\begin{equation}
    \int_{\mathbb{R}^N}\dd\phi\;
        e^{-\frac12\phi^T \vM\phi+\vJ^T\phi}
    =(2\pi)^{N/2}(\det \vM)^{-1/2}\,e^{\frac12 \vJ^T \vM^{-1}\vJ}.
\end{equation}
\item Let $\vX\sim\mathcal{N}(0,\vSigma)$ in $\mathbb{R}^N$ and let $\vK\in\mathbb{R}^{N\times N}$ be symmetric with
$\vI+\vSigma \vK\succ0$. Then
\begin{equation}
    \E\bigl[e^{-\frac12 \vX^T \vK \vX}\bigr]
    =\det\bigl(\vI+\vSigma \vK\bigr)^{-1/2}.
\end{equation}
\item (Wick's theorem) For jointly centered Gaussian variables $X_1,\dots,X_{2k}$,
\begin{equation}
    \E[X_1\cdots X_{2k}]
    =\sum_{\text{pairings }\pi}\;\prod_{(i,j)\in\pi}\E[X_iX_j],
\end{equation}
the sum running over all perfect matchings. In particular
\begin{align}
\E[X_1X_2X_3X_4]=\E[X_1X_2]\E[X_3X_4]+\E[X_1X_3]\E[X_2X_4]+\E[X_1X_4]\E[X_2X_3].
\end{align}
\end{enumerate}

\subsubsection{The Replica Method}
In our context, we frequently need to compute the expected logarithm of a partition function $\mathcal{Z}$ that arises from the Gaussian integral representation of the resolvent, e.g., $\mathcal{Z} \propto \int \dd\phi \, e^{-\frac{1}{2}\phi^T \mathbf{M} \phi}$, where $\mathbf{M}$ is a random matrix of interest. To compute such expectations of logarithms of random variables, we employ the replica method \citep{mezard1987spin}. The core idea is based on the algebraic identity:
\begin{equation}
    \log \mathcal{Z} = \lim_{s\to 0} \frac{\mathcal{Z}^s - 1}{s}.
\end{equation}
By evaluating the integer moments $\mathbb{E}[\mathcal{Z}^s]$ for $s \in \mathbb{N}$, one can analytically continue the result to $s \to 0$. This procedure is a cornerstone of the replica method from statistical physics, a theoretical physics technique that has had a wide range of applications, notably in statistical learning and the theory of neural networks \citep{Ascoli_2020}. The replica method is widely believed to yield exact asymptotic predictions, as has been established rigorously in a wide range of problems \citep{guerra2002thermodynamic,talagrand2006parisi,barbier2019optimal,gerbelot2023asymptotic,vilucchio2025asymptotics}. More precisely, our results rely on a so-called replica-symmetry (RS) assumption. The RS ansatz is known to hold for the trace of rational polynomials of random matrices, where it has been shown to yield the exact same results as rigorous methods such as linear pencils \citep{bodin2021model, george_2025, bonnaire2025diffusionmodelsdontmemorize}.

\subsubsection{Replica representation of an inverse matrix}
To obtain Gaussian integrals we use the ``replica'' representation of the element
$(\alpha\beta)$ of a $p\times p$ matrix $\vM$:
\begin{equation}
    \bigl(\vM^{-1}\bigr)_{\alpha\beta}
    =\lim_{s\to0}\int\!\left(\prod_{a=1}^{s}\prod_{\gamma=1}^{p}\dd\phi^{a}_{\gamma}\right)
        \phi^{1}_{\alpha}\,\phi^{1}_{\beta}\,
        \exp\!\left(-\tfrac12\,\phi^{a}_{\gamma}\vM_{\gamma\delta}\phi^{a}_{\delta}\right).
    \label{eq:replica-inverse}
\end{equation}
Indeed, using the Gaussian integral representation of the inverse of $\vM$,
\begin{align*}
    \bigl(\vM^{-1}\bigr)_{\alpha\beta}
    &=\mathcal{Z}^{-1}\int\!\left(\prod_{\gamma=1}^{p}\dd\phi_{\gamma}\right)
        \phi_{\alpha}\phi_{\beta}\,
        \exp\!\left(-\tfrac12\,\phi_{\gamma}\vM_{\gamma\delta}\phi_{\delta}\right),\\
    \mathcal{Z}&=\sqrt{\frac{(2\pi)^{p}}{\det \vM}}
     =\int\!\left(\prod_{\gamma=1}^{p}\dd\phi_{\gamma}\right)
        \exp\!\left(-\tfrac12\,\phi_{\gamma}\vM_{\gamma\delta}\phi_{\delta}\right).
\end{align*}
Using the replica identity, we rewrite the normalization as
$\mathcal{Z}^{-1}=\lim_{s\to0}\mathcal{Z}^{\,s-1}$, obtaining
\begin{equation*}
    \bigl(\vM^{-1}\bigr)_{\alpha\beta}
    =\lim_{s\to0}\,\mathcal{Z}^{\,s-1}\int\!\left(\prod_{\gamma=1}^{p}\dd\phi_{\gamma}\right)
        \phi_{\alpha}\phi_{\beta}\,
        \exp\!\left(-\tfrac12\,\phi_{\gamma}\vM_{\gamma\delta}\phi_{\delta}\right).
\end{equation*}
Renaming the integration variable of this integral as $\phi^{1}$ and writing the
factor $\mathcal{Z}^{\,s-1}$ as $s-1$ further Gaussian integrals over replicas $\phi^{a}$,
$a\in\{2,\dots,s\}$, we obtain expression Eq.~\ref{eq:replica-inverse}.

\subsubsection{Schur complement}

For a block matrix $\bigl(\begin{smallmatrix}\vA&\vB\\\vC&\vD\end{smallmatrix}\bigr)$:
if $\vA$ is invertible,
\begin{equation}
    \det\begin{pmatrix}\vA&\vB\\\vC&\vD\end{pmatrix}
    =\det(\vA)\,\det\!\bigl(\vD-\vC\vA^{-1}\vB\bigr),
\end{equation}
and if $\vD$ is invertible, the top-left block of the inverse is
\begin{equation}
    \left[\begin{pmatrix}\vA&\vB\\\vC&\vD\end{pmatrix}^{-1}\right]_{11}
    =\bigl(\vA-\vB\vD^{-1}\vC\bigr)^{-1}.
\end{equation}

\subsubsection{Saddle-point method}
In the computation of the traces we will obtain expressions of the form
\begin{equation}
    \int \dd\vTheta\;P(\vTheta)\,e^{-\frac{d}{2}\,s\,S(\vTheta)},
\end{equation}
where $\vTheta$ collects the order parameters, $S$ is the effective action,
$P$ a non-exponential prefactor, and $s$ the number of replicas. As $d\to\infty$, the Laplace method applies and the
measure $\propto e^{-\frac{d}{2}sS(\vTheta)}$ concentrates on the stationary
point $\vTheta^*$ defined by $\partial_{\vTheta} S(\vTheta^*)=0$, giving
\begin{equation}
    \int \dd\vTheta\;P(\vTheta)\,e^{-\frac{d}{2}sS(\vTheta)}
    \;\sim\; P(\vTheta^*).
\end{equation}

\subsection{Bias--variance decomposition of the test loss in the general case}
\label{app:bias_variance_general_case}

In this section we derive the bias--variance decomposition of the denoising
score matching (DSM) test loss, for a general data distribution.

We recall that $\vs_{\mathcal{D},\vTheta}$ denotes the minimizer of the train
loss $\Ltrain$ obtained from a training set $\mathcal{D}$ and an initialization
$\vTheta$ of the parameters. Throughout, $\mathbb{E}_{\vy}$ denotes the average
over a fresh test sample $\vy\sim P_t$, drawn independently of $\mathcal{D}$,
while $\langle\cdot\rangle$ denotes the average over the realizations of the
training set and over the remaining sources of randomness, such as $\vTheta$.
Note that $\Ltest(\vs_{\mathcal{D},\vTheta})$ is itself a random variable: the
decomposition below concerns its average $\langle\Ltest\rangle$, which is the
quantity we look at. During the computation, we shall also use Tweedie's identity
\begin{equation}
\label{eq:tweedie}
    \mathbb{E}\!\left[\vxi\,\middle|\,\vy\right]
    =-\sqrt{\Delta_t}\,\nabla\log P_t(\vy).
\end{equation}
Adding and subtracting $\sqrt{\Delta_t}\,\nabla\log P_t(\vy)$ inside the norm,
at fixed $\mathcal{D}$ and $\vTheta$, yields
\begin{align}
    \Ltest(\vs_{\mathcal{D},\vTheta})
    &=\frac{1}{d}\,\mathbb{E}_{\vx,\vxi}\!\left[\left\lVert
    \sqrt{\Delta_t}\,\vs_{\mathcal{D},\vTheta}(\vy)+\vxi\right\rVert^{2}\right]
    \\[2pt]
    &=\frac{1}{d}\,\mathbb{E}_{\vx,\vxi}\!\left[\left\lVert
    \sqrt{\Delta_t}\Big(\vs_{\mathcal{D},\vTheta}(\vy)-\nabla\log P_t(\vy)\Big)
    +\Big(\vxi+\sqrt{\Delta_t}\,\nabla\log P_t(\vy)\Big)\right\rVert^{2}\right]
    \\[2pt]
    &=\frac{\Delta_t}{d}\,\mathbb{E}_{\vy}\!\left[\left\lVert
    \vs_{\mathcal{D},\vTheta}(\vy)-\nabla\log P_t(\vy)\right\rVert^{2}\right]
    +C_t .
\end{align}
The cross term vanishes: since the test sample is drawn independently of
$(\mathcal{D},\vTheta)$, the factor
$\vs_{\mathcal{D},\vTheta}(\vy)-\nabla\log P_t(\vy)$ is a deterministic
function of $\vy$ and can be pulled out of the conditional expectation, while
Eq.~\ref{eq:tweedie} gives
$\mathbb{E}[\vxi+\sqrt{\Delta_t}\,\nabla\log P_t(\vy)\,|\,\vy]=\mathbf{0}$.
The remaining term,
\begin{equation}
\label{eq:C_t}
    C_t=\frac{1}{d}\,\mathbb{E}_{\vx,\vxi}\!\left[\left\lVert
    \vxi+\sqrt{\Delta_t}\,\nabla\log P_t(\vy)\right\rVert^{2}\right]
    =1-\frac{\Delta_t}{d}\,\mathbb{E}_{\vy}\!\left[\left\lVert
    \nabla\log P_t(\vy)\right\rVert^{2}\right],
\end{equation}
depends only on the data distribution and on $t$, and is independent of the
learned score. It is the irreducible part of the test loss, namely the residual
uncertainty on $\vxi$ left after observing $\vy$, and is reached by the exact
score $\nabla\log P_t$.

Averaging over $\mathcal{D}$ and $\vTheta$ and inserting
$\pm\langle\vs_{\mathcal{D},\vTheta}(\vy)\rangle$ inside the norm yields
\begin{equation}
\label{eq:decomposition}
    \big\langle\Ltest\big\rangle
    =C_t+\Delta_t\left(\mathcal{B}^{2}+\mathcal{V}\right),
\end{equation}
with
\begin{align}
\label{eq:bias_variance_app}
    \mathcal{B}^{2}=\frac{1}{d}\,\mathbb{E}_{\vy}\!\left[\left\lVert
    \nabla\log P_t(\vy)-\big\langle\vs_{\mathcal{D},\vTheta}(\vy)
    \big\rangle\right\rVert^{2}\right],
    \qquad
    \mathcal{V}=\frac{1}{d}\,\mathbb{E}_{\vy}\!\left[\Big\langle\left\lVert
    \vs_{\mathcal{D},\vTheta}(\vy)-\big\langle\vs_{\mathcal{D},\vTheta}(\vy)
    \big\rangle\right\rVert^{2}\Big\rangle\right].
\end{align}
Here the cross term also vanishes. The factor
$\langle\vs_{\mathcal{D},\vTheta}(\vy)\rangle-\nabla\log P_t(\vy)$ is
deterministic once $\vy$ is fixed, while
$\vs_{\mathcal{D},\vTheta}(\vy)-\langle\vs_{\mathcal{D},\vTheta}(\vy)\rangle$
has zero mean under $\langle\cdot\rangle$ by construction.

\subsection{Loss decomposition}
\label{app:loss_decomposition}
In this section we use the Gaussian Equivalence Principle to decompose the three losses into traces of rational functions of random matrices.

The three losses are all of the form
\begin{equation}
    \mathcal{L}(\vs)=\frac{1}{d}\,\mathbb{E}_{(\vx,\vxi)\sim p}\!\left[\Big\lVert \sqrt{\Delta_t}\vs(e^{-t}\vx+\sqrt{\Delta_t}\vxi)+\vxi\Big\rVert^2\right],
\end{equation}
where the joint law $p(\vx,\vxi)$ of the noisy data and the noise is one of
\begin{equation}
    p_{\mathrm{data}}\otimes\mathcal{N}(0,1),
    \qquad
    \tfrac{1}{n}\sum_{\nu=1}^n\delta(\vx-\vx^\nu)\otimes\mathcal{N}(0,1),
    \qquad
    \tfrac{1}{nm}\sum_{\nu=1}^n\sum_{\mu=1}^m\delta(\vx-\vx^\nu)\otimes\delta(\vxi-\vxi^{\nu\mu}),
\end{equation}
yielding respectively $\Ltest$, $\Linfty$ and $\Ltrain$. We derive the following result on $\Linfty$ which relates to the distance to the empirical score,
\begin{proposition}
\label{prop:empirical-score}
Let $\hat{P}_t(\vy)=\frac{1}{n}\sum_{\nu=1}^n\mathcal{N}(\vy;\,e^{-t}\vx^\nu,\Delta_t\vI_d)$ be the empirical noised distribution and $\vs_{\mathrm{emp}}(\vy)=\nabla_\vy\log\hat{P}_t(\vy)$ the associated empirical score. Then, for any score function $\vs$,
\begin{equation}
    \Linfty(\vs)
    =\frac{\Delta_t}{d}\,\mathbb{E}_{\vy\sim\hat{P}_t}\!\left[\bigl\lVert\vs(\vy)-\vs_{\mathrm{emp}}(\vy)\bigr\rVert^2\right]+C,
\end{equation}
where $C$ depends only on $t$ and the empirical score $\vs_{\mathrm{emp}}$, but not on $\vs$. Moreover, assume that $n$ grows at most polynomially in $d$, that $t=O_d(1)$, and that $\lVert\vx^\mu-\vx^\nu\rVert^2 = O(d)$ for all $\mu \neq \nu$. Then $C\to 0$ as $d\to\infty$.
\end{proposition}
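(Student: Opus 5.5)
The plan is to mirror the decomposition already carried out for $\Ltest$ in Appendix~\ref{app:bias_variance_general_case}, with the population law replaced by the empirical one. Under the joint law $\frac1n\sum_\nu\delta(\vx-\vx^\nu)\otimes\mathcal{N}(0,\vI_d)$, the noisy point $\vy=e^{-t}\vx+\sqrt{\Delta_t}\vxi$ has marginal $\hat P_t$. The conditional expectation of the noise is then given by Tweedie's identity Eq.~\ref{eq:tweedie} applied to the mixture $\hat P_t$:
\begin{equation*}
\mathbb{E}[\vxi\mid\vy]=-\sqrt{\Delta_t}\,\vs_{\mathrm{emp}}(\vy).
\end{equation*}
This can be checked directly. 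The posterior weight of sample $\nu$ given $\vy$ is $w_\nu(\vy)\propto\mathcal{N}(\vy;e^{-t}\vx^\nu,\Delta_t\vI_d)$, and $\vxi=(\vy-e^{-t}\vx^\nu)/\sqrt{\Delta_t}$ given $\nu$. Differentiating $\log\hat P_t$ gives $\vs_{\mathrm{emp}}(\vy)=-\sum_\nu w_\nu(\vy)(\vy-e^{-t}\vx^\nu)/\Delta_t$.

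Next I add and subtract $\sqrt{\Delta_t}\vs_{\mathrm{emp}}(\vy)$ inside the norm. The cross term vanishes after conditioning on $\vy$, because $\vs(\vy)-\vs_{\mathrm{emp}}(\vy)$ is a deterministic function of $\vy$. This yields the claimed identity with
\begin{equation*}
C=\frac1d\,\mathbb{E}\bigl\lVert\vxi+\sqrt{\Delta_t}\vs_{\mathrm{emp}}(\vy)\bigr\rVert^2=1-\frac{\Delta_t}{d}\,\mathbb{E}_{\vy\sim\hat P_t}\lVert\vs_{\mathrm{emp}}(\vy)\rVert^2,
\end{equation*}
which depends only on $t$ and the empirical score.

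For the vanishing of $C$, I rewrite it as a posterior-variance quantity. Since $\vxi+\sqrt{\Delta_t}\vs_{\mathrm{emp}}(\vy)=\sum_\nu w_\nu(\vy)\,e^{-t}(\vx^{\nu_0}-\vx^\nu)/\sqrt{\Delta_t}$ when $\vy$ was generated from sample $\nu_0$, we get
\begin{equation*}
C=\frac{e^{-2t}}{d\,\Delta_t}\,\frac1n\sum_{\nu_0}\mathbb{E}_{\vxi}\Bigl\lVert\sum_{\nu\neq\nu_0}w_\nu(\vy)\,(\vx^{\nu_0}-\vx^\nu)\Bigr\rVert^2 .
\end{equation*}
By Jensen and the bound $\lVert\vx^{\nu_0}-\vx^\nu\rVert^2=O(d)$, this gives $C\le \frac{e^{-2t}}{\Delta_t}\,O(1)\cdot\frac1n\sum_{\nu_0}\mathbb{E}_\vxi\bigl[1-w_{\nu_0}(\vy)\bigr]$. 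So it suffices to show that the posterior concentrates on the true sample, i.e.\ $\mathbb{E}[1-w_{\nu_0}]\to0$.

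This concentration is the main obstacle, and I would handle it with a log-likelihood-ratio computation. We have $w_\nu/w_{\nu_0}=\exp(-\Lambda_\nu)$, where
\begin{equation*}
\Lambda_\nu=\frac{e^{-2t}\lVert\vx^{\nu_0}-\vx^\nu\rVert^2}{2\Delta_t}+\frac{e^{-t}}{\sqrt{\Delta_t}}\langle\vxi,\vx^{\nu_0}-\vx^\nu\rangle .
\end{equation*}
The first term is of order $d$ and the second is Gaussian with standard deviation $O(\sqrt d)$. Using $1-w_{\nu_0}\le\min\bigl(1,\sum_{\nu\ne\nu_0}e^{-\Lambda_\nu}\bigr)$, I would split on the event that all Gaussian terms are at most half the deterministic term. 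On that event, $\sum_\nu e^{-\Lambda_\nu}\le n\,e^{-cd}$. Off it, a union bound gives probability at most $n\,e^{-c'd}$. Both bounds vanish since $n$ is polynomial in $d$.

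One caveat: this step needs the pairwise distances to be of order exactly $d$, i.e.\ bounded below by $cd$, not just above. I would read the hypothesis $\lVert\vx^\mu-\vx^\nu\rVert^2=O(d)$ as $\Theta(d)$ and state this interpretation, noting that it holds w.h.p.\ for the i.i.d.\ data considered. With $t=O_d(1)$, the prefactor $e^{-2t}/\Delta_t$ stays bounded for fixed $t>0$, so $C\to0$.
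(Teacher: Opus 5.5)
Your proof of the identity matches the paper's. It uses $\mathbb{E}[\vxi\mid\vy]=-\sqrt{\Delta_t}\,\vs_{\mathrm{emp}}(\vy)$ and completes the square, giving the same constant $C=\Linfty(\vs_{\mathrm{emp}})=1-\frac{\Delta_t}{d}\mathbb{E}_{\vy\sim\hat P_t}\lVert\vs_{\mathrm{emp}}(\vy)\rVert^2$. For $C\to0$ the paper only asserts that $\vs_{\mathrm{emp}}(e^{-t}\vx^\nu+\sqrt{\Delta_t}\vxi)\to-\vxi/\sqrt{\Delta_t}$. Your log-likelihood-ratio argument supplies the missing justification, and it is correct under pairwise separation $\lVert\vx^\mu-\vx^\nu\rVert^2\ge cd$.

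The point to revise is your claim that this lower bound is needed. The bound is forced only by your step $\lVert\sum_{\nu\neq\nu_0}w_\nu(\vx^{\nu_0}-\vx^\nu)\rVert^2\le(1-w_{\nu_0})\max_\nu\lVert\vx^{\nu_0}-\vx^\nu\rVert^2$, which throws away how far each competing sample is. Under the literal hypothesis your reduction to $\mathbb{E}[1-w_{\nu_0}]\to0$ does fail: if $\vx^{\nu_1}=\vx^{\nu_0}$, then $w_{\nu_0}\le1/2$ for every $\vy$. Yet such a duplicate contributes nothing to $C$, and an $O(1)$-close neighbour contributes only $O(1/d)$.

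The fix is to keep the distances inside the sum. Fix $\nu_0$ and set $\vb_\nu=e^{-t}(\vx^{\nu_0}-\vx^\nu)/\sqrt{\Delta_t}$, $a_\nu=\lVert\vb_\nu\rVert$. Then:
\begin{itemize}
\item $\vxi+\sqrt{\Delta_t}\vs_{\mathrm{emp}}(\vy)=-\sum_{\nu\neq\nu_0}w_\nu\vb_\nu$ and $\Lambda_\nu=a_\nu^2/2+a_\nu Z_\nu$ with $Z_\nu\sim\mathcal N(0,1)$.
\item Cauchy--Schwarz gives $\lVert\sum_{\nu\neq\nu_0}w_\nu\vb_\nu\rVert^2\le\sum_{\nu\neq\nu_0}w_\nu a_\nu^2$.
\item Since $w_\nu\le\min(1,e^{-\Lambda_\nu})$, an exact Gaussian computation gives $\mathbb{E}[w_\nu]\le\mathbb{E}\min(1,e^{-\Lambda_\nu})=\mathbb{P}(|Z_\nu|\ge a_\nu/2)\le2e^{-a_\nu^2/8}$.
\end{itemize}
Now split the neighbours at $a_\nu^2=R$. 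The close ones contribute at most $R/d$, because $\sum_\nu w_\nu\le1$. The far ones contribute at most $2nRe^{-R/8}/d$ for $R\ge8$. Choosing $R=8\log n$ gives
\begin{equation*}
C\le\frac{1}{dn}\sum_{\nu_0=1}^n\sum_{\nu\neq\nu_0}a_\nu^2\,\mathbb{E}[w_\nu]\le\frac{24\log n}{d}.
\end{equation*}
This proves the proposition as stated. In fact it needs no assumption on the pairwise distances, holds uniformly in $t>0$, and requires only $\log n=o(d)$.
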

\begin{proof}
Write $\vy^\nu=e^{-t}\vx^\nu+\sqrt{\Delta_t}\,\vxi$ with $\vxi\sim\mathcal{N}(0,\vI_d)$. Since each $\vy^\nu$ is drawn from $\hat{P}_t$, we can express $\Linfty$ as an expectation over $\hat{P}_t$:
\begin{equation}
    \Linfty(\vs)
    =\frac{1}{d}\,\mathbb{E}_{\vy\sim\hat{P}_t}\!\left[\Delta_t\lVert\vs(\vy)\rVert^2
    +2\sqrt{\Delta_t}\,\vs(\vy)\cdot\mathbb{E}[\vxi\mid\vy]\right]+C_1,
\end{equation}
where $C_1=\frac{1}{d}\mathbb{E}\lVert\vxi\rVert^2$ does not depend on $\vs$. To evaluate $\mathbb{E}[\vxi\mid\vy]$, note that for each component $\nu$ the conditional score of the Gaussian kernel satisfies $\nabla_\vy\log p(\vy\mid\vx^\nu)=-({\vy-e^{-t}\vx^\nu})/{\Delta_t}=-\vxi^\nu/\sqrt{\Delta_t}$, so $\vxi^\nu=-\sqrt{\Delta_t}\nabla_\vy\log p(\vy\mid\vx^\nu)$. Taking the posterior expectation over $\vx^\nu$ given $\vy$ under $\hat{P}_t$,
\begin{equation}
    \mathbb{E}[\vxi\mid\vy]
    =-\sqrt{\Delta_t}\,\mathbb{E}_{\vx^\nu\mid\vy}\!\left[\nabla_\vy\log p(\vy\mid\vx^\nu)\right]
    =-\sqrt{\Delta_t}\,\nabla_\vy\log\hat{P}_t(\vy)
    =-\sqrt{\Delta_t}\,\vs_{\mathrm{emp}}(\vy),
\end{equation}
where the second equality uses $\nabla_\vy\log\hat{P}_t(\vy)=\mathbb{E}_{\vx^\nu\mid\vy}[\nabla_\vy\log p(\vy\mid\vx^\nu)]$. Substituting back and completing the square gives the result, with $C=C_1-\frac{\Delta_t}{d}\mathbb{E}_{\vy}\lVert\vs_{\mathrm{emp}}(\vy)\rVert^2$. The vanishing of $C$ as $d\to\infty$ with $t=O(1)$ follows from $\vs_{\mathrm{emp}}(e^{-t}\vx^\nu+\sqrt{\Delta_t}\,\vxi)\to-\vxi/\sqrt{\Delta_t}$, which gives $\Linfty(\vs_{\mathrm{emp}})\to 0$, i.e.\ $C\to 0$.
\end{proof}
Hence, in our regime, the smaller $\Linfty$ is, the closer the learned score is to the empirical score and hence to memorization.

For the random-feature score $\vs_{\vA}(\vx)=\frac{\vA}{\sqrt{p}}\sigma(\frac{\vW\vx}{\sqrt{d}})$, writing $\vs(\vx_t)$ for the model at the noised input $\vx_t=e^{-t}\vx+\sqrt{\Delta_t}\vxi$ and expanding the square,
\begin{multline}
    \mathcal{L}(\vs)
    =\frac{1}{d}\mathbb{E}[\lVert\vxi\rVert^2]
     +\frac{2\sqrt{\Delta_t}}{d}\Tr\frac{\vA}{\sqrt{p}}\mathbb{E}\!\left[\sigma(\tfrac{\vW\vx_t}{\sqrt{d}})\,\vxi^T\right]
     \\+\frac{\Delta_t}{d}\Tr\!\left(\frac{\vA}{\sqrt{p}}\mathbb{E}\!\left[\sigma(\tfrac{\vW\vx_t}{\sqrt{d}})\sigma(\tfrac{\vW\vx_t}{\sqrt{d}})^T\right]\frac{\vA^T}{\sqrt{p}}\right)
     +\frac{\Delta_t\lambda}{pd}\lVert\vA\rVert_F^2,
\end{multline}
where we added the ridge penalty $\frac{\Delta_t\lambda}{pd}\lVert\vA\rVert_F^2$. In all three cases $\frac{1}{d}\mathbb{E}[\lVert\vxi\rVert^2]$ concentrates to 1. Introducing the feature--noise and feature--feature correlations
\begin{align}
    &\tilde{\vV}=\tfrac{1}{\sqrt{\Delta_t}}\mathbb{E}_{\vx,\vxi}[\sigma(\tfrac{\vW\vx_t}{\sqrt{d}})\vxi^T],
    \qquad
    \tilde{\vU}=\mathbb{E}_{\vx_t\sim p_t}[\sigma(\tfrac{\vW\vx_t}{\sqrt{d}})\sigma(\tfrac{\vW\vx_t}{\sqrt{d}})^T],\\
    &\vV_n^m=\tfrac{1}{\sqrt{\Delta_t}nm}\sum_{\nu,\mu}\sigma(\tfrac{\vW(e^{-t}\vx^\nu+\sqrt{\Delta_t}\vxi^{\nu\mu})}{\sqrt{d}})(\vxi^{\nu\mu})^T,
    \quad
    \vU_n^m=\tfrac{1}{nm}\sum_{\nu,\mu}\sigma(\tfrac{\vW\vx_t^{\nu\mu}}{\sqrt{d}})\sigma(\tfrac{\vW\vx_t^{\nu\mu}}{\sqrt{d}})^T,\\
    &\vV_n^\infty=\tfrac{1}{\sqrt{\Delta_t}n}\sum_{\nu}\mathbb{E}_{\vxi}[\sigma(\tfrac{\vW(e^{-t}\vx^\nu+\sqrt{\Delta_t}\vxi)}{\sqrt{d}})\vxi^T],
    \quad
    \vU_n^\infty=\tfrac{1}{n}\sum_{\nu}\mathbb{E}_{\vxi}[\sigma(\tfrac{\vW\vx_t^{\nu}}{\sqrt{d}})\sigma(\tfrac{\vW\vx_t^{\nu}}{\sqrt{d}})^T],
\end{align}
the regularized empirical risk minimizer is the ridge estimator
\begin{equation}
    \frac{\vA_{\mathrm{ERM}}}{\sqrt{p}}=-(\vV_n^m)^T\,(\vU_n^m+\lambda\vI_p)^{-1}.
\end{equation}
Substituting it back, the three losses read
\begin{align}
   \Ltrain
    &=1
    -\frac{2\Delta_t}{d}\Tr\!\big((\vV_n^m)^T (\vU_n^m+\lambda\vI_p)^{-1}\vV_n^m\big)\\
    &\qquad+\frac{\Delta_t}{d}\Tr\!\big((\vV_n^m)^T (\vU_n^m+\lambda\vI_p)^{-1}\vU_n^m(\vU_n^m+\lambda\vI_p)^{-1}\vV_n^m\big)\notag\\
    &\qquad+\frac{\Delta_t\lambda}{d}\Tr\!\big((\vV_n^m)^T (\vU_n^m+\lambda\vI_p)^{-2}\vV_n^m\big),\\
     \Linfty
    &=1-\frac{2\Delta_t}{d}\Tr\!\big((\vV_n^m)^T (\vU_n^m+\lambda\vI_p)^{-1}\vV_n^\infty\big)\\
    &\qquad+\frac{\Delta_t}{d}\Tr\!\big((\vV_n^m)^T (\vU_n^m+\lambda\vI_p)^{-1}\vU_n^\infty (\vU_n^m+\lambda\vI_p)^{-1}\vV_n^m\big),\\
    \Ltest
    &=1-\frac{2\Delta_t}{d}\Tr\!\big((\vV_n^m)^T (\vU_n^m+\lambda\vI_p)^{-1}\tilde{\vV}\big)\\
    &\qquad+\frac{\Delta_t}{d}\Tr\!\big((\vV_n^m)^T (\vU_n^m+\lambda\vI_p)^{-1}\tilde{\vU} (\vU_n^m+\lambda\vI_p)^{-1}\vV_n^m\big).
\end{align}

\subsection{Assumptions and notation}
\label{app:assumptions_notation}
In this section we recall the assumptions made in the analytical part as well as the notations used.

The assumptions are
\begin{enumerate}[label=\textbf{(A\arabic*)},ref=A\arabic*,leftmargin=3.4em]

\item\label{ass:data} \textbf{Data distribution.} The training samples $\vx^\nu$ are i.i.d. samples from $P_0$ and the frozen noises are i.i.d. from $\mathcal{N}(0,\vI_d)$. We present the result with $P_0=\mathcal{N}(0,\vI_d)$ but in the appendix we extend the results to $P_0$ with zero mean, sub Gaussian and with a covariance $\vSigma$ that admits a limiting spectral distribution $\rho_{\vSigma}$ and verifies $\Tr(\vSigma)/d\to \sigma_{\vx}^2=O_d(1)$. 

\item\label{ass:model} \textbf{RFNN Architecture.} The score is modeled by the RFNN
$\vs_{\vA}(\vy)=\frac{\vA}{\sqrt p}\sigma\!\big(\frac{\vW\vy}{\sqrt d}\big)$,
where the first-layer weights $\vW\in\mathbb{R}^{p\times d}$ have i.i.d.\
$\mathcal{N}(0,1)$ entries and are frozen and independent of the data and of the
noises, while only the read-out $\vA\in\mathbb{R}^{d\times p}$ is trained.

\item\label{ass:activation} \textbf{Activation Function.} The activation $\sigma$ admits the Hermite expansion
$\sigma(z)=\sum_{s\geq0}\frac{c_s}{s!}He_s(z)$ with
$\E_{z\sim\mathcal{N}(0,1)}[\sigma(z)^2]=\sum_{s\geq0}\frac{c_s^2}{s!}<\infty$,
and is centered,
$\mu_0=c_0=\E_{z\sim\mathcal{N}(0,1)}[\sigma(z)]=0$. We write
$\mu_1=c_1=\E[z\,\sigma(z)]$ and
$\mu_*^2=\E[\sigma(z)^2]-\mu_1^2=\sum_{s\geq2}\frac{c_s^2}{s!}$. \looseness=-1

\item\label{ass:limit} \textbf{Asymptotic limit.} We consider
the proportional regime $n,p,d\to\infty$ with
$n/d\to\psi_n\in(0,\infty)$ and $p/d\to\psi_p\in(0,\infty)$, the remaining
parameters $m\in\mathbb{N}^*$, $t>0$ being held fixed and $O(1)$. The two further limits we consider,
$\psi_p\to\infty$ (Appendix~\ref{app:psip_infty_limit}) and $m\to\infty$
(Appendix~\ref{app:m_infty_limit}), are always taken after $d\to\infty$.

\end{enumerate}

\paragraph*{Index conventions.}
Unless stated otherwise we use the Einstein summation convention, repeated
indices being implicitly summed over. Latin indices $i,j,k\in\{1,\dots,d\}$ label
input coordinates, Greek indices $\alpha,\beta\in\{1,\dots,p\}$ label features
(hidden units), $\nu,\nu'\in\{1,\dots,n\}$ label training samples,
$\mu,\mu'\in\{1,\dots,m\}$ label noise realizations, and $a,b\in\{1,\dots,s\}$
label replicas. Bold upright symbols denote vectors, matrices and
tensors, and $\vI_k$ is the $k\times k$ identity.

\begin{center}\small
\begin{tabular}{@{}lp{0.63\linewidth}@{}}
\hline
$d,\;n,\;p$ & input dimension, number of training samples, number of features \\
$m$ & number of frozen noise realizations per training sample \\
$\psi_n=n/d$, $\psi_p=p/d$ & sample and parameter ratios \\
$t$, $\Delta_t=1-e^{-2t}$ & diffusion time and variance of the forward kernel \\
$\lambda$, $\tilde\lambda=\lambda/\psi_p$ & ridge strength, in the $O(1)$ and $O(\psi_p)$ scalings of Appendix~\ref{app:psip_infty_limit} \\
$\vSigma$, $\rho_{\vSigma}$ & data covariance ($\vSigma=\vI_d$ in the main text) and its limiting spectral density \\
$\vx^\nu\in\mathbb{R}^d$ & training sample $\nu$; $\mathcal{D}=\{\vx^\nu\}_{\nu\leq n}$ is the training set \\
$\vxi\in\mathbb{R}^{d\times nm}$ & frozen training noises \\
$\vY\in\mathbb{R}^{d\times nm}$ & noised inputs, $\vY^{\nu\mu}_i=e^{-t}\vx^\nu_i+\sqrt{\Delta_t}\,\vxi^{\nu\mu}_i$ \\
$\vW\in\mathbb{R}^{p\times d}$, $\vA\in\mathbb{R}^{d\times p}$ & frozen first layer and trained read-out \\
$\vF\in\mathbb{R}^{p\times nm}$ & feature matrix $\sigma(\vW\vY/\sqrt d)$ \\
$\vv\in\mathbb{R}^m$ & all-ones vector, $\vv^\mu=1$\\
$\one_m=\vv\vv^T$ & all-ones matrix, $\one_m\in\mathbb{R}^{m\times m}$ \\

$\mu_1$ &  $\E[z\sigma(z)]$  \\
$\mu_*^2$ & $\E[\sigma(z)^2]-\mu_1^2$ \\
$\kappa$ & $\dfrac{1}{\mu_*^2}\,\E_{u,v,w}\big[\big(\sigma(e^{-t}u+\sqrt{\Delta_t}\,v)-\mu_1e^{-t}u\big)$ \newline
\hspace*{1.2em}$\times\big(\sigma(e^{-t}u+\sqrt{\Delta_t}\,w)-\mu_1e^{-t}u\big)\big]$, \newline
$u,v,w\sim\mathcal{N}(0,1)$ independent \\
$\Ltrain$, $\Linfty$, $\Ltest$ & train loss at finite $m$, empirical test loss ($m=\infty$ at fixed $\mathcal{D}$), and population test loss, Eqs.~\ref{eq:train_loss}, ~\ref{eq:m_infty_loss} and ~\ref{eq:test_loss} \\
$\mathcal{B}^2$, $\mathcal{V}$ & bias, variance\\
\hline
\end{tabular}
\end{center}

\subsection{Gaussian Equivalence Principle}
\label{app:gep}
According to the Gaussian Equivalence Principle (GEP)~\citep{Pennington_2017,Peche2019,goldt2020modeling,goldt_2021,mei2020,hu2023}, nonlinear random matrices of the form $\vF=\sigma(\frac{\vW\vY}{\sqrt{d}})$ have asymptotically the same limiting spectral distribution and resolvent traces as linear matrices with the same first two moments,
\begin{equation}
    \vF=\sigma\!\left(\tfrac{\vW\vY}{\sqrt{d}}\right)\ \longrightarrow\ \mu_0\,\one\one^T+\mu_1\tfrac{\vW\vY}{\sqrt{d}}+\mu_*\vOmega,
\end{equation}
with scalar constants $\mu_0,\mu_1,\mu_*$ and $\vOmega$ a Gaussian tensor independent of $\vW$ and $\vY$. Since the data are sampled from $\mathcal{N}(0,\vI_d)$ and we assumed that $\mathbb{E}_{z\sim\mathcal{N}(0,1)}[\sigma(z)]=0$, we have $\mu_0=0$. Matching the first two moments of the tensor $\vF_\alpha^{\nu\mu}$ requires $\mu_1=\mathbb{E}[\sigma(z)z]$, $\mu_*^2=\mathbb{E}[\sigma^2(z)]-\mu_1^2$, and
\begin{equation}
    \mathbb{E}[\vOmega^{\nu\mu}_\alpha\vOmega^{\nu'\mu'}_\beta]=\delta_{\alpha\beta}\delta^{\nu\nu'}\bigl(\delta^{\mu\mu'}+\kappa(1-\delta^{\mu\mu'})\bigr),
\end{equation}
with
\begin{equation}\label{eq:app-kappa}
    \kappa=\frac{1}{\mu_*^2}\,\mathbb{E}_{u,v,w}\!\left[\big(\sigma(e^{-t}u+\sqrt{\Delta_t}v)-\mu_1e^{-t}u\big)\big(\sigma(e^{-t}u+\sqrt{\Delta_t}w)-\mu_1e^{-t}u\big)\right],
\end{equation}
$u,v,w\sim\mathcal{N}(0,1)$ independent. We now replace $\vF$ by its Gaussian equivalent in the matrices $\vU$ and $\vV$.\footnote{With a slight abuse of notation we use the equal sign $=$ for two matrices that have asymptotically the same spectral densities.} Using $\mathbb{E}[\vY\vxi^T]=\sqrt{\Delta_t}\,\vI_d$ and $\mathbb{E}[\vOmega\vxi^T]=0$,
\begin{equation}
    \tilde{\vV}=\frac{1}{\sqrt{\Delta_t}}\mathbb{E}_{\vx,\vxi}\!\left[\Big(\mu_1\tfrac{\vW\vY}{\sqrt{d}}+\mu_*\vOmega\Big)\vxi^T\right]=\mu_1\frac{\vW}{\sqrt{d}}=\vV_n^\infty,
    \qquad
    \tilde{\vU}=\mu_1^2\frac{\vW\vW^T}{d}+\mu_*^2\vI_p.
\end{equation}
For $\vV_n^m$ and $\vU_n^m$ there is no expectation to compute and one simply substitutes the surrogate,
\begin{align}
    \vV_n^m&=\frac{1}{\sqrt{\Delta_t}nm}\sum_{\nu,\mu}\Big(\mu_1\tfrac{\vW(e^{-t}\vx^\nu+\sqrt{\Delta_t}\vxi^{\nu\mu})}{\sqrt{d}}+\mu_*\vOmega^{\nu\mu}\Big)(\vxi^{\nu\mu})^T,\\
    \vU_n^m&=\frac{1}{nm}\sum_{\nu,\mu}\Big(\mu_1\tfrac{\vW(e^{-t}\vx^\nu+\sqrt{\Delta_t}\vxi^{\nu\mu})}{\sqrt{d}}+\mu_*\vOmega^{\nu\mu}\Big)\Big(\mu_1\tfrac{\vW(e^{-t}\vx^\nu+\sqrt{\Delta_t}\vxi^{\nu\mu})}{\sqrt{d}}+\mu_*\vOmega^{\nu\mu}\Big)^T.
\end{align}

\paragraph*{Gaussian equivalent of $\vU_n^\infty$.}
The matrix $\vU_n^\infty=\frac1n\sum_\nu\mathbb{E}_{\vxi}[\vF\vF^T]$ is more involved because the expectation is taken on the noise only. Its GEP has been derived in \citet{george_2025,bonnaire2025diffusionmodelsdontmemorize}.\footnote{See Lemma~B.1 of \citet{george_2025} and Lemma~C.1 of \citet{bonnaire2025diffusionmodelsdontmemorize}.} We introduce the scalar constants
\begin{align}
    v_t^2&=\mathbb{E}_{u,v,w}[\sigma(e^{-t}u+\sqrt{\Delta_t}v)\sigma(e^{-t}u+\sqrt{\Delta_t}w)]-\bigl(\mathbb{E}_{u,v}[\sigma(e^{-t}u+\sqrt{\Delta_t}v)\,u]\bigr)^2,\\
    s_t^2&=\mathbb{E}_u[\sigma(u)^2]-\mathbb{E}_{u,v,w}[\sigma(e^{-t}u+\sqrt{\Delta_t}v)\sigma(e^{-t}u+\sqrt{\Delta_t}w)]-\bigl(\mathbb{E}_{u,v}[v\,\sigma(e^{-t}u+\sqrt{\Delta_t}v)]\bigr)^2,
\end{align}
which, as we show below, coincide with $v_t^2=\mu_*^2\kappa$ and $s_t^2=\mu_*^2(1-\kappa)$. Recall the Hermite expansion $\sigma(z)=\sum_{s\ge0}\frac{c_s}{s!}He_s(z)$, with $\mathbb{E}[He_sHe_{s'}]=s!\,\delta_{ss'}$, $c_0=0$, $c_1=\mu_1$ and $\mu_*^2=\sum_{s\ge2}\frac{c_s^2}{s!}$. Write $\vh_\alpha^\nu(\vxi)=\frac{\vW_\alpha\cdot(e^{-t}\vx^\nu+\sqrt{\Delta_t}\vxi)}{\sqrt d}$ for the preactivation, and
\begin{equation}
    R(h)=\sigma(h)-\mu_1 h=\sum_{s\ge2}\frac{c_s}{s!}He_s(h),
    \qquad
    \sigma_0(g)=\mathbb{E}_{w\sim\mathcal{N}(0,1)}[\sigma(g+\sqrt{\Delta_t}w)].
\end{equation}

\emph{Diagonal terms.} For $\alpha=\beta$, $\vh_\alpha^\nu(\vxi) \sim \mathcal{N}(0,1)$ asymptotically, and the data average concentrates, so up to $O(1/n)$\footnote{Terms of order $O(1/d)$ are irrelevant to the asymptotic spectrum.} we have,
\begin{equation}
    (\vU_n^\infty)_{\alpha\alpha}=\mathbb{E}_{\vxi}[\sigma(\vh_\alpha^\nu)^2]=\mathbb{E}_{z\sim\mathcal{N}(0,1)}[\sigma(z)^2]=\mu_1^2+\mu_*^2=\lVert\sigma\rVert^2.
\end{equation}

\emph{Off-diagonal terms.} Fix $\alpha\neq\beta$, freeze $\vW,\vx^\nu$, and write $\vh_\alpha^\nu=\vg_\alpha^\nu+\sqrt{\Delta_t}u_\alpha$ with $\vg_\alpha^\nu=\frac{e^{-t}\vW_\alpha\cdot\vx^\nu}{\sqrt d}\sim\mathcal{N}(0,e^{-2t})$ and $u_\alpha=\frac{\vW_\alpha\cdot\vxi}{\sqrt d}$, so $\mathbb{E}_{\vxi}[u_\alpha u_\beta]=\rho_{\alpha\beta}=\frac{\vW_\alpha\cdot\vW_\beta}{d}=O(d^{-1/2})$. The Mehler--Kibble formula \citep{kibble1945} gives
\begin{equation}
    \mathbb{E}_{\vxi}[\sigma(\vh_\alpha^\nu)\sigma(\vh_\beta^\nu)]
    =\sum_{s\ge0}\frac{\rho_{\alpha\beta}^s}{s!}\,\mathbb{E}_u[He_s(u)\sigma(\vg_\alpha^\nu+\sqrt{\Delta_t}u)]\,\mathbb{E}_u[He_s(u)\sigma(\vg_\beta^\nu+\sqrt{\Delta_t}u)].
\end{equation}
Terms $s\ge2$ are $O(\rho^2)=O(1/d)$ and are dropped. The $s=0$ term is $\sigma_0(\vg_\alpha^\nu)\sigma_0(\vg_\beta^\nu)$, while averaging the $s=1$ coefficient over the data gives $\rho_{\alpha\beta}(\mathbb{E}_{g,u}[u\,\sigma(g+\sqrt{\Delta_t}u)])^2=\rho_{\alpha\beta}\,\Delta_t\mu_1^2$, where Stein's lemma yields $\mathbb{E}[u\,\sigma(h)]=\sqrt{\Delta_t}\mu_1$ with $h=g+\sqrt{\Delta_t}u$. Hence
\begin{equation}
    (\vU_n^\infty)_{\alpha\beta}=\frac1n\sum_\nu\sigma_0(\vg_\alpha^\nu)\sigma_0(\vg_\beta^\nu)+\Delta_t\mu_1^2\frac{\vW_\alpha\cdot\vW_\beta}{d}+O(1/d).
\end{equation}
The field $\vg_\alpha^\nu$ is Gaussian and $\sigma_0$ smooth, so a second Gaussian equivalence applies, $\sigma_0(\vg_\alpha^\nu)\to\mu_1 \vg_\alpha^\nu+v_t\veta_\alpha^\nu$ with $\veta_\alpha^\nu\sim\mathcal{N}(0,1)$. The linear coefficient is $\mu_1$ (Stein's lemma) and the residual variance is
\begin{equation}
    v_t^2=\mathbb{E}_g[\sigma_0(g)^2]-\mu_1^2e^{-2t},
    \qquad
    \mathbb{E}_g[\sigma_0(g)^2]=\sum_{s\ge0}\frac{c_s^2}{s!}e^{-2st},
\end{equation}
using Mehler with shared $g\sim\mathcal{N}(0,e^{-2t})$. Since $\sum_{s\ge2}\frac{c_s^2}{s!}e^{-2st}=\mu_*^2\kappa$, we obtain $v_t^2=\mu_*^2\kappa$, i.e.\ $v_t=\mu_*\sqrt{\kappa}$. On the diagonal the same surrogate gives $\mu_1^2+\mu_*^2\kappa$; the deficit $\mu_*^2(1-\kappa)$ relative to $\lVert\sigma\rVert^2=\mu_1^2+\mu_*^2$ is restored by an isotropic term, giving $s_t^2=\mu_*^2(1-\kappa)$ and
\begin{equation}
    \vU_n^\infty=\frac{\vG}{\sqrt n}\frac{\vG^T}{\sqrt n}+\Delta_t\mu_1^2\frac{\vW\vW^T}{d}+\mu_*^2(1-\kappa)\vI_p,
    \qquad
    \vG_\alpha^\nu=e^{-t}\mu_1\tfrac{\vW_\alpha\cdot\vx^\nu}{\sqrt d}+v_t\veta_\alpha^\nu.
\end{equation}
The surrogate $\veta$ is the noise-average of the residual $\vOmega$ and is correlated with it. Writing $R^{\nu\mu}=\mu_*\vOmega^{\nu\mu}$ and $R_0^\nu=v_t\veta^\nu=\mathbb{E}_w[R(\vg^\nu+\sqrt{\Delta_t}w)]=\lim_{m\to\infty}\frac1m\sum_{\mu'}R^{\nu\mu'}$, conditioning on $\vg^\nu$ gives $\mathbb{E}[R^{\nu\mu}R_0^\nu]=\mathbb{E}_g[(R_0^\nu)^2]=v_t^2=\mu_*^2\kappa$, hence
\begin{equation}
    \mathbb{E}[\vOmega_\alpha^{\nu\mu}\veta_\beta^{\nu'}]=\frac{\mathbb{E}[R^{\nu\mu}R_0^\nu]}{\mu_*\,v_t}\,\delta_{\alpha\beta}\delta^{\nu\nu'}=\sqrt{\kappa}\,\delta_{\alpha\beta}\delta^{\nu\nu'}.
\end{equation}

\paragraph*{Summary.} The GEP yields $\tilde{\vV}=\vV_n^\infty=\mu_1\frac{\vW}{\sqrt d}$, $\tilde{\vU}=\mu_1^2\frac{\vW\vW^T}{d}+\mu_*^2\vI_p$, $\vV_n^m,\vU_n^m$ as above, and
\begin{equation}
    \vU_n^\infty=\frac1n\sum_\nu\big(e^{-t}\mu_1\tfrac{\vW\vx^\nu}{\sqrt d}+v_t\veta^\nu\big)\big(e^{-t}\mu_1\tfrac{\vW\vx^\nu}{\sqrt d}+v_t\veta^\nu\big)^T+\Delta_t\mu_1^2\frac{\vW\vW^T}{d}+s_t^2\vI_p,
\end{equation}
with $v_t^2=\mu_*^2\kappa$, $s_t^2=\mu_*^2(1-\kappa)$. Here $\veta$ is a Gaussian noise with $\veta^\nu_\alpha\sim\mathcal{N}(0,1)$, independent of $\vW$ and $\vY$ but correlated with the residual $\vOmega$ through $\E[\vOmega_\alpha^{\nu\mu}\veta_\beta^{\nu'}]=\sqrt{\kappa}\,\delta_{\alpha\beta}\delta^{\nu\nu'}$.

\subsection{Gaussian-equivalent losses and trace decomposition}
From now on we denote by $\vF^{\nu\mu}_\alpha$ its Gaussian equivalent $\mu_1\frac{\vW\vY}{\sqrt d}+\mu_*\vOmega$, and by $\vH=\E_{\vxi}[\vF]\in\mathbb{R}^{p\times n}$ the noise-averaged feature matrix, with entries $\vH^\nu_\alpha=e^{-t}\mu_1\frac{\vW_\alpha\cdot\vx^\nu}{\sqrt d}+v_t\veta^\nu_\alpha$ and $v_t=\mu_*\sqrt\kappa$. Substituting the surrogates into the losses,
\begin{align}
    \Ltrain &=1 - \frac{1}{d(nm)^2}\Tr\!\left(\vxi \vF^T\mathcal{G}_{-\lambda,0,0} \vF\vxi^T\right),\\
    \Ltest &=1 - \frac{2\mu_1\sqrt{\Delta_t}}{dnm}\Tr\!\left(\vxi \vF^T\mathcal{G}_{-\lambda,0,0}\tfrac{\vW}{\sqrt d}\right)\notag\\
    &\quad + \frac{1}{d(nm)^2}\Tr\!\left(\vxi \vF^T\mathcal{G}_{-\lambda,0,0}\,\tilde\vU\,\mathcal{G}_{-\lambda,0,0} \vF\vxi^T\right),\\
    \Linfty&= 1 - \frac{2\mu_1\sqrt{\Delta_t}}{dnm}\Tr\!\left(\vxi \vF^T\mathcal{G}_{-\lambda,0,0}\tfrac{\vW}{\sqrt d}\right)\notag\\
    &\quad + \frac{1}{d(nm)^2}\Tr\!\left(\vxi \vF^T\mathcal{G}_{-\lambda,0,0}\,\vU_n^\infty\,\mathcal{G}_{-\lambda,0,0} \vF\vxi^T\right),
\end{align}
with $\tilde\vU=\mu_1^2\frac{\vW\vW^T}{d}+\mu_*^2\vI_p$ and $\vU_n^\infty=\frac{\vH\vH^T}{n}+\Delta_t\mu_1^2\frac{\vW\vW^T}{d}+s_t^2\vI_p$ the Gaussian-equivalent second-moment matrices of the summary above, $s_t^2=\mu_*^2(1-\kappa)$, and $\mathcal{G}_{z,\zeta,\epsilon}$ the resolvent Eq.~\ref{eq:resolvent} of the main text,
\begin{equation}\label{eq:app-resolvent}
    \mathcal{G}_{z,\zeta,\epsilon}=\Big(\tfrac{\vF\vF^T}{nm}-z\vI_p-\zeta\tfrac{\vW\vW^T}{d}+\epsilon\tfrac{\vH\vH^T}{n}\Big)^{-1},
    \qquad
    \mathcal{G}_{-\lambda,0,0}=\Big(\tfrac{\vF\vF^T}{nm}+\lambda\vI_p\Big)^{-1}.
\end{equation}
Every quadratic form appearing above is a derivative of the single generating trace $T_1$ of Eq.~\ref{eq:traces},
\begin{equation}\label{eq:app-T1T2}
    T_1(z,\zeta,\epsilon)=\frac{1}{d(nm)^2}\Tr\!\left(\vxi\vF^{T}\mathcal{G}_{z,\zeta,\epsilon}\vF\vxi^{T}\right),
    \qquad
    T_2=\frac{1}{dnm}\Tr\!\left(\vxi\vF^T\mathcal{G}_{-\lambda,0,0}\tfrac{\vW}{\sqrt d}\right).
\end{equation}
Indeed, differentiating Eq.\ref{eq:app-resolvent} gives $\partial_z\mathcal{G}=\mathcal{G}^2$, $\partial_\zeta\mathcal{G}=\mathcal{G}\frac{\vW\vW^T}{d}\mathcal{G}$ and $\partial_\epsilon\mathcal{G}=-\mathcal{G}\frac{\vH\vH^T}{n}\mathcal{G}$, so that, writing $\mathcal{G}=\mathcal{G}_{-\lambda,0,0}$ and evaluating all derivatives at $(z,\zeta,\epsilon)=(-\lambda,0,0)$,
\begin{align}\label{eq:app-T345}
    T_3&=-\partial_\epsilon T_1=\tfrac{1}{d(nm)^2}\Tr\!\left(\vxi\vF^T\mathcal{G}\tfrac{\vH\vH^T}{n}\mathcal{G}\vF\vxi^T\right),\notag\\
    T_4&=\partial_\zeta T_1=\tfrac{1}{d(nm)^2}\Tr\!\left(\vxi\vF^T\mathcal{G}\tfrac{\vW\vW^T}{d}\mathcal{G}\vF\vxi^T\right),\qquad
    T_5=\partial_z T_1=\tfrac{1}{d(nm)^2}\Tr\!\left(\vxi\vF^T\mathcal{G}^2\vF\vxi^T\right).
\end{align}
Collecting terms, the three losses read as announced in Eqs.~\ref{eq:loss_trainm}--\ref{eq:loss_traininf} of the main text,
\begin{align}
    \Ltrain&=1-T_1(-\lambda,0,0),\\
    \Ltest&=1-2\mu_1\sqrt{\Delta_t}\,T_2+\mu_1^2\,\partial_\zeta T_1\big|_{(-\lambda,0,0)}+\mu_*^2\,\partial_z T_1\big|_{(-\lambda,0,0)},\\
    \Linfty&=1-2\mu_1\sqrt{\Delta_t}\,T_2-\partial_\epsilon T_1\big|_{(-\lambda,0,0)}+\Delta_t\mu_1^2\,\partial_\zeta T_1\big|_{(-\lambda,0,0)}+\mu_*^2(1-\kappa)\,\partial_z T_1\big|_{(-\lambda,0,0)},
\end{align}
for which we use throughout the shorthands $T_3,T_4,T_5$ of Eq.\ref{eq:app-T345}. It remains to compute $T_1$ and $T_2$. Both are controlled by the order parameters of Eq.~\ref{eq:qr-def},
\begin{equation}
    q(z,\zeta,\epsilon)=\frac1p\Tr\,\mathcal{G}_{z,\zeta,\epsilon},\qquad
    r(z,\zeta,\epsilon)=\frac1p\Tr\!\Big(\tfrac{\vW^T}{\sqrt d}\,\mathcal{G}_{z,\zeta,\epsilon}\,\tfrac{\vW}{\sqrt d}\Big),
\end{equation}
which, as shown next, obey the self-consistent equations of Theorem~\ref{thm:main}. Proposition~\ref{prop:app-saddle} derives them at $\epsilon=0$, which suffices for $T_1,T_2,T_4,T_5$; the $\epsilon$-deformed saddle, needed only for $T_3$, is obtained in Appendix~\ref{app:traces}.

\subsection{Self-consistent equations for the Stieltjes transform}
\label{app:self-consistent}
\begin{proposition}[Self-consistent equations]\label{prop:app-saddle}
Let $\rho_{\vSigma}$ denote the limiting spectral distribution of the data covariance $\vSigma$. In the proportional limit the order parameters $(q,r,h)$ and the conjugates $(\hat r,\hat h)$ satisfy the fixed-point system
\begin{align}
  z\psip &= \frac{1-\psip}{q}-\frac{r}{q^2}
     +\frac{\psip\mu_*^2\tfrac{1+(m-1)\kappa}{m}}{L_1}
     +\frac{(m-1)\,\psip\mu_*^2\tfrac{1-\kappa}{m}}{L_2},\label{eq:gen-q}\\
  \hat r &= \frac1q-\psip\zeta+\frac{\psip\mu_1^2\tfrac{\Delta_t}{m}}{L_1}+\frac{(m-1)\,\psip\mu_1^2\tfrac{\Delta_t}{m}}{L_2},
  \qquad
  \hat h = \frac{\psip\mu_1^2 e^{-2t}}{L_1},\label{eq:gen-rh-hat}\\
  r &= \int\frac{\dd\rho_{\vSigma}(\lambda)}{\hat r+\lambda\hat h},
  \qquad
  h = \int\frac{\lambda\,\dd\rho_{\vSigma}(\lambda)}{\hat r+\lambda\hat h},\label{eq:gen-rh}
\end{align}
with $L_1=L_1(q,r,h)=1+\frac{\psip}{\psin}\mu_1^2e^{-2t}h+\frac{\psip}{\psin m}\big[\mu_1^2\Delta_t r+\mu_*^2 q(1+(m-1)\kappa)\big]$ and $L_2=L_2(q,r)=1+\frac{\psip}{\psin m}\big[\mu_1^2\Delta_t r+\mu_*^2 q(1-\kappa)\big]$.
\end{proposition}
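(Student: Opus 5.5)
The plan is a replica computation of $q=\frac1p\Tr\mathcal{G}_{z,\zeta,0}$ and $r$, following the toolkit of Section~\ref{sec:preliminaries}. I will write $\mathcal{G}_{z,\zeta,0}$ through the replica representation Eq.~\ref{eq:replica-inverse}, with $s$ replicas $\phi^a\in\mathbb{R}^p$ and the quadratic form $\phi^{aT}\big(\frac{\vF\vF^T}{nm}-z\vI_p-\zeta\frac{\vW\vW^T}{d}\big)\phi^a$. I will add a source term coupling to $\frac{\vW\vW^T}{d}$ so that both $q$ and $r$ can be read off from the same partition function. The term $\frac{1}{nm}\lVert\vF^T\phi^a\rVert^2$ will be linearized by a Hubbard--Stratonovich field $\chi^{a,\nu\mu}\in\mathbb{R}$, one per noisy point. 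The Gaussian-equivalent features $\vF=\mu_1\vW\vY/\sqrt d+\mu_*\vOmega$ of Appendix~\ref{app:gep} then enter linearly. This lets me average exactly over $\vOmega$, over the noises $\vxi$ and over the data $\vx$.

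The $\vOmega$ average produces, for each $\nu$, the $m\times m$ covariance $(1-\kappa)\vI_m+\kappa\one_m$ acting on $\chi^{a,\nu\cdot}$, multiplied by the overlap $\mu_*^2\,\phi^a\cdot\phi^b/p$. The $\vx$ and $\vxi$ averages produce the projections $\vW^T\phi^a/\sqrt d$. They generate the overlaps $\frac1p\phi^{aT}\frac{\vW\vW^T}{d}\phi^b$, weighted by $\Delta_t$ for the $\vxi$ part and by $e^{-2t}\vSigma$ for the $\vx$ part; the latter gives the $h$-type order parameter $\frac1p\phi^{aT}\frac{\vW\vSigma\vW^T}{d}\phi^b$. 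The key structural step is to diagonalize the $m\times m$ block. The all-ones direction $\vv$ has eigenvalue $1+(m-1)\kappa$ in the residual part, and it also carries the shared data contribution $e^{-2t}h$. The $m-1$ orthogonal directions have eigenvalue $1-\kappa$ and see only the noise term $\Delta_t r$. Integrating out $\chi$ therefore gives one log-determinant with argument $L_1$ and $m-1$ log-determinants with argument $L_2$, with the prefactors $1/m$ coming from the $1/(nm)$ normalization. This explains the $\frac{1+(m-1)\kappa}{m}$ and $(m-1)\frac{1-\kappa}{m}$ weights in Eq.~\ref{eq:gen-q}.

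Under the replica-symmetric ansatz $\phi^a\cdot\phi^b/p=q\delta_{ab}$ (and similarly for $r$ and $h$), I will enforce the remaining overlaps by delta functions with conjugates $\hat r,\hat h$. The $\phi$ integral against $\vW$ then becomes $\det(\hat r\vI_d+\hat h\vSigma)$, whose spectral integral over $\rho_{\vSigma}$ yields Eq.~\ref{eq:gen-rh}. The $q$-dependence also produces the Marchenko--Pastur-like terms $\frac{1-\psip}{q}-\frac{r}{q^2}$ and the $\frac1q$ in $\hat r$, coming from the $p$ versus $d$ dimensional mismatch of $\vW$. Taking $s\to0$ and applying the saddle point as $d\to\infty$, the stationarity conditions in $(q,r,h,\hat r,\hat h)$ give Eqs.~\ref{eq:gen-q}--\ref{eq:gen-rh}. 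The $-\psip\zeta$ shift comes directly from the $\zeta$ term, and $z\psip$ arises from $\partial_q$.

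The main obstacle is the bookkeeping of the correlated cluster structure: the $m$ noisy copies of each sample share $\vx^\nu$ and are coupled through $\kappa$, and the $\vx$ average must be combined with the $\vOmega$ covariance within the same $m\times m$ block. My first attempt will be to verify that these two pieces are simultaneously diagonal in the basis $\{\vv,\vv^\perp\}$, which I expect because both are proportional to $\one_m$ or $\vI_m$. As a sanity check, I will confirm that the result reduces to the known $m=1$ equations of \citet{george_2025} ($L_2$ disappears). Concentration of $q$ and $r$ is taken from the replica-symmetric justification cited above.
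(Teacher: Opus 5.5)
Your overall strategy is the paper's: replicas, Gaussian averages over $\vOmega$, $\vxi$ and $\vx$, the replica-symmetric diagonal ansatz, and the $\{\vv,\vv^\perp\}$ diagonalization of the $m\times m$ block. That block gives the term $\psi_n[\log L_1+(m-1)\log L_2]$ in the action. Your Hubbard--Stratonovich field $\chi^{a,\nu\mu}$ is a legitimate and somewhat cleaner variant of the paper's route. The paper integrates $\vOmega$ against its own law (through $\vC_\Omega$ and $\vJ_\Omega$) and then $(\vx,\vxi)$ (through $\det(\vI+\vK\vC_y)$), and needs a ``simplification'' step. With $\vF$ entering linearly, your three averages produce a single $m\times m$ covariance. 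The $\chi$-integral then gives $\det(\vI_m+\frac{p}{nm}\vSigma_G)^{-n/2}$ per replica directly, and its two eigenvalues are $L_1$ and $L_2$.

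The step that fails as you state it is the $\vW$-average.
\begin{itemize}
\item If you integrate $\phi^a\in\mathbb{R}^p$ at fixed $\vW$ against the conjugates of $(q,r,h)$, you get the $p\times p$ determinant $\det\big(\hat q\,\vI_p+\frac1d\vW(\hat r\vI_d+\hat h\vSigma)\vW^T\big)^{-s/2}$. This still has to be averaged over $\vW$, which is a Marchenko--Pastur-type problem. It is not $\det(\hat r\vI_d+\hat h\vSigma)$.
\item The terms $\frac{1-\psi_p}{q}-\frac{r}{q^2}$ and the $\frac1q$ in $\hat r$, which you attribute only to a ``$p$ versus $d$ mismatch'', are exactly what this average must produce. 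Without them the saddle-point system does not close.
\end{itemize}

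The missing idea is the paper's decoupling.
\begin{itemize}
\item Insert $1=\int\dd\omega^a\dd\hat\omega^a\,e^{i\hat\omega^a_i(\sqrt p\,\omega^a_i-\phi^a_\alpha\vW_{\alpha i})}$, so that $\omega^a=\vW^T\phi^a/\sqrt p\in\mathbb{R}^d$ becomes an independent integration variable, and average over $\vW$ first.
\item Since $\vW$ is i.i.d.\ Gaussian and independent of everything else, $\omega$ is, conditionally on $\phi$, Gaussian with covariance $\vQ\otimes\vI_d$. Integrating out $\hat\omega$ gives the weight $e^{-\frac d2\log\det\vQ-\frac d2\Tr(\vR\vQ^{-1})}$.
\item Only then does the $d$-dimensional $\omega$-integral against $(\hat r,\hat h)$ yield $\int\log(\hat r+\lambda\hat h)\,\dd\rho_{\vSigma}(\lambda)$.
\item The $p$-dimensional $\phi$-integral reduces to $\frac p2\log q$ once $\hat q=1/(2q)$ is eliminated.
\end{itemize}
The resulting action terms $-\psi_p\log q+\log q+\frac rq$ give $\frac{1-\psi_p}{q}-\frac{r}{q^2}$ under $\partial_q$ and the $\frac1q$ in $\hat r$ under $\partial_r$.
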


\paragraph*{Remark.}
For isotropic data $\vSigma=\vI_d$ one has $\rho_{\vSigma}=\delta(\lambda-1)$, so Eq.\ref{eq:gen-rh} gives $r=h=1/(\hat r+\hat h)$, i.e.\ $\hat r+\hat h=1/r$ and $L_1(q,r,r)=L_1(q,r)$. Adding the two equations of Eq.\ref{eq:gen-rh-hat} eliminates the conjugates and the system collapses to
\begin{align}
  \frac1r+\psip\zeta &= \frac1q
     +\frac{\mu_1^2\psip\!\left(e^{-2t}+\tfrac{\Delta_t}{m}\right)}{L_1(q,r)}
     +\frac{(m-1)\,\psip\mu_1^2\tfrac{\Delta_t}{m}}{L_2(q,r)},\\
  z\psip &= \frac{1-\psip}{q}-\frac{r}{q^2}
     +\frac{\psip\mu_*^2\tfrac{1+(m-1)\kappa}{m}}{L_1(q,r)}
     +\frac{(m-1)\,\psip\mu_*^2\tfrac{1-\kappa}{m}}{L_2(q,r)},
\end{align}
with $L_1(q,r)=1+\frac{\psip}{\psin m}\big[\mu_1^2\,r(e^{-2t}m+\Delta_t)+\mu_*^2\,q(1+(m-1)\kappa)\big]$ and $L_2(q,r)=1+\frac{\psip}{\psin m}\big[\mu_1^2\,r\,\Delta_t+\mu_*^2\,q(1-\kappa)\big]$. This is the self-consistent system verified by $(q,r)$ of Theorem~\ref{thm:main} of the main text.

We prove the system for general covariance matrix $\vSigma$ and then specialize to the case $\vSigma=\vI_d$.
\begin{proof}
Throughout the replica computations we use the \emph{Einstein summation convention} (repeated indices mean implicit summation), with feature indices $\alpha,\beta\in\{1,\dots,p\}$, Latin coordinate indices $i,j,k\in\{1,\dots,d\}$, sample indices $\nu,\nu'\in\{1,\dots,n\}$, noise indices $\mu,\mu'\in\{1,\dots,m\}$, and replica indices $a,b\in\{1,\dots,s\}$.
We compute $q(z,\zeta)=\frac1p\Tr(\vU-z\vI_p-\zeta\frac{\vW\vW^T}{d})^{-1}$ with $\vU_{\alpha\beta}=\frac{(\vF\vF^T)_{\alpha\beta}}{nm}$ and $\vU$ replaced by its Gaussian equivalent,
\begin{equation}
    \vU_{\alpha\beta}=\frac{1}{nm}\bigl(\mu_1 \vW_{\alpha i}\vY_i^{\nu\mu}+\mu_*\vOmega_\alpha^{\nu\mu}\bigr)\bigl(\mu_1 \vW_{\beta j}\vY_j^{\nu\mu}+\mu_*\vOmega_\beta^{\nu\mu}\bigr),
    \quad
    \vY_i^{\nu\mu}=e^{-t}\vSigma_{ij}^{1/2}\vx_j^\nu \vv^\mu+\sqrt{\Delta_t}\,\vxi_i^{\nu\mu},
\end{equation}
$\mathbb{E}[\vx_i^\nu \vx_j^{\nu'}]=\delta^{\nu\nu'}\vSigma_{ij}$, $\mathbb{E}[\vxi_i^{\nu\mu}\vxi_j^{\nu'\mu'}]=\delta_{ij}\delta^{\nu\nu'}\delta^{\mu\mu'}$, $\vv^\mu=1$. Write $\vM=\vM(z,\zeta)=\vU-z\vI_p-\zeta\tfrac{\vW\vW^T}{d}$, so that $q(z,\zeta)=\frac1p\Tr\,\vM^{-1}$. The trace of the inverse is the derivative of a log-determinant: since $\partial_z\vM=-\vI_p$ and $\partial_z\log\det\vM=\Tr(\vM^{-1}\partial_z\vM)$,
\begin{equation}\label{eq:app-q-logdet}
    q(z,\zeta)=\frac1p\Tr\,\vM^{-1}=-\frac1p\,\partial_z\log\det\vM=\frac2p\,\partial_z\log\det\vM^{-1/2}.
\end{equation}
We introduce the partition function
\begin{equation}\label{eq:app-Z}
   \mathcal{Z}= \det\vM^{-1/2}=\int\frac{\dd\phi}{(2\pi)^{p/2}}\,e^{-\frac12\phi^T\vM\phi},
\end{equation}
Combining Eq.\ref{eq:app-q-logdet}--Eq.\ref{eq:app-Z}, $q=\frac2p\,\partial_z\log\mathcal{Z}$; as $q$ concentrates, we replace it by its quenched average $\frac2p\,\partial_z\,\mathbb{E}\log\mathcal{Z}$.

The remaining obstacle is the average of a logarithm, which we handle with the replica method \citep{mezard1987spin}: the identity $\log x=\lim_{s\to0}(x^s-1)/s$ trades $\mathbb{E}\log\mathcal{Z}$ for the integer moments $\mathbb{E}\,\mathcal{Z}^s$,\footnote{Throughout the computation we discard non-exponential prefactors, as they give subleading contributions.}
\begin{equation}
    q(z,\zeta)=2\,\partial_z\lim_{s\to0}\lim_{p\to\infty}\frac{1}{ps}\,\mathbb{E}\!\left[\mathcal{Z}^s-1\right],
    \qquad
    \mathcal{Z}^s=\det\!\Big(\vU-z\vI_p-\zeta\tfrac{\vW\vW^T}{d}\Big)^{-s/2}.
\end{equation}
For integer $s$ the $s$-th moment is a product of $s$ identical Gaussian integrals over independent copies (\emph{replicas}) $\phi^a$, $a=1,\dots,s$, namely $\mathcal{Z}^s=\int\prod_{a=1}^s\frac{\dd\phi^a}{(2\pi)^{p/2}}\,e^{-\frac12\sum_a\phi^{aT}\vM\phi^a}$, and we average over the disorder before continuing to real $s\to0$. Carrying out this disorder average,
\begin{align}
    \mathbb{E}[\mathcal{Z}^s]&=\int\prod_a\dd\phi^a\;
    e^{\frac z2\phi^a\cdot\phi^a+\frac{\zeta}{2d}\phi^{aT}\vW\vW^T\phi^a}\notag\\
    &\times\mathbb{E}_{\vW,\vx,\vxi,\vOmega}\!\left[\exp\!\Big(-\tfrac{1}{2nm}\phi_\alpha^a(\mu_1 \vW_{\alpha i}\vY_i^{\nu\mu}+\mu_*\vOmega_\alpha^{\nu\mu})(\mu_1 \vW_{\beta j}\vY_j^{\nu\mu}+\mu_*\vOmega_\beta^{\nu\mu})\phi_\beta^a\Big)\right].
\end{align}
We decouple the $\vW$-dependence from the data with the auxiliary field
\begin{equation}
    1=\int\dd\omega^a\dd\hat\omega^a\;e^{i\hat\omega_i^a(\sqrt p\,\omega_i^a-\phi_\alpha^a \vW_{\alpha i})}.
\end{equation}

\paragraph*{Average over $\vOmega$.} Collecting all $\vOmega$-dependent terms and using the Gaussian toolkit,
\begin{align}
    \mathbb{E}_{\vOmega}&\!\left[\exp\!\Big(-\tfrac{\mu_*^2}{2nm}\phi_\alpha^a\phi_\beta^a\vOmega_\alpha^{\nu\mu}\vOmega_\beta^{\nu\mu}-\tfrac{\mu_1\mu_*}{nm}\phi_\alpha^a\phi_\beta^a \vW_{\alpha i}\vY_i^{\nu\mu}\vOmega_\beta^{\nu\mu}\Big)\right]\\
    &=e^{-\frac12\log\det \vC_\Omega}\,e^{\frac12 \vJ_\Omega^T \vC_\Omega^{-1}\vJ_\Omega},
\end{align}
with, using $\vW_{\alpha i}\phi_\alpha^a=\sqrt p\,\omega_i^a$,
\begin{align}
    (\vC_\Omega)_{\alpha\beta}^{\nu\mu\nu'\mu'}
    &=\vI_p\otimes\vI_n\otimes[\kappa\,\one_m+(1-\kappa)\vI_m]^{-1}+\frac{\mu_*^2}{nm}(\phi_\alpha^a\phi_\beta^a)\otimes\vI_n\otimes\vI_m,\\
    (\vJ_\Omega)_\beta^{\nu\mu}&=\frac{\mu_1\mu_*\sqrt p}{nm}\,\omega_i^a\,\vY_i^{\nu\mu}\,\phi_\beta^a.
\end{align}

\paragraph*{Average over $\vW$.} The only remaining $\vW$-dependent factor is
\begin{equation}
    \mathbb{E}_{\vW}\!\left[e^{-i\hat\omega_i^a\phi_\alpha^a \vW_{\alpha i}}\right]=e^{-\frac12\hat\omega_i^a\phi_\alpha^a\hat\omega_i^b\phi_\alpha^b},
\end{equation}
so that
\begin{align}
    \mathbb{E}[\mathcal{Z}^s]=\int\prod_a\dd\phi^a\dd\omega^a\dd\hat\omega^a\;
    &e^{\frac z2\phi^a\cdot\phi^a+\frac{p\zeta}{2d}\omega^a\cdot\omega^a-\frac12\hat\omega_i^a\phi_\alpha^a\hat\omega_i^b\phi_\alpha^b+i\sqrt p\,\hat\omega_i^a\omega_i^a}\notag\\
    &\times e^{-\frac12\log\det \vC_\Omega}\,\mathbb{E}_{\vx,\vxi}\!\left[e^{-\frac{\mu_1^2p}{2nm}\omega_i^a\omega_j^a \vY_i^{\nu\mu}\vY_j^{\nu\mu}+\frac12 \vJ_\Omega^T \vC_\Omega^{-1}\vJ_\Omega}\right].
\end{align}

\paragraph*{Average over $\vx$ and $\vxi$.} With $\vY_i^{\nu\mu}=e^{-t}\vSigma_{ij}^{1/2}\vx_j^\nu \vv^\mu+\sqrt{\Delta_t}\vxi_i^{\nu\mu}$, the data covariance and the quadratic form in the exponent read
\begin{align}
    \vK_{ij}^{\nu\mu\nu'\mu'}&=\mathbb{E}[\vY_i^{\nu\mu}\vY_j^{\nu'\mu'}]=\vI_n\otimes\bigl(e^{-2t}\vSigma\otimes\one_m+\Delta_t\vI_d\otimes\vI_m\bigr),\\
    (\vC_y)_{ij}^{\nu\mu\nu'\mu'}&=\frac{\mu_1^2p}{nm}\omega_i^a\omega_j^a\,\vI_n\otimes\vI_m-\frac{\mu_1^2\mu_*^2p}{(nm)^2}\omega_i^a\phi_\alpha^a\,\omega_j^b\phi_\beta^b\,(\vC_\Omega^{-1})_{\alpha\beta}^{\nu\mu\nu'\mu'}.
\end{align}
Integrating out $\vx,\vxi$ produces a determinant,
\begin{align}
    \mathbb{E}[\mathcal{Z}^s]=\int\prod_a\dd\phi^a\dd\omega^a\dd\hat\omega^a\;
    &e^{\frac z2\phi^a\cdot\phi^a+\frac{p\zeta}{2d}\omega^a\cdot\omega^a-\frac12\hat\omega_i^a\phi_\alpha^a\hat\omega_i^b\phi_\alpha^b+i\sqrt p\,\hat\omega_i^a\omega_i^a}\notag\\
    &\times e^{-\frac12\log\det \vC_\Omega}\,e^{-\frac12\log\det(\vI+\vK\vC_y)}.
\end{align}

\paragraph*{Integrating over $\hat\omega^a$.} The $\hat\omega$-integral is Gaussian,
\begin{equation}
    \int\dd\hat\omega^a\;e^{-\frac12\hat\omega_i^a\phi_\alpha^a\hat\omega_i^b\phi_\alpha^b+i\sqrt p\,\hat\omega_i^a\omega_i^a}=e^{-\frac d2\log\det(\phi^a\cdot\phi^b)}\,e^{-\frac p2\omega_i^a\omega_i^b(\phi^a\cdot\phi^b)^{-1}}.
\end{equation}

\paragraph*{Introducing the overlaps between replicas.} We insert
\begin{align}
    1&=\int\prod_{ab}\tfrac{\dd \vQ^{ab}\dd\hat{\vQ}^{ab}}{2\pi i/p}\;e^{\hat{\vQ}^{ab}(p\vQ^{ab}-\phi_\alpha^a\phi_\alpha^b)},\\
    1&=\int\prod_{ab}\tfrac{\dd \vR^{ab}\dd\hat{\vR}^{ab}}{2\pi i/d}\;e^{\hat{\vR}^{ab}(d\vR^{ab}-\omega_i^a\omega_i^b)},
    \qquad
    1=\int\prod_{ab}\tfrac{\dd \vH^{ab}\dd\hat{\vH}^{ab}}{2\pi i/d}\;e^{\hat{\vH}^{ab}(d\vH^{ab}-\omega_i^a\vSigma_{ij}\omega_j^b)},
\end{align}
so that
\begin{align}
    \mathbb{E}[\mathcal{Z}^s]={}&\int\prod_a\dd\phi^a\dd\omega^a\,\dd \vQ\dd\hat{\vQ}\dd \vR\dd\hat{\vR}\dd \vH\dd\hat{\vH}\notag\\
    &\times e^{\hat{\vQ}^{ab}(p\vQ^{ab}-\phi_\alpha^a\phi_\alpha^b)+\hat{\vR}^{ab}(d\vR^{ab}-\omega_i^a\omega_i^b)+\hat{\vH}^{ab}(d\vH^{ab}-\omega_i^a\vSigma_{ij}\omega_j^b)}\notag\\
    &\times e^{\frac{pz}{2}\Tr \vQ+\frac{p\zeta}{2}\Tr \vR}\,e^{-\frac d2\log\det \vQ}\,e^{-\frac d2\Tr(\vR\vQ^{-1})}\,e^{-\frac12\log\det(\cdot)},
\end{align}

where the remaining determinant, after simplification, is

\begin{align}
    \log\det(\cdot)={}&n\log\det\Big(\delta^{ab}[\kappa\one_m+(1-\kappa)\vI_m]^{-1}+\tfrac{\mu_*^2\psi_p}{\psi_n m}\vQ^{ab}\vI_m\notag\\
    &-\tfrac{\mu_1^2\mu_*^2\psi_p^2}{\psi_n^2m^2}\bigl(e^{-2t}\vH^{ac}\one_m+\Delta_t\vR^{ac}\vI_m\bigr)\vQ^{cb}\notag\\
    &+\tfrac{\mu_1^2\psi_p}{\psi_n m}\bigl[e^{-2t}\vH\one_m+\Delta_t\vR\vI_m\bigr]\bigl[\delta^{ab}(\kappa\one_m+(1-\kappa)\vI_m)^{-1}+\tfrac{\mu_*^2\psi_p}{\psi_n m}\vQ^{ab}\vI_m\bigr]\Big).
\end{align}

\paragraph*{Integrating out $\phi^a,\omega^a$.} These are now Gaussian,
\begin{equation}
    \int\prod_a\dd\phi^a\dd\omega^a\;e^{-\hat{\vQ}^{ab}\phi_\alpha^a\phi_\alpha^b-\hat{\vR}^{ab}\omega_i^a\omega_i^b-\hat{\vH}^{ab}\omega_i^a\vSigma_{ij}\omega_j^b}=e^{-\frac p2\log\det\hat{\vQ}}\,e^{-\frac12\log\det(\hat{\vR}\otimes\vI_d+\hat{\vH}\otimes\vSigma)},
\end{equation}
leaving a single integral over $(\vQ,\hat{\vQ},\vR,\hat{\vR},\vH,\hat{\vH})$.

\paragraph*{Replica-symmetric diagonal Ansatz.} We adopt the replica-symmetric diagonal ansatz \citep{mezard1987spin} $\vQ^{ab}=q\delta^{ab}$, $\vR^{ab}=r\delta^{ab}$, $\vH^{ab}=h\delta^{ab}$ (and likewise for the conjugates).\footnote{The off-diagonal overlaps $\vQ^{ab}$ for $a\neq b$ vanish at the saddle point; see e.g.~\citet{Ascoli_2020}.} Using $\log\det(a\vI_m+b\one_m)=\log(a+mb)+(m-1)\log a$, the saddle on $\hat q$ enforces $\hat q=1/(2q)$, and the effective action (divided by $2/sd$) reads
\begin{align}
\label{eq:app-action}
    S(q,r,\hat r,h,\hat h)
    =&-\psi_p\log q-r\hat r-h\hat h+\int\dd\rho_{\vSigma}(\lambda)\log(\hat r+\lambda\hat h)-\psi_p zq-\psi_p\zeta r+\log q+\frac rq\notag\\
    &+\psi_n\log\!\left[1+\frac{\mu_1^2e^{-2t}\psi_ph}{\psi_n}+\frac{\mu_1^2\Delta_t\psi_pr}{\psi_nm}+\bigl(1+(m-1)\kappa\bigr)\frac{\mu_*^2\psi_pq}{\psi_nm}\right]\notag\\
    &+\psi_n(m-1)\log\!\left[1+\frac{\mu_1^2\Delta_t\psi_pr}{\psi_nm}+(1-\kappa)\frac{\mu_*^2\psi_pq}{\psi_nm}\right].
\end{align}

\paragraph*{Saddle-point equations.} Define
\begin{align}
    L_1(q,r,h)&=1+\frac{\mu_1^2e^{-2t}\psi_ph}{\psi_n}+\frac{\mu_1^2\Delta_t\psi_pr}{\psi_nm}+\bigl(1+(m-1)\kappa\bigr)\frac{\mu_*^2\psi_pq}{\psi_nm},\\
    L_2(q,r)&=1+\frac{\mu_1^2\Delta_t\psi_pr}{\psi_nm}+(1-\kappa)\frac{\mu_*^2\psi_pq}{\psi_nm}.
\end{align}
Stationarity of Eq.\ref{eq:app-action} gives
\begin{align}
    \partial_q S=0&:\quad\psi_pz=\frac{1-\psi_p}{q}-\frac{r}{q^2}+\frac{(1+(m-1)\kappa)\mu_*^2\psi_p/m}{L_1}+\frac{(m-1)(1-\kappa)\mu_*^2\psi_p/m}{L_2},\\
    \partial_r S=0&:\quad\hat r=\frac1q-\psi_p\zeta+\frac{\mu_1^2\psi_p\Delta_t/m}{L_1}+\frac{(m-1)\mu_1^2\psi_p\Delta_t/m}{L_2},\\
    \partial_h S=0&:\quad\hat h=\frac{\mu_1^2\psi_pe^{-2t}}{L_1},\\
    \partial_{\hat r}S=0&:\quad r=\int\frac{\dd\rho_{\vSigma}(\lambda)}{\hat r+\lambda\hat h},
    \qquad
    \partial_{\hat h}S=0:\quad h=\int\frac{\lambda\,\dd\rho_{\vSigma}(\lambda)}{\hat r+\lambda\hat h}.
\end{align}
For the isotropic case $\rho_{\vSigma}=\delta(\lambda-1)$ one has $r=h=1/(\hat r+\hat h)$, i.e.\ $\hat r+\hat h=1/r$, and $h=r$ so $L_1(q,r,r)=L_1(q,r)$. Adding the $\hat r$ and $\hat h$ equations,
\begin{align}
    \frac1r+\psi_p\zeta&=\frac1q+\frac{\mu_1^2\psi_p(e^{-2t}+\frac{\Delta_t}{m})}{L_1(q,r)}+\frac{(m-1)\mu_1^2\psi_p\Delta_t/m}{L_2(q,r)},\\
    \psi_pz&=\frac{1-\psi_p}{q}-\frac{r}{q^2}+\frac{(1+(m-1)\kappa)\mu_*^2\psi_p/m}{L_1(q,r)}+\frac{(m-1)(1-\kappa)\mu_*^2\psi_p/m}{L_2(q,r)},
\end{align}
with $L_1(q,r)=1+\frac{\psi_p}{\psi_nm}[\mu_1^2r(e^{-2t}m+\Delta_t)+\mu_*^2q(1+(m-1)\kappa)]$ and $L_2(q,r)=1+\frac{\psi_p}{\psi_nm}[\mu_1^2r\Delta_t+\mu_*^2q(1-\kappa)]$. 
\end{proof}

\subsection{Computation of the traces}
\label{app:comp-traces}

\subsubsection{Computation of \texorpdfstring{$T_1$, $T_4$ and $T_5$}{T1, T4 and T5}}
We compute here the generating trace Eq.\ref{eq:app-T1T2} at $\epsilon=0$, which by Eq.\ref{eq:app-T345} also yields $T_4=\partial_\zeta T_1$ and $T_5=\partial_z T_1$,
\begin{equation}
    T_1(z,\zeta,0)=\frac{1}{d(nm)^2}\,\E\!\left[\Tr\!\big(\vxi\vF^{T}\mathcal{G}_{z,\zeta,0}\,\vF\vxi^{T}\big)\right],
    \quad
    \mathcal{G}_{z,\zeta,0}=\Big(\tfrac{\vF\vF^{T}}{nm}-z\vI_p-\zeta\tfrac{\vW\vW^{T}}{d}\Big)^{-1}.
\end{equation}
Using the replica representation of an inverse matrix Eq.\ref{eq:replica-inverse}, $$(\mathcal{G}_{z,\zeta,0})_{\alpha\beta}=\int\prod_a\dd\phi^a\,\phi^1_\alpha\phi^1_\beta\,e^{-\frac12\phi^a_{\alpha'}(\mathcal{G}_{z,\zeta,0}^{-1})_{\alpha'\beta'}\phi^a_{\beta'}},$$ we obtain
\begin{multline}
    T_1(z,\zeta,0)=\frac{1}{dnm}\,\vxi^{\nu\mu}_i\frac{\vF^{\nu\mu}_\alpha}{\sqrt{nm}}
    \\\times\int\prod_a\dd\phi^a\;\phi^1_\alpha\phi^1_\beta\;e^{-\frac12\phi^a_{\alpha'}\big(\frac{\vF^{\nu'\mu'}_{\alpha'}\vF^{\nu'\mu'}_{\beta'}}{nm}-z\delta_{\alpha'\beta'}-\zeta\frac{\vW_{\alpha'k}\vW_{\beta'k}}{d}\big)\phi^a_{\beta'}}\frac{\vF^{\nu''\mu''}_\beta}{\sqrt{nm}}\vxi^{\nu''\mu''}_i.
\end{multline}
We decouple the dependence on $\vW$ with the auxiliary field $1=\int\dd\omega^a\dd\hat\omega^a\,e^{i\hat\omega_i^a(\sqrt p\,\omega_i^a-\phi^a_\alpha \vW_{\alpha i})}$ and replace the contraction $\vF^{\nu\mu}_\alpha\phi^a_\alpha$ by the scalar Gaussian field
\begin{equation}
    \vG_a^{\nu\mu}=\mu_1\sqrt p\,\frac{\omega^a_i\vY_i^{\nu\mu}}{\sqrt d}+\mu_*\,\phi^a_\alpha\vOmega^{\nu\mu}_\alpha .
\end{equation}
As in the proof of Proposition~\ref{prop:app-saddle}, we introduce the replica overlaps $\vQ^{ab}=\frac1p\phi^a_\alpha\phi^b_\alpha$ and $\vR^{ab}=\frac1d\omega^a_i\omega^b_i$. The covariances of $\vG$ follow from $\E[\vY_i^{\nu\mu}\vY_j^{\nu'\mu'}]$ and the GEP covariance of $\vOmega$,
\begin{align}
    \E[\vG_a^{\nu\mu}\vG_b^{\nu'\mu'}]&=\mu_1^2 p\,\vR^{ab}\delta^{\nu\nu'}\big(e^{-2t}\vv^\mu \vv^{\mu'}+\Delta_t\delta^{\mu\mu'}\big)+\mu_*^2 p\,\vQ^{ab}\delta^{\nu\nu'}\big(\kappa \vv^\mu \vv^{\mu'}+(1-\kappa)\delta^{\mu\mu'}\big),\\
    \E[\vG_a^{\nu\mu}\vxi_i^{\nu'\mu'}]&=\mu_1\sqrt{\tfrac pd}\,\omega_i^a\sqrt{\Delta_t}\,\delta^{\nu\nu'}\delta^{\mu\mu'}.
\end{align}
Evaluating at the replica-symmetric saddle $(\vQ^{11},\vR^{11})=(q,r)$ of Proposition~\ref{prop:app-saddle}, we only need to compute the non-exponential prefactor: the action and the saddle-point equations are identical to those of Proposition~\ref{prop:app-saddle}. Writing $\langle\,\cdot\,\rangle_*$ for the Gaussian average over $(\vxi,\vG)$ \emph{tilted} by the replica weight $e^{-\frac{1}{2nm}\vG_a\vG_a^{T}}$,
\begin{equation}\label{eq:star-measure}
    \langle\mathcal{O}\rangle_*:=\frac{\mathbb{E}\big[\mathcal{O}\,e^{-\frac{1}{2nm}\vG_a\vG_a^{T}}\big]}{\mathbb{E}\big[e^{-\frac{1}{2nm}\vG_a\vG_a^{T}}\big]},
\end{equation}
the prefactor reads
\begin{equation}
    T_1(z,\zeta,0)=\frac{1}{nm}\,\big\langle \vxi_i^{\nu\mu}\vxi_i^{\nu'\mu'}\vG_1^{\nu\mu}\vG_1^{\nu'\mu'}\big\rangle_*.
\end{equation}
The covariance of $\vG_1$ over the noise index $\mu$ is the $m\times m$ matrix, where $\vv^\mu=1$ denotes that $\vv\in\mathbb{R}^m$ is the all-ones vector (since each data point $\vx^\nu$ is shared identically across all $m$ noise copies),
\begin{equation}
    (\vSigma_G)^{\mu\mu'}=\mu_1^2 r\big(e^{-2t}\vv^\mu \vv^{\mu'}+\Delta_t\delta^{\mu\mu'}\big)+\mu_*^2 q\big(\kappa \vv^\mu \vv^{\mu'}+(1-\kappa)\delta^{\mu\mu'}\big).
\end{equation}
Since $\vSigma_G$ has rank-1 plus diagonal structure, the matrix $\vPsi_m=(\vI_m+\frac{p}{nm}\vSigma_G)^{-1}$ has exactly two distinct eigenvalues whose inverses are
\begin{align}
    &L_1=1+\tfrac{\psip}{\psin m}\big[\mu_1^2 r(e^{-2t}m+\Delta_t)+\mu_*^2 q(1+(m-1)\kappa)\big]\ (\text{eigenvector }\vv),\\
    &L_2=1+\tfrac{\psip}{\psin m}\big[\mu_1^2 r\Delta_t+\mu_*^2 q(1-\kappa)\big]\ (\text{mult. }m-1),
\end{align}
so $\Tr\vPsi_m=\frac1{L_1}+\frac{m-1}{L_2}$.
The measure $\langle\cdot\rangle_*$ is Gaussian. At the RS saddle, $\vxi^{\nu\mu}$ is a standard Gaussian and $\vG_1^{\nu\mu}=\mu_1\sqrt{p}\,\omega^1_i\vY_i^{\nu\mu}/\sqrt{d}+\mu_*\phi^1_\alpha\vOmega^{\nu\mu}_\alpha$ is a linear function of the Gaussian variables $(\vxi,\vx^\nu,\vOmega)$, so $(\vxi,\vG_1)$ are jointly Gaussian. The tilt $e^{-\frac{1}{2nm}\sum_{\nu,\mu}(\vG_1^{\nu\mu})^2}$ is a quadratic exponential in $\vG_1$; tilting a Gaussian by a Gaussian weight yields another Gaussian (it shifts the precision matrix of $\vG_1$ by $\frac{1}{nm}\vI$), so $\langle\cdot\rangle_*$ is a Gaussian measure and Wick's theorem applies.

According to Wick's theorem, for jointly Gaussian fields, $\langle A_1A_2B_1B_2\rangle=\langle A_1A_2\rangle\langle B_1B_2\rangle+\langle A_1B_1\rangle\langle A_2B_2\rangle+\langle A_1B_2\rangle\langle A_2B_1\rangle$. With $A_k=\vxi_i^{\nu_k\mu_k}$ and $B_k=\vG_1^{\nu_k\mu_k}$,
\begin{align}
    \big\langle \vxi_i^{\nu\mu}\vxi_i^{\nu'\mu'}\vG_1^{\nu\mu}\vG_1^{\nu'\mu'}\big\rangle_*&=\underbrace{\langle\vxi^{\nu\mu}\vxi^{\nu'\mu'}\rangle_*\langle \vG^{\nu\mu}\vG^{\nu'\mu'}\rangle_*}_{\mathrm{P1}}
    +\underbrace{\langle\vxi^{\nu\mu}\vG^{\nu\mu}\rangle_*\langle\vxi^{\nu'\mu'}\vG^{\nu'\mu'}\rangle_*}_{\mathrm{P2}}\\
    &
    +\underbrace{\langle\vxi^{\nu'\mu'}\vG^{\nu\mu}\rangle_*\langle\vxi^{\nu\mu}\vG^{\nu'\mu'}\rangle_*}_{\mathrm{P3}}.
\end{align}
The tilt Eq.\ref{eq:star-measure} shifts the two-point functions. For jointly Gaussian $(\vxi,\vG)$, tilting by $e^{-\frac{1}{2nm}\vG\vG^T}$ shifts the precision of $\vG$ by $\frac{1}{nm}\vI$: $\vSigma_{GG,*}=(\vSigma_{GG}^{-1}+\frac{1}{nm}\vI)^{-1}$, $\vSigma_{\xi G,*}=\vSigma_{\xi G}(\vI+\frac{1}{nm}\vSigma_{GG})^{-1}$, and $\vSigma_{\xi\xi}$ receives an $O(1/nm)$ correction. Since $\vSigma_{GG}=p\vSigma_G$, one has $(\vI+\frac{p}{nm}\vSigma_G)^{-1}=\vPsi_m$; the noise therefore stays standard, $\langle\vxi_i^{\nu\mu}\vxi_j^{\nu'\mu'}\rangle_*=\delta_{ij}\delta^{\nu\nu'}\delta^{\mu\mu'}+O(1/nm)$, while
\begin{equation}
    \langle \vG^{\nu\mu}\vG^{\nu'\mu'}\rangle_*=\delta^{\nu\nu'}\,p\,(\vSigma_G\vPsi_m)^{\mu\mu'},
    \qquad
    \langle\vxi_i^{\nu\mu}\vG_1^{\nu'\mu'}\rangle_*=\mu_1\sqrt{\psip\Delta_t}\,\omega_i^1\,\delta^{\nu\nu'}(\vPsi_m)^{\mu\mu'}.
\end{equation}
Since $\vPsi_m^{-1}=\vI_m+\frac{p}{nm}\vSigma_G$, multiplying on the right by $\vPsi_m$ gives $\vI_m=\vPsi_m+\frac{p}{nm}\vSigma_G\vPsi_m$, i.e.\ $p\,\vSigma_G\vPsi_m=nm(\vI_m-\vPsi_m)$. The diagonal pairing then collapses (Einstein summation, $\delta_{ii}=d$):
\begin{align}
    \mathrm{P1}&=\frac{1}{d(nm)^2}\,\delta_{ii}\delta^{\nu\nu'}\delta^{\mu\mu'}\,\delta^{\nu\nu'}p(\vSigma_G\vPsi_m)^{\mu\mu'}
    =\frac{dnp}{d(nm)^2}\Tr(\vSigma_G\vPsi_m)\\
    &=\frac1m\big[m-\Tr\vPsi_m\big]
    =1-\frac{\Tr\vPsi_m}{m}.
\end{align}
The crossed pairings use $\frac1d\sum_i(\omega_i^1)^2=r$:
\begin{align}
    \mathrm{P2}&=\frac{1}{d(nm)^2}\langle\vxi_i^{\nu\mu}\vG_1^{\nu\mu}\rangle_*\langle\vxi_i^{\nu'\mu'}\vG_1^{\nu'\mu'}\rangle_*
    =\frac{\mu_1^2\psip\Delta_t}{d(nm)^2}(dr)\,n^2(\Tr\vPsi_m)^2
    =\frac{\mu_1^2\psip\Delta_t r}{m^2}(\Tr\vPsi_m)^2,\\
    \mathrm{P3}&=\frac{1}{d(nm)^2}\langle\vxi_i^{\nu\mu}\vG_1^{\nu'\mu'}\rangle_*\langle\vxi_i^{\nu'\mu'}\vG_1^{\nu\mu}\rangle_*\notag\\
    &=\frac{\mu_1^2\psip\Delta_t}{d(nm)^2}\,\omega_i^1\omega_i^1\,\delta^{\nu\nu'}\delta^{\nu'\nu}\,(\vPsi_m)^{\mu\mu'}(\vPsi_m)^{\mu'\mu}\notag\\
    &=\frac{\mu_1^2\psip\Delta_t}{d(nm)^2}\cdot dr\cdot n\cdot\Tr(\vPsi_m^2)
    =\frac{\mu_1^2\psip\Delta_t}{nm^2}\,r\,\Tr(\vPsi_m^2)=O(1/n),
\end{align}
In the second line of P3, the two Kronecker deltas $\delta^{\nu\nu'}\delta^{\nu'\nu}=\delta^{\nu\nu'}$ force $\nu=\nu'$, collapsing the double sum over samples to $n$ terms; then $\frac1d\sum_i(\omega_i^1)^2=r$ and $\sum_{\mu,\mu'}(\vPsi_m)^{\mu\mu'}(\vPsi_m)^{\mu'\mu}=\Tr(\vPsi_m^2)=\frac{1}{L_1^2}+\frac{m-1}{L_2^2}=O(1)$. The resulting factor $1/n$ makes P3 subleading. Collecting $\mathrm{P1}+\mathrm{P2}$ (and dropping the subleading $\mathrm{P3}$),
\begin{equation}
    T_1(z,\zeta,0)=f_1\big(q,r,0\big),
    \quad
    f_1(q,r,0)=1-\frac{\mathcal{T}}{m}+\frac{\mu_1^2\psip\Delta_t r}{m^2}\,\mathcal{T}^2,
    \quad
    \mathcal{T}=\Tr\vPsi_m=\frac1{L_1}+\frac{m-1}{L_2},
\end{equation}
which is claim \textit{(ii)} of Theorem~\ref{thm:main} for $T_1$ at $\epsilon=0$; the $\epsilon$-dependence of $f_1$ is obtained in Appendix~\ref{app:traces} below. Here $(q,r)=(q(z,\zeta,0),r(z,\zeta,0))$ is the saddle of Proposition~\ref{prop:app-saddle}, and $T_4=\partial_\zeta T_1|_{(-\lambda,0,0)}$, $T_5=\partial_z T_1|_{(-\lambda,0,0)}$ follow by differentiating $f_1$ through that saddle.

\subsubsection{Computation of $T_2$}
We wish to compute $T_2=\frac{1}{dnm}\Tr\!\big(\vxi\vF^T\mathcal{G}_{-\lambda,0,0}\frac{\vW}{\sqrt d}\big)$ of Eq.\ref{eq:app-T1T2}, with $\mathcal{G}_{-\lambda,0,0}=(\frac{\vF\vF^T}{nm}+\lambda\vI_p)^{-1}$. Applying the replica representation Eq.\ref{eq:replica-inverse} to $(\mathcal{G}_{-\lambda,0,0})_{\alpha\beta}$:
\begin{equation}
    T_2=\frac{1}{dnm}\,\E\!\left[\vxi^{\nu\mu}_i\vF^{\nu\mu}_\alpha\int\prod_a\dd\phi^a\;\phi^1_\alpha\phi^1_\beta\;
    e^{-\frac12\phi^a_{\alpha'}\!\left(\frac{\vF^{\nu'\mu'}_{\alpha'}\vF^{\nu'\mu'}_{\beta'}}{nm}+\lambda\delta_{\alpha'\beta'}\right)\!\phi^a_{\beta'}}\frac{\vW_{i\beta}}{\sqrt d}\right].
\end{equation}
We introduce the auxiliary field $1=\int\dd\omega^a\dd\hat\omega^a\,e^{i\hat\omega_i^a(\sqrt p\,\omega_i^a-\phi^a_\alpha\vW_{\alpha i})}$ and the replica overlaps $\vQ^{ab}=\frac1p\phi^a_\alpha\phi^b_\alpha$, $\vR^{ab}=\frac1d\omega^a_i\omega^b_i$, and replace $\vF^{\nu\mu}_\alpha\phi^a_\alpha$ by
\begin{equation}
    \vG_a^{\nu\mu}=\mu_1\sqrt p\,\frac{\omega^a_i\vY_i^{\nu\mu}}{\sqrt d}+\mu_*\,\phi^a_\alpha\vOmega^{\nu\mu}_\alpha.
\end{equation}
The contraction $\phi^1_\beta\frac{\vW_{i\beta}}{\sqrt d}=\sqrt{\psip}\,\omega^1_i$ then follows from the decoupling. The exponential action is identical to that in the $T_1$ computation above, so evaluating at the saddle $(q,r)$ we only need the prefactor. Since $\sqrt{\psip}\,\omega^1_i$ is constant with respect to the $\langle\cdot\rangle_*$ average, it factors out, and the two-point function $\langle\vxi_i^{\nu\mu}\vG_1^{\nu\mu}\rangle_*=\mu_1\sqrt{\psip\Delta_t}\,\omega^1_i\,(\vPsi_m)^{\mu\mu}$ gives
\begin{equation}
    T_2=\frac{\sqrt{\psip}}{dnm}\sum_{i,\nu,\mu}\omega^1_i\,\langle\vxi_i^{\nu\mu}\vG_1^{\nu\mu}\rangle_*
    =\frac{\psip\mu_1\sqrt{\Delta_t}}{dnm}\sum_{i,\nu,\mu}(\omega^1_i)^2\,(\vPsi_m)^{\mu\mu}.
\end{equation}
Summing the indices: $\frac{1}{d}\sum_i(\omega^1_i)^2=r$, the $n$ sample indices contribute a factor $n$, and $\sum_\mu(\vPsi_m)^{\mu\mu}=\Tr\vPsi_m$, so
\begin{equation}
    T_2=f_2(q,r),\qquad
    f_2(q,r)=\frac{\psip}{m}\mu_1\sqrt{\Delta_t}\,r\,\Tr\vPsi_m=\frac{\psip}{m}\mu_1\sqrt{\Delta_t}\,r\!\left(\frac{1}{L_1(q,r)}+\frac{m-1}{L_2(q,r)}\right),
\end{equation}
with $(q,r)=(q(-\lambda,0,0),r(-\lambda,0,0))$. This is claim \textit{(ii)} of Theorem~\ref{thm:main} for $T_2$.

\subsubsection{Computation of \texorpdfstring{$T_3$}{T3}}
\label{app:traces}
Recall from Eq.\ref{eq:app-T345} that $T_3=-\partial_\epsilon T_1|_{(-\lambda,0,0)}$, so that computing it amounts to switching on the third deformation of the resolvent Eq.\ref{eq:app-resolvent}. Recall also that $\vH=\E_{\vxi}[\vF]$ has entries $\vH_\alpha^\nu=e^{-t}\mu_1\frac{\vW_\alpha\cdot\vx^\nu}{\sqrt{d}}+\mu_*\sqrt{\kappa}\,\veta_\alpha^\nu$, where $\veta_\alpha^\nu\sim\mathcal{N}(0,1)$ is independent of $\vW$, $\vx^\nu$, and $\vxi^{\nu\mu}$ (see Appendix~\ref{app:gep}). The correlations of $\veta$ with the other random variables are
\begin{equation}
    \E[\veta_\alpha^\nu\veta_\beta^{\nu'}]=\delta_{\alpha\beta}\delta^{\nu\nu'},\quad
    \E[\vOmega_\alpha^{\nu\mu}\veta_\beta^{\nu'}]=\sqrt{\kappa}\,\delta_{\alpha\beta}\delta^{\nu\nu'},\quad
    \E[\vxi_i^{\nu\mu}\veta_\alpha^{\nu'}]=0.
\end{equation}
Writing the full resolvent $\mathcal{G}_{z,\zeta,\epsilon}$ of Eq.\ref{eq:app-resolvent} with the replica representation, introducing $\omega=\vW^{T}\phi/\sqrt p$ as before, and denoting $\vG_a^{\nu\mu}=\mu_1\sqrt p\,\frac{\omega_i^a\vY_i^{\nu\mu}}{\sqrt d}+\mu_*\phi_\alpha^a\vOmega_\alpha^{\nu\mu}$ as in the computation of $T_1(z,\zeta,0)$,
\begin{align}
    T_1(z,\zeta,\epsilon)=\frac{1}{d(nm)^2}\E\!\int\prod_a&\dd\phi^a\dd\omega^a\dd\hat\omega^a\;
    e^{i\hat\omega^a(\sqrt p\,\omega^a-\phi^a\vW)}\,\Tr(\vxi \vG_1\vG_1^T\vxi^T)\notag\\
    &\times e^{\frac{z}{2}\phi^a\cdot\phi^a+\frac{\zeta}{2d}\phi^{aT}\vW\vW^T\phi^a
      -\frac{1}{2nm}\vG_a\vG_a^T-\frac{\epsilon}{2n}\vH_\alpha^\nu\phi^a_\alpha\vH_\beta^\nu\phi^a_\beta}.
\end{align}
We introduce the replica overlaps $\vQ^{ab}=\frac1p\phi^a_\alpha\phi^b_\alpha$, $\vR^{ab}=\frac1d\omega^a_i\omega^b_i$, and denote by $\vL_a^\nu=\vH_\alpha^\nu\phi_\alpha^a=e^{-t}\mu_1\sqrt{\psi_p}\,\omega_i^a\vx_i^\nu+\mu_*\sqrt{\kappa}\,\phi_\alpha^a\veta_\alpha^\nu$ the Gaussian field associated with the term $\epsilon\frac{\vH\vH^T}{n}$ in $\mathcal{G}_{z,\zeta,\epsilon}$, with covariances
\begin{equation}
\begin{aligned}
    \E[\vL^\nu \vL^{\nu'}]&=\delta^{\nu\nu'}p\,\sH,
    &\E[\vG^{\nu\mu}\vL^{\nu'}]&=\delta^{\nu\nu'}p\,\sH\,\vv^\mu,\\
    \E[\vxi_i^{\nu\mu}\vG_a^{\nu'\mu'}]&=\mu_1\sqrt{\psi_p\Delta_t}\,\omega_i^a\delta^{\nu\nu'}\delta^{\mu\mu'},
    &\E[\vxi^{\nu\mu}\vL^{\nu'}]&=0,
\end{aligned}
\end{equation}
where $r=\vR^{11}$ and $q=\vQ^{11}$ are the RS saddle values and $\sH=\mu_1^2e^{-2t}r+\mu_*^2\kappa q$. The joint covariance matrix of $(\vG,\vL)$ is accordingly
\begin{equation}
    \vSigma_{(\vG,\vL)}=\begin{pmatrix}p\vSigma_G & p\sH\,\vv\\[4pt] p\sH\,\vv^T & p\sH\end{pmatrix}.
\end{equation}

\paragraph*{Computation of the action.} We denote $\vSigma_G$ the covariance matrix of $\vG$, $\E[\vG_1^{\nu\mu}\vG_1^{\nu'\mu'}]=\delta^{\nu\nu'}(\vSigma_G)^{\mu\mu'}$,
\begin{equation}
    (\vSigma_G)^{\mu\mu'}=\mu_1^2 r\bigl(e^{-2t}\vv^\mu\vv^{\mu'}+\Delta_t\delta^{\mu\mu'}\bigr)+\mu_*^2 q\bigl(\kappa\vv^\mu\vv^{\mu'}+(1-\kappa)\delta^{\mu\mu'}\bigr),
\end{equation}
and $\vPsi_m=(\vI_m+\frac{p}{nm}\vSigma_G)^{-1}$ with eigenvalues $L_1(q,r)$ (eigenvector $\vv$) and $L_2(q,r)$ (multiplicity $m-1$), as derived in the computation of $T_1(z,\zeta,0)$ above. The precision matrix of $(\vG,\vL)$ is shifted by $\vK_{(\vG,\vL)}=\mathrm{diag}(\frac{1}{nm}\vI_m,\frac{\epsilon}{n})$, so $\E[e^{-\frac{1}{2nm}\vG\vG^T-\frac{\epsilon}{2n}\vL\vL}]=e^{-\frac12\log\det(\vI+\vSigma_{(\vG,\vL)} \vK_{(\vG,\vL)})-\frac12\log\det\vSigma_{(\vG,\vL)}^{-1}}$. With
\begin{equation}
    \vI+\vSigma_{(\vG,\vL)}\vK_{(\vG,\vL)}=\begin{pmatrix}\vPsi_m^{-1}&\tfrac{\epsilon p\sH}{n}\vv\\[4pt]\tfrac{p\sH}{nm}\vv^T&1+\tfrac{\epsilon p\sH}{n}\end{pmatrix},
\end{equation}
Define $b(q,r)=\frac{\psip\sH}{\psin}\!\left(1-\frac{\psip\sH}{\psin L_1(q,r)}\right)$. We apply the Schur determinant identity $\det\bigl(\begin{smallmatrix}A&B\\C&D\end{smallmatrix}\bigr)=\det(A)\cdot\det(D-CA^{-1}B)$ with $A=\vPsi_m^{-1}$, $B=\frac{\epsilon p\sH}{n}\vv$, $C=\frac{p\sH}{nm}\vv^T$, $D=1+\frac{\epsilon p\sH}{n}$. Using $\vv^T\vPsi_m\vv=m/L_1(q,r)$, the Schur complement evaluates to
\begin{equation}
    D-CA^{-1}B = 1+\frac{\epsilon p\sH}{n}-\frac{\epsilon p^2{\sHsquare}}{n^2m}\cdot\frac{m}{L_1(q,r)}
    = 1+\frac{\epsilon p\sH}{n}\!\left(1-\frac{p\sH}{nL_1(q,r)}\right)=1+\epsilon\,b(q,r),
\end{equation}
so $\det(\vI+\vSigma_{(\vG,\vL)}\vK_{(\vG,\vL)})=\det(\vPsi_m^{-1})\cdot(1+\epsilon b)=(1+\epsilon b)/\det\vPsi_m$, giving
\begin{equation}
    -\log\det(\vI+\vSigma_{(\vG,\vL)}\vK_{(\vG,\vL)})=\log\det\vPsi_m-\log(1+\epsilon b).
\end{equation}
Hence the new action reads, with $S_0$ denoting the action at $\epsilon=0$,
\begin{equation}
    S_\epsilon(q,r)=S_0(q,r)+\psi_n\log[1+\epsilon\,b(q,r)].
\end{equation}
Since the $\zeta$-deformation only adds the term $-\psi_p\zeta r$ to the action, which is $\epsilon$-independent and does not touch the prefactor, it carries through unchanged and the new saddle point equations read
\begin{align}
    \frac1r+\psi_p\zeta&=\frac1q+\frac{\psi_p\mu_1^2(e^{-2t}+\frac{\Delta_t}{m})}{L_1(q,r)}+\frac{(m-1)\psi_p\mu_1^2\frac{\Delta_t}{m}}{L_2(q,r)}+\frac{\epsilon\psi_p[\mu_1^2e^{-2t}\gamma+\sH\partial_r\gamma]}{1+\epsilon b},\label{eq:app-def-r}\\
    z\psi_p&=\frac{1-\psi_p}{q}-\frac{r}{q^2}+\frac{\psi_p\mu_*^2\frac{1+(m-1)\kappa}{m}}{L_1(q,r)}+\frac{(m-1)\psi_p\mu_*^2\frac{1-\kappa}{m}}{L_2(q,r)}+\frac{\epsilon\psi_p[\mu_*^2\kappa\gamma+\sH\partial_q\gamma]}{1+\epsilon b},\label{eq:app-def-q}
\end{align}
with $\gamma(q,r)=1-\frac{\psi_p\sH}{\psi_nL_1(q,r)}$ and
\begin{align}
    &\partial_r\gamma=-\frac{\psi_p\mu_1^2e^{-2t}}{\psi_nL_1(q,r)}+\frac{\psi_p^2\mu_1^2(e^{-2t}m+\Delta_t)\sH}{\psi_n^2mL_1(q,r)^2},\\
    &\partial_q\gamma=-\frac{\psi_p\mu_*^2\kappa}{\psi_nL_1(q,r)}+\frac{\psi_p^2\mu_*^2(1+(m-1)\kappa)\sH}{\psi_n^2mL_1(q,r)^2}.
\end{align}

\paragraph*{Computation of the prefactor.} The tilted two-point functions are modified by the GG block of $(\vI+\vSigma \vK)^{-1}$. Applying the block matrix inversion formula $(M^{-1})_{11}=(A-BD^{-1}C)^{-1}$ with $A=\vPsi_m^{-1}$, $B=\frac{\epsilon p\sH}{n}\vv$, $C=\frac{p\sH}{nm}\vv^T$, $D=1+\frac{\epsilon p\sH}{n}$, and using $\vPsi_m\vv=\vv/L_1(q,r)$, the GG block of $(\vI+\vSigma_{(\vG,\vL)}\vK_{(\vG,\vL)})^{-1}$ is
\begin{equation}
    \vPsi_{GG}^{(\epsilon)}=\bigl(A-BD^{-1}C\bigr)^{-1}=\vPsi_m+\frac{\epsilon \chi}{1+\epsilon b}\,\vv\vv^T,
    \qquad
    \chi=\frac{\psi_p^2\sHsquare}{\psi_n^2mL_1^2},
\end{equation}
so that $\Tr\vPsi_{GG}^{(\epsilon)}=\frac{1}{L_1}+m\,\alpha(\epsilon)+\frac{m-1}{L_2}$ with $\alpha(\epsilon)=\frac{\epsilon \chi}{1+\epsilon b}$. Replacing $\vPsi_m$ by $\vPsi_{GG}^{(\epsilon)}$ in the formula for $T_1(z,\zeta,0)$ gives, evaluating at the new saddle point $(q_\epsilon,r_\epsilon)$ by the same argument as in the computation of $T_1(z,\zeta,0)$, the function $f_1$ of Theorem~\ref{thm:main} in full,
\begin{equation}\label{eq:app-f1}
\begin{aligned}
    T_1(z,\zeta,\epsilon)&=f_1(q,r,\epsilon),
    \qquad
    f_1(q,r,\epsilon)=1-\frac{\mathcal{T}(\epsilon)}{m}+\frac{\mu_1^2\psi_p\Delta_t\,r}{m^2}\,\mathcal{T}(\epsilon)^2,\\
    \mathcal{T}(\epsilon)&=\Tr\vPsi_{GG}^{(\epsilon)}=\frac{1}{L_1}+\frac{m-1}{L_2}+\frac{m\,\epsilon\,\chi}{1+\epsilon b},
\end{aligned}
\end{equation}
with $(q,r)=(q(z,\zeta,\epsilon),r(z,\zeta,\epsilon))$ the solution of Eq.\ref{eq:app-def-r}--Eq.\ref{eq:app-def-q}. Setting $\epsilon=0$ recovers the expression of the previous subsection. Hence
\begin{equation}
    T_3=-\partial_\epsilon T_1\big|_{(-\lambda,0,0)}=-\frac{\dd f_1(q_\epsilon,r_\epsilon,\epsilon)}{\dd\epsilon}\Big|_{\epsilon=0},
\end{equation}
the $\epsilon$-dependence entering both explicitly (through $\alpha$) and implicitly (through $q_\epsilon,r_\epsilon$).

\subsection{Bias and variance of the score estimator}
\label{app:bias_variance}
We compute here the bias and variance of the decomposition Eq.\ref{eq:bias_variance} of the main text as a function of $T_2$, $T_4$ and $T_5$ defined above. With the shorthands $T_4=\partial_\zeta T_1|_{(-\lambda,0,0)}$ and $T_5=\partial_z T_1|_{(-\lambda,0,0)}$ of Eq.\ref{eq:app-T345}, Eq.\ref{eq:app-bv-final} below is exactly Eq.~\ref{eq:bias_variance_traces} of the main text. We emphasize that the proof relies on an isotropy argument and thus does not apply for anisotropic data.

\paragraph*{The exact score is linear.}
Since $\vx^\nu\sim\mathcal{N}(0,\vI_d)$ and $e^{-2t}+\Delta_t=1$, the noised marginal is exactly $P_t=\mathcal{N}(0,\vI_d)$, so
\begin{equation}\label{eq:app-bv-exact}
    \vs_{\mathrm{exact}}(\vy)=\nabla_\vy\log P_t(\vy)=-\vy,
\end{equation}
hence
\begin{equation}\label{eq:app-bv-identity}
    \Ltest(\vs)=e^{-2t}+\frac{\Delta_t}{d}\,\E_\vy\lVert\vs(\vy)-\vs_{\mathrm{exact}}(\vy)\rVert^2.
\end{equation}

\paragraph*{The mean estimator is linear.}
The joint law of $(\vW,\{\vx^\nu\},\{\vxi^{\nu\mu}\})$ is invariant under the simultaneous rotation $\vx\mapsto \vR\vx$, $\vxi\mapsto \vR\vxi$, $\vW\mapsto\vW \vR^T$ for any $\vR\in O(d)$, and the ridge ERM is equivariant, $\vs_{\vR\mathcal{D},\vW \vR^T}(\vR\vy)=\vR\,\vs_{\mathcal{D},\vW}(\vy)$, since loss and penalty are invariant under $(\vA,\mathcal{D},\vW)\mapsto(\vR\vA,\vR\mathcal{D},\vW \vR^T)$ and the minimizer is unique for $\lambda>0$. Averaging, $\langle\vs_\mathcal{D}(\vR\vy)\rangle=\vR\langle\vs_\mathcal{D}(\vy)\rangle$ for all $\vR$. Writing $\langle\vs_\mathcal{D}(\vy)\rangle=a\,\hat\vy+\vv$ with $\vv\perp\vy$ and applying the element of the stabilizer of $\vy$ acting as $-\mathrm{id}$ on $\vy^\perp$ gives $\vv=-\vv=0$; the amplitude then depends on $\vy$ only through $\lVert\vy\rVert$. Hence $\langle\vs_\mathcal{D}(\vy)\rangle=c(\lVert\vy\rVert)\,\vy$ at any finite $d$, and since $\lVert\vy\rVert^2/d\to1$ concentrates in the proportional limit,
\begin{equation}\label{eq:app-bv-linear}
    \langle\vs_\mathcal{D}(\vy)\rangle=c\,\vy, 
\end{equation}
with $c$ a deterministic constant. Projecting Eq.\ref{eq:app-bv-linear} on $\vy$ and using $\E\lVert\vy\rVert^2=d$ gives $c=\frac1d\E_\vy[\langle\vs_\mathcal{D}(\vy)\rangle\cdot\vy]$. Because $\E[\vxi\mid\vy]=\sqrt{\Delta_t}\vy$, the feature--noise correlation of Appendix~\ref{app:gep} obeys, identically at finite $d$, $\tilde\vV=\frac{1}{\sqrt{\Delta_t}}\E_{\vx,\vxi}[\sigma(\cdot)\vxi^T]=\E_\vy[\sigma(\vW\vy/\sqrt d)\vy^T]$, so computing $c$ needs no approximation beyond the substitution $\tilde\vV\to\mu_1\vW/\sqrt d$ already used for $\Ltest$. With the explicit minimizer $\vA^{\mathrm{ERM}}/\sqrt p=-(\vV_n^m)^T(\vU_n^m+\lambda\vI_p)^{-1}$,
\begin{equation}\label{eq:app-bv-c}
    c=\frac{\mu_1}{d}\left\langle\Tr\!\Big(\tfrac{\vA^{\mathrm{ERM}}}{\sqrt p}\tfrac{\vW}{\sqrt d}\Big)\right\rangle
     =-\frac{\mu_1}{\sqrt{\Delta_t}}\,\frac{1}{dnm}\left\langle\Tr\!\Big(\vxi\vF^T\mathcal{G}_{-\lambda,0,0}\tfrac{\vW}{\sqrt d}\Big)\right\rangle
     =-\frac{\mu_1}{\sqrt{\Delta_t}}\,T_2 .
\end{equation}

\paragraph*{Closed-form bias and variance.}
Since $\vs_{\mathrm{exact}}(\vy)=-\vy$ and $\langle\vs_\mathcal{D}(\vy)\rangle=c\vy$, the bias of Eq.\ref{eq:bias_variance} is immediate. For the variance, expand $\mathcal{V}=\frac1d\E_\vy\langle\lVert\vs\rVert^2\rangle-\frac1d\E_\vy\lVert\langle\vs_\mathcal{D}\rangle\rVert^2$ and note that the second moment is the object already producing the quadratic part of $\Ltest$: with the same substitution $\tilde\vU=\mu_1^2\vW\vW^T/d+\mu_*^2\vI_p$, one has $\frac{\Delta_t}{d}\E_\vy\langle\lVert\vs(\vy)\rVert^2\rangle=\langle\Tr(\frac{\vA}{\sqrt p}\tilde\vU\frac{\vA^T}{\sqrt p})\rangle\frac{\Delta_t}{d}=\mu_1^2T_4+\mu_*^2T_5$. Hence
\begin{equation}\label{eq:app-bv-final}
\mathcal{B}^2=(1+c)^2,
    \qquad
    \mathcal{V}=\frac{\mu_1^2T_4+\mu_*^2T_5}{\Delta_t}-c^2,
    \qquad
    c=-\frac{\mu_1T_2}{\sqrt{\Delta_t}}.
\end{equation}
Using $\Delta_t\cdot2c=-2\mu_1\sqrt{\Delta_t}T_2$ and $e^{-2t}+\Delta_t=1$,
\begin{equation}
    e^{-2t}+\Delta_t\big(\mathcal{B}^2+\mathcal{V}\big)
    =e^{-2t}+\Delta_t(1+2c)+\mu_1^2T_4+\mu_*^2T_5
    =1-2\mu_1\sqrt{\Delta_t}T_2+\mu_1^2T_4+\mu_*^2T_5=\Ltest .
\end{equation}
Substituting the asymptotic value of $T_2$ obtained above yields the explicit slope
\begin{equation}\label{eq:app-bv-c-closed}
    c=-\frac{\psip}{m}\,\mu_1^2\,r\left(\frac{1}{L_1(q,r)}+\frac{m-1}{L_2(q,r)}\right),
\end{equation}
at the saddle $(q,r)$ of Proposition~\ref{prop:app-saddle} evaluated at $(z,\zeta)=(-\lambda,0)$.

\subsection{The overparameterized limit \texorpdfstring{$\psi_p\to\infty$}{psip -> infinity}}
\label{app:psip_infty_limit}
We consider two scalings of the ridge strength as $\psi_p\to\infty$: regime~(i), fixed $\lambda=O(1)$ (same scaling as \citet{george_2025}) and regime~(ii), $\lambda=\tilde\lambda\,\psi_p$ with $\tilde\lambda>0$ fixed (same scaling as \citet{mei2020}).

\subsubsection{Regime (i): fixed regularization \texorpdfstring{$\lambda=O(1)$}{lambda = O(1)}}
\begin{proposition}[Overparameterized limit, fixed regularization]\label{prop:app-psip-inf}
Define
\begin{equation*}
    \ell_1^\star(r)=\mu_1^2(e^{-2t}m+\Delta_t)r+\frac{\mu_*^2(1+(m-1)\kappa)}{\lambda},
    \qquad
    \ell_2^\star(r)=\mu_1^2\Delta_t r+\frac{\mu_*^2(1-\kappa)}{\lambda},
\end{equation*}
and
\begin{align}
    &g(r)=\frac{1}{\ell_1^\star(r)}+\frac{m-1}{\ell_2^\star(r)},
    \quad
    R(r)=\mu_1^2\psi_n\!\left[\frac{e^{-2t}m+\Delta_t}{\ell_1^\star(r)}+\frac{(m-1)\Delta_t}{\ell_2^\star(r)}\right],\nonumber \\
    &\Phi(r)=\psi_n g(r)-\mu_1^2\Delta_t\psi_n^2 r g(r)^2 .
\end{align}
For the trace $T_3$, let $r(\tilde\epsilon)$ solve the deformed fixed point $\frac1r=\lambda+R(r)+\frac{\tilde\epsilon K(r)}{1+\tilde\epsilon\beta(r)}$ with $r(0)=r_\infty$, where
\begin{equation*}
  \sigma_\star^2(r)=\mu_1^2e^{-2t}r+\tfrac{\mu_*^2\kappa}{\lambda},\ \
  \gamma_\infty=1-\tfrac{m\sigma_\star^2}{\ell_1^\star},\ \
  K=\mu_1^2e^{-2t}\gamma_\infty+\sigma_\star^2\gamma_\infty',\ \
  \beta=\tfrac{\sigma_\star^2\gamma_\infty}{\psi_n},\ \
  \chi_\infty=\tfrac{m\sigma_\star^4}{(\ell_1^\star)^2},
\end{equation*}
and set $G(\tilde\epsilon)=g(r)+\frac{1}{\psi_n}\frac{\tilde\epsilon \chi_\infty}{1+\tilde\epsilon\beta}$ and $\hat\Phi(\tilde\epsilon)=\psi_n G-\mu_1^2\Delta_t\psi_n^2 r G^2$.
Let $r_\infty$ be the unique root in $(0,1/\lambda)$ of $\frac1r=\lambda+R(r)$, and set $D=\frac{1}{r_\infty^2}+R'(r_\infty)$. Then, as $\psi_p\to\infty$ at fixed $(\lambda>0,\psi_n,m,t)$, the solutions of Proposition~\ref{prop:app-saddle} converge to $q_\infty=1/\lambda$, $r\to r_\infty$, the traces converge to
\begin{equation*}
    T_1\to1,\quad T_5\to0,\quad T_2\to\mu_1\sqrt{\Delta_t}\,\psi_n r_\infty g(r_\infty),\quad T_4\to-\frac{\Phi'(r_\infty)}{D},\quad T_3\to T_3^\infty=\hat\Phi'(0),
\end{equation*}
and the losses converge to
\begin{align*}
\Ltrain(\psi_p=\infty)&=0,
    \qquad
 \Ltest(\psi_p=\infty)=1-2\mu_1^2\Delta_t\psi_n r_\infty g(r_\infty)-\mu_1^2\frac{\Phi'(r_\infty)}{D},\\
  \Linfty(\psi_p=\infty)&=1-2\mu_1^2\Delta_t\psi_n r_\infty g(r_\infty)+\hat\Phi'(0)-\Delta_t\mu_1^2\frac{\Phi'(r_\infty)}{D},
\end{align*}
where $\hat\Phi'(0)$ is the derivative of $\hat\Phi$ at $\tilde\epsilon=0$. 
\end{proposition}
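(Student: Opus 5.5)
The plan is to take the isotropic fixed-point system of Proposition~\ref{prop:app-saddle} (the Remark after it), evaluate it at $(z,\zeta)=(-\lambda,0)$, and expand it in $1/\psi_p$. The traces $f_1$, $f_2$ and the $\epsilon$-deformed system (Eqs.~\ref{eq:app-def-r}--\ref{eq:app-def-q}) are then substituted into this expansion.

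\textbf{Step 1: the limits of $q$ and $r$.} The ansatz is $q=q_\infty+O(1/\psi_p)$ and $r=O(1)$. Then $\frac{\psi_p}{\psi_n m}\big[\mu_1^2(\cdot)r+\mu_*^2(\cdot)q\big]$ dominates $L_1,L_2$, so
\[
L_i\simeq \frac{\psi_p}{\psi_n m}\,\ell_i(q,r),
\qquad
\frac{\psi_p}{L_i}\to \frac{\psi_n m}{\ell_i}.
\]
Dividing the $q$-equation by $\psi_p$, the terms surviving at $O(1)$ are $-\lambda=-1/q+O(1/\psi_p)$, which forces $q_\infty=1/\lambda$. With this value, $\ell_i(q_\infty,r)=\ell_i^\star(r)$. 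Substituting into the $r$-equation at $\zeta=0$ gives
\[
\frac{1}{r}=\lambda+\mu_1^2\psi_n\!\left[\frac{e^{-2t}m+\Delta_t}{\ell_1^\star}+\frac{(m-1)\Delta_t}{\ell_2^\star}\right]=\lambda+R(r).
\]
The right-hand side is increasing from $\lambda$ while $1/r$ decreases, and $R(r)>0$. Hence there is a unique root, and it lies in $(0,1/\lambda)$. Since $\psi_p/L_i=\psi_n m/\ell_i^\star+O(1/\psi_p)$, we get $\frac{\psi_p}{m}\mathcal T\to\psi_n g(r_\infty)$ and $\mathcal T=O(1/\psi_p)$. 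From $f_1$ this yields $T_1\to 1$, because $\mathcal T/m\to0$ and $\frac{\psi_p}{m^2}r\mathcal T^2=O(1/\psi_p)$. From $f_2$ it yields $T_2\to\mu_1\sqrt{\Delta_t}\psi_n r_\infty g(r_\infty)$. Inserting these into Eqs.~\ref{eq:loss_trainm}, \ref{eq:loss_test} and \ref{eq:loss_traininf} produces the stated losses.

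\textbf{Step 2: the derivatives $T_4$ and $T_5$.} Here I must be careful: $T_1\to1$ pointwise does not by itself give the limits of its derivatives. I will therefore keep the $O(1/\psi_p)$ correction $1-T_1$, which is not small after differentiation.
\begin{itemize}
\item For $T_5=\partial_z T_1$: near $z=-\lambda$, $q$ depends on $z$ only through $-1/q\approx z$ at leading order. The limit equations depend on $z$ only through $q_\infty=-1/z$, and every dependence of $f_1$ on $q$ enters via $\psi_p/L_i$ at order $1/\psi_p$ in $1-T_1$. Hence $\partial_z T_1=O(1/\psi_p)\to0$.
\item For $T_4=\partial_\zeta T_1$: the $\zeta$ shift enters as $\frac1r+\psi_p\zeta$. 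Writing $\zeta=\tilde\zeta/\psi_p$, the limiting equation becomes $1/r+\tilde\zeta=\lambda+R(r)$, so $\partial_{\tilde\zeta}r=-1/D$ with $D=1/r^2+R'(r)$.
\item Simultaneously, $\psi_p\big(1-T_1\big)\to\Phi(r)$ with $\Phi=\psi_n g-\mu_1^2\Delta_t\psi_n^2 r g^2$, read directly from $f_1$. Thus $\partial_\zeta T_1=-\partial_{\tilde\zeta}\big[\psi_p(1-T_1)\big]\to-\Phi'(r_\infty)\,\partial_{\tilde\zeta}r$. With $\partial_{\tilde\zeta}r=-1/D$ this gives $+\Phi'/D$; I will fix the signs carefully to match the stated $-\Phi'(r_\infty)/D$, checking them against $\mathcal V\ge 0$.
\end{itemize}

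\textbf{Step 3: $T_3$, the main obstacle.} I will rescale $\epsilon=\tilde\epsilon/\psi_p$. With this scaling, $\epsilon b$, $m\epsilon\chi$ and the $\epsilon$-terms in Eq.~\ref{eq:app-def-r} all stay $O(1)$. Next I substitute $q=1/\lambda$ into $\sigma_H^2$ to obtain $\sigma_\star^2$, $\gamma\to\gamma_\infty$, $\chi\psi_p\to\chi_\infty\cdot(\cdot)$ and $b\to\beta$. This yields the deformed fixed point $1/r=\lambda+R+\tilde\epsilon K/(1+\tilde\epsilon\beta)$ and $\psi_p\mathcal T/m\to G(\tilde\epsilon)$. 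Following Step 2, $-\partial_\epsilon T_1=\partial_{\tilde\epsilon}\big[\psi_p(1-T_1)\big]$ gives $T_3\to\hat\Phi'(0)$, with both the explicit and the implicit $r(\tilde\epsilon)$ dependence included in $\hat\Phi$. The hard part is the bookkeeping of these powers of $\psi_p$, checking that $\psi_p\chi/m$ indeed reduces to $\chi_\infty/\psi_n$. I also need to justify exchanging the limit $\psi_p\to\infty$ with differentiation. For that I would argue analyticity and uniform convergence in the rescaled parameters near $0$, via the implicit function theorem, using $D>0$.
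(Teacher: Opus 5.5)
Your route is the paper's: expand the isotropic saddle at $(z,\zeta)=(-\lambda,0)$ in $1/\psi_p$, obtain $q_\infty=1/\lambda$ and $1/r=\lambda+R(r)$, write $T_1=1-\Phi(r)/\psi_p+O(\psi_p^{-2})$, and extract $T_4$ and $T_3$ from this $O(1/\psi_p)$ correction via the rescalings $\tilde\zeta=\psi_p\zeta$ and $\tilde\epsilon=\psi_p\epsilon$. Your implicit-function justification for exchanging the limit with the derivatives goes beyond the paper, which does not attempt it. Two steps, however, are wrong as written.

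\textbf{Uniqueness of the root.} The argument fails. Since $\ell_1^\star$ and $\ell_2^\star$ increase with $r$, $R$ is \emph{decreasing}. So $1/r$ and $\lambda+R(r)$ both decrease, and comparing their monotonicity proves nothing. Use instead that $r\mapsto r\bigl(\lambda+R(r)\bigr)$ is strictly increasing from $0$ to $\infty$. Each $r/\ell_i^\star(r)$ is increasing because $\ell_i^\star(0)>0$, so the derivative is at least $\lambda$. This gives:
\begin{itemize}
\item existence and uniqueness of the root;
\item $r_\infty<1/\lambda$, from $R>0$;
\item differentiating at the root, $\lambda+R(r_\infty)+r_\infty R'(r_\infty)=r_\infty D>0$.
\end{itemize}
The last point is exactly the $D>0$ you invoke later.

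\textbf{The sign in Step 2.} Differentiating $1/r+\tilde\zeta=\lambda+R(r)$ gives $1-\partial_{\tilde\zeta}r/r^2=R'(r)\,\partial_{\tilde\zeta}r$, i.e. $\partial_{\tilde\zeta}r=+1/D$, not $-1/D$. With this, your own formula $\partial_\zeta T_1\to-\Phi'(r_\infty)\,\partial_{\tilde\zeta}r$ gives $-\Phi'(r_\infty)/D$ directly. No appeal to $\mathcal{V}\ge0$ is needed.

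\textbf{The $T_3$ bookkeeping you flagged.} Three facts settle it:
\begin{itemize}
\item $\chi$ itself tends to $\chi_\infty$, because the $\psi_p^2$ in its numerator cancels against $L_1^2$;
\item $\epsilon b\to\tilde\epsilon\beta$;
\item $\frac{\psi_p}{m}\mathcal{T}(\epsilon)\to\psi_n G(\tilde\epsilon)$, not $G(\tilde\epsilon)$.
\end{itemize}
The factor $1/\psi_n$ in $G$ only comes from pulling out this $\psi_n$. So $\psi_p\chi/m$, which diverges, is not the quantity to track.
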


\begin{proof}
Recall the finite-$\psi_p$ equations of Proposition~\ref{prop:app-saddle} at $z=-\lambda$, $\zeta=0$,
\begin{align}
  \frac1r &= \frac1q+\frac{\mu_1^2\psi_p(e^{-2t}+\Delta_t/m)}{L_1}+\frac{(m-1)\mu_1^2\psi_p\Delta_t/m}{L_2},\\
  -\lambda\psi_p &= \frac{1-\psi_p}{q}-\frac{r}{q^2}+\frac{\psi_p\mu_*^2(1+(m-1)\kappa)/m}{L_1}+\frac{(m-1)\psi_p\mu_*^2(1-\kappa)/m}{L_2},
\end{align}
with $L_i=1+\frac{\psi_p}{\psi_nm}\ell_i$, $\ell_1=\mu_1^2r(e^{-2t}m+\Delta_t)+\mu_*^2q(1+(m-1)\kappa)$ and $\ell_2=\mu_1^2r\Delta_t+\mu_*^2q(1-\kappa)$. Here $\ell_1,\ell_2=O(1)$, so
\begin{equation}
    \frac{1}{L_i}=\frac{\psi_nm}{\psi_p\ell_i}\bigl(1+O(\psi_p^{-1})\bigr),
    \qquad
    \frac{\psi_p}{L_i}\to\frac{\psi_nm}{\ell_i}.
    \label{eq:app-Lscaling}
\end{equation}
Dividing the $q$-equation by $\psi_p$, the $1/L_i$ terms drop and $z=-1/q+O(\psi_p^{-1})$, i.e.\ $q_\infty=1/\lambda$. Inserting $1/q\to\lambda$ and Eq.\ref{eq:app-Lscaling} into the $r$-equation at $\zeta=0$ gives the scalar fixed point
\begin{equation}
    \frac{1}{r_\infty}=\lambda+R(r_\infty),
    \qquad
    R(r)=\mu_1^2\psi_n\!\left[\frac{e^{-2t}m+\Delta_t}{\ell_1^\star(r)}+\frac{(m-1)\Delta_t}{\ell_2^\star(r)}\right],
\end{equation}
with $\ell_i^\star(r)=\ell_i(1/\lambda,r)$, i.e.\ $\ell_1^\star(r)=\mu_1^2(e^{-2t}m+\Delta_t)r+\frac{\mu_*^2(1+(m-1)\kappa)}{\lambda}$ and $\ell_2^\star(r)=\mu_1^2\Delta_tr+\frac{\mu_*^2(1-\kappa)}{\lambda}$. The root is unique in $(0,1/\lambda)$. Writing $g(r)=\frac{1}{\ell_1^\star}+\frac{m-1}{\ell_2^\star}$, Eq.\ref{eq:app-Lscaling} gives $\mathcal{T}=\frac{\psi_nm}{\psi_p}g(r)+O(\psi_p^{-2})$, so
\begin{equation}
    T_1=1-\frac{1}{\psi_p}\Phi(r)+O(\psi_p^{-2}),
    \qquad
    \Phi(r)=\psi_ng(r)-\mu_1^2\Delta_t\psi_n^2rg(r)^2,
\end{equation}
hence $T_1\to1$ (exact interpolation, $\Ltrain(\psi_p=\infty)=0$) and $T_2\to\mu_1\sqrt{\Delta_t}\psi_nr_\infty g(r_\infty)$.

For the derivatives, the term $\psi_pz$ in the $q$-equation is balanced by $-\psi_p/q$, so a perturbation $\delta z=O(1)$ is absorbed by $\delta q=O(1)$ and $\partial_zq,\partial_zr=O(1)$, giving $T_5=\partial_z T_1=-\frac{1}{\psi_p}(\partial_q\Phi\,\partial_zq+\partial_r\Phi\,\partial_zr)=O(\psi_p^{-1})\to0$. In the $r$-equation, $\psi_p\zeta$ competes with $O(1)$ terms, so the natural coordinate is $\tilde\zeta=\psi_p\zeta$ and the reduced equation is $\frac1r=-\tilde\zeta+\lambda+R(r)$, giving $\frac{dr}{d\tilde\zeta}=1/D$ with $D=\frac{1}{r_\infty^2}+R'(r_\infty)$. Then $\partial_\zeta=\psi_p\partial_{\tilde\zeta}$ yields
\begin{equation}
    T_4=-\Phi'(r)\frac{dr}{d\tilde\zeta}\ \longrightarrow\ T_4^\infty=-\frac{\Phi'(r_\infty)}{D},
\end{equation}
with $g'(r)=-\mu_1^2[\frac{e^{-2t}m+\Delta_t}{(\ell_1^\star)^2}+\frac{(m-1)\Delta_t}{(\ell_2^\star)^2}]$, $R'(r)=-\mu_1^4\psi_n[\frac{(e^{-2t}m+\Delta_t)^2}{(\ell_1^\star)^2}+\frac{(m-1)\Delta_t^2}{(\ell_2^\star)^2}]$ and $\Phi'(r)=\psi_ng'(r)-\mu_1^2\Delta_t\psi_n^2(g^2+2rgg')$. Collecting the limits,
\begin{equation}
\Ltest(\psi_p=\infty)=1-2\mu_1^2\Delta_t\psi_nr_\infty g(r_\infty)-\mu_1^2\frac{\Phi'(r_\infty)}{D}.
\end{equation}

The trace $T_3$ comes from the $\epsilon$-deformed prefactor of the $T_3$ subsection. In the present limit the deformation parameter is large, $b=O(\psi_p)$, so we rescale $\tilde\epsilon=\psi_p\epsilon$; the deformed saddle then reads $\frac1r=\lambda+R(r)+\frac{\tilde\epsilon K(r)}{1+\tilde\epsilon\beta(r)}$ with
\begin{align}
  &\sigma_\star^2(r)=\mu_1^2e^{-2t}r+\tfrac{\mu_*^2\kappa}{\lambda},\quad
  \gamma_\infty=1-\tfrac{m\sigma_\star^2}{\ell_1^\star},\quad
  K=\mu_1^2e^{-2t}\gamma_\infty+\sigma_\star^2\gamma_\infty',\nonumber\\
  &\beta=\tfrac{\sigma_\star^2\gamma_\infty}{\psi_n},\quad
  \chi_\infty=\tfrac{m\,\sigma_\star^4}{(\ell_1^\star)^2},
\end{align}
and $\gamma_\infty'=\partial_r\gamma_\infty$. Setting $G(\tilde\epsilon)=g(r)+\frac{1}{\psi_n}\frac{\tilde\epsilon \chi_\infty}{1+\tilde\epsilon\beta}$ and $\hat\Phi(\tilde\epsilon)=\psi_nG-\mu_1^2\Delta_t\psi_n^2 rG^2$, the deformed prefactor is $T_1(\tilde\epsilon)=1-\frac{1}{\psi_p}\hat\Phi(\tilde\epsilon)+O(\psi_p^{-2})$, so $T_3^\infty=\hat\Phi'(0)$ with $r'(0)=-K(r_\infty)/D$ and $G'(0)=g'(r_\infty)r'(0)+\chi_\infty(r_\infty)/\psi_n$, i.e.
\begin{equation}
  T_3^\infty=\psi_n G'(0)-\mu_1^2\Delta_t\psi_n^2\big[r'(0)\,g(r_\infty)^2+2r_\infty g(r_\infty)\,G'(0)\big].
\end{equation}
Inserting $T_2$, $T_3^\infty$ and $T_4$, and using $T_5\to0$, into $\Linfty=1-2\mu_1\sqrt{\Delta_t}T_2+T_3+\Delta_t\mu_1^2T_4+\mu_*^2(1-\kappa)T_5$ gives the stated $\Linfty(\psi_p=\infty)$.
\end{proof}

\subsubsection{Regime (ii): strong regularization \texorpdfstring{$\lambda=\tilde\lambda\,\psi_p$}{lambda = lambda-tilde psi_p}}
\begin{proposition}[Overparameterized limit, strong regularization]\label{prop:app-psip-inf-strong}
Let $\lambda=\tilde\lambda\,\psi_p$ with $\tilde\lambda>0$ fixed, and define the scaled order parameters $\bar q=\psi_p q$, $\bar r=\psi_p r$, the $O(1)$ resolvent factors
\begin{align}
  &\bar L_1(\bar q,\bar r)=1+\tfrac{1}{\psi_n m}\big[\mu_1^2(e^{-2t}m+\Delta_t)\bar r+\mu_*^2(1+(m-1)\kappa)\bar q\big],\\
  &\bar L_2(\bar q,\bar r)=1+\tfrac{1}{\psi_n m}\big[\mu_1^2\Delta_t\bar r+\mu_*^2(1-\kappa)\bar q\big],
\end{align}
and, writing $\bar L_i^\star(\bar r)=\bar L_i(1/\tilde\lambda,\bar r)$,
\begin{equation*}
  \bar R(\bar r)=\tfrac1m\Big(\tfrac{\mu_1^2(e^{-2t}m+\Delta_t)}{\bar L_1^\star(\bar r)}+\tfrac{(m-1)\mu_1^2\Delta_t}{\bar L_2^\star(\bar r)}\Big),
  \quad
  \bar g(\bar r)=\tfrac{1}{\bar L_1^\star(\bar r)}+\tfrac{m-1}{\bar L_2^\star(\bar r)},
  \quad
  \bar\Phi(\bar r)=\tfrac{\bar g(\bar r)}{m}-\tfrac{\mu_1^2\Delta_t\bar r}{m^2}\bar g(\bar r)^2.
\end{equation*}
For the trace $T_3$, let $\bar r(\epsilon)$ solve the deformed fixed point $\frac1{\bar r}=\tilde\lambda+\bar R(\bar r)+\frac{\epsilon\bar K(\bar r)}{1+\epsilon\bar b(\bar r)}$ with $\bar r(0)=\bar r_\infty$, where
\begin{align}
  &\bar\sigma_\star^2(\bar r)=\mu_1^2e^{-2t}\bar r+\tfrac{\mu_*^2\kappa}{\tilde\lambda},\ \
  \bar\gamma=1-\tfrac{\bar\sigma_\star^2}{\psi_n\bar L_1^\star},\ \
  \bar K=\mu_1^2e^{-2t}\bar\gamma+\bar\sigma_\star^2\bar\gamma',\ \ \\
  &\bar b=\tfrac{\bar\sigma_\star^2}{\psi_n}\bar\gamma,\ \
  \bar\chi=\tfrac{\bar\sigma_\star^4}{\psi_n^2 m(\bar L_1^\star)^2},
\end{align}
and set $\bar G(\epsilon)=\bar g(\bar r(\epsilon))+m\frac{\epsilon\bar\chi}{1+\epsilon\bar b}$, $\bar\Phi_\epsilon=\frac{\bar G(\epsilon)}{m}-\frac{\mu_1^2\Delta_t\bar r(\epsilon)}{m^2}\bar G(\epsilon)^2$ (so $\bar\Phi_0=\bar\Phi(\bar r_\infty)$).
Let $\bar r_\infty$ be the root of $\frac1{\bar r}=\tilde\lambda+\bar R(\bar r)$ and $\bar D=\frac1{\bar r_\infty^2}+\bar R'(\bar r_\infty)$. Then, as $\psi_p\to\infty$, $\bar q\to1/\tilde\lambda$, $\bar r\to\bar r_\infty$,
\begin{align}
  &T_1\to1-\bar\Phi(\bar r_\infty),\quad T_5\to0,\quad T_2\to\tfrac1m\mu_1\sqrt{\Delta_t}\,\bar r_\infty\bar g(\bar r_\infty),\\
  &T_4\to-\tfrac{\bar\Phi'(\bar r_\infty)}{\bar D},\quad T_3\to\bar T_3^\infty=\tfrac{\dd\bar\Phi_\epsilon}{\dd\epsilon}\big|_{0},
\end{align}
and the losses converge to
\begin{align*}
  \Ltrain(\psi_p=\infty)&=\bar\Phi(\bar r_\infty),
  \qquad
  \Ltest(\psi_p=\infty)=1-\tfrac{2}{m}\mu_1^2\Delta_t\,\bar r_\infty\bar g(\bar r_\infty)-\mu_1^2\tfrac{\bar\Phi'(\bar r_\infty)}{\bar D},\\
   \Linfty(\psi_p=\infty)&=1-\tfrac{2}{m}\mu_1^2\Delta_t\,\bar r_\infty\bar g(\bar r_\infty)+\bar T_3^\infty-\Delta_t\mu_1^2\tfrac{\bar\Phi'(\bar r_\infty)}{\bar D},
\end{align*}
where $\bar T_3^\infty=\frac{\dd\bar\Phi_\epsilon}{\dd\epsilon}\big|_0$ is the derivative at $\epsilon=0$.
\end{proposition}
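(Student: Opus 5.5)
The plan is to mirror the proof of Proposition~\ref{prop:app-psip-inf}, now with the scaled order parameters $\bar q=\psi_p q$ and $\bar r=\psi_p r$. I start from the isotropic fixed-point system of Proposition~\ref{prop:app-saddle} at $z=-\tilde\lambda\psi_p$, $\zeta=0$.

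The key observation is that when $\lambda$ grows like $\psi_p$, both $q$ and $r$ are $O(\psi_p^{-1})$. Then the combinations $\frac{\psi_p}{\psi_n m}\ell_i$ appearing in $L_i$ become $\frac{1}{\psi_n m}\bar\ell_i$, so $L_i\to\bar L_i(\bar q,\bar r)=O(1)$.

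\textbf{Reducing the $q$-equation.} Multiply the $q$-equation by $q/\psi_p$. This gives
\[
-\tilde\lambda\psi_p q=\frac{1-\psi_p}{\psi_p}-\frac{r}{\psi_p q}+O(\psi_p^{-1}).
\]
The ratio $r/(\psi_p q)=\bar r/(\psi_p\bar q)$ vanishes, and the $1/L_i$ terms multiplied by $q$ are $O(\psi_p^{-1})$. Hence $\tilde\lambda\bar q\to1$, so $\bar q\to 1/\tilde\lambda$.

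\textbf{Reducing the $r$-equation.} Multiply the $r$-equation by $1/\psi_p$:
\[
\frac{1}{\bar r}=\frac{1}{\bar q}+\frac{\mu_1^2(e^{-2t}+\Delta_t/m)}{\bar L_1}+\frac{(m-1)\mu_1^2\Delta_t/m}{\bar L_2}.
\]
Inserting $\bar q=1/\tilde\lambda$ yields $\frac1{\bar r}=\tilde\lambda+\bar R(\bar r)$, with $\bar R$ exactly as stated. Existence and uniqueness of the root follow from monotonicity: the left side decreases from $+\infty$ to $0$, while $\tilde\lambda+\bar R$ is positive and nonincreasing in $\bar r$.

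\textbf{Traces at the saddle.} Substituting into $f_1$ and $f_2$ of Theorem~\ref{thm:main}, I use $\mathcal{T}\to\bar g(\bar r_\infty)$ and $\psi_p r=\bar r$. This gives
\[
T_1\to1-\frac{\bar g}{m}+\frac{\mu_1^2\Delta_t\bar r}{m^2}\bar g^2=1-\bar\Phi,
\qquad
T_2\to\frac{\mu_1\sqrt{\Delta_t}}{m}\,\bar r\,\bar g .
\]
Consequently $\Ltrain=1-T_1\to\bar\Phi$.

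\textbf{Derivatives $T_4$ and $T_5$.} For $T_5$, the shift $z\to z+\delta z$ with $\delta z=O(1)$ is negligible against $\tilde\lambda\psi_p$: it changes $\bar q$ by $O(\psi_p^{-1})$, so $T_5\to0$.

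For $T_4$, the term $\psi_p\zeta$ enters the $r$-equation against $\psi_p/\bar r$, so the reduced equation becomes $\frac1{\bar r}=-\zeta+\tilde\lambda+\bar R(\bar r)$ with no rescaling of $\zeta$. Differentiating gives $\dd\bar r/\dd\zeta=1/\bar D$, hence $T_4\to-\bar\Phi'(\bar r_\infty)/\bar D$. I still need to check that the $\bar q$-dependence contributes nothing at leading order; this holds because $\partial_\zeta\bar q\to0$.

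\textbf{The trace $T_3$.} Here I take the $\epsilon$-deformed equations \eqref{eq:app-def-r}--\eqref{eq:app-def-q} and the prefactor \eqref{eq:app-f1}, and rewrite them in the scaled variables. The quantity $\sigma_H^2=\mu_1^2e^{-2t}r+\mu_*^2\kappa q$ scales as $\bar\sigma_\star^2/\psi_p$. With this, $b\to\bar b$, $\gamma\to\bar\gamma$ and $\chi\to\bar\chi$ all stay $O(1)$, so $\epsilon$ needs no rescaling, unlike in Regime~(i).

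After dividing by $\psi_p$, the extra term in the $r$-equation becomes $\epsilon\bar K/(1+\epsilon\bar b)$, using $\partial_r=\psi_p\partial_{\bar r}$. The extra term in the $q$-equation is subleading, so $\bar q$ stays at $1/\tilde\lambda$. The prefactor becomes $1-\bar\Phi_\epsilon$, and therefore
\[
T_3=-\partial_\epsilon T_1\to\frac{\dd\bar\Phi_\epsilon}{\dd\epsilon}\Big|_{0}.
\]
The losses then follow by inserting these limits into \eqref{eq:loss_trainm}--\eqref{eq:loss_traininf}.

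\textbf{Main obstacle.} I expect the hardest step to be the $T_3$ bookkeeping. One must check that every factor $\psi_p/\psi_n$ in $\gamma$, $b$, $\chi$ and in $\partial_r\gamma$ recombines into $O(1)$ quantities, and that the correction to the $q$-equation truly vanishes. My first move is to substitute $r=\bar r/\psi_p$ and $q=\bar q/\psi_p$ symbolically and track the powers of $\psi_p$ term by term.

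A secondary issue is justifying that the limits $\psi_p\to\infty$ and $\partial_\zeta,\partial_\epsilon$ commute. I will handle this by the implicit function theorem applied to the reduced system, whose Jacobian $\bar D>0$ is nonvanishing.
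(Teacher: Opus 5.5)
Your proposal is correct and follows essentially the same route as the paper: rescale $\bar q=\psi_p q$ and $\bar r=\psi_p r$, read $\bar q\to1/\tilde\lambda$ off the leading order of the $q$-equation, and read the scalar fixed point off the $r$-equation divided by $\psi_p$; then get $T_4$ by implicit differentiation in the unrescaled $\zeta$, $T_5\to0$ from $\partial_z\bar q=O(\psi_p^{-1})$, and $T_3$ from the $\epsilon$-deformed saddle, where $\sigma_H^2=\bar\sigma_\star^2/\psi_p$ means $\epsilon$ needs no rescaling. One small slip: since $1/\bar r$ and $\tilde\lambda+\bar R(\bar r)$ are both decreasing, your monotonicity remark does not give uniqueness of $\bar r_\infty$; instead note that $\tilde\lambda\bar r+\bar r\bar R(\bar r)$ is strictly increasing from $0$ to $\infty$ (each $\bar r/\bar L_i^\star(\bar r)$ is increasing) and that its derivative at the root equals $\bar r_\infty\bar D$, which also justifies the $\bar D>0$ you invoke for the implicit function theorem.
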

\begin{proof}
\emph{Recall the finite-$\psi_p$ equations} of Proposition~\ref{prop:app-saddle} at $\zeta=0$,
\begin{align}
  \frac1r &= \frac1q+\frac{\mu_1^2\psi_p(e^{-2t}+\Delta_t/m)}{L_1}+\frac{(m-1)\mu_1^2\psi_p\Delta_t/m}{L_2},\\
  z\psi_p &= \frac{1-\psi_p}{q}-\frac{r}{q^2}+\frac{\psi_p\mu_*^2(1+(m-1)\kappa)/m}{L_1}+\frac{(m-1)\psi_p\mu_*^2(1-\kappa)/m}{L_2},
\end{align}
with $L_i=1+\frac{\psi_p}{\psi_nm}\ell_i$, $\ell_1=\mu_1^2r(e^{-2t}m+\Delta_t)+\mu_*^2q(1+(m-1)\kappa)$ and $\ell_2=\mu_1^2r\Delta_t+\mu_*^2q(1-\kappa)$.
\emph{Take $\psi_p\to\infty$ with $z=-\lambda=-\tilde\lambda\psi_p$.} With $\bar q=\psi_p q$, $\bar r=\psi_p r$, the resolvent factors become $L_i=\bar L_i(\bar q,\bar r)+O(\psi_p^{-1})$; crucially they remain $O(1)$, so---unlike regime~(i)---the $+1$ cannot be dropped. Dividing the $q$-equation by $\psi_p^2$, all terms vanish except the leading one, leaving $-\tilde\lambda=-1/\bar q$, i.e.\ $\bar q_\infty=1/\tilde\lambda$. Dividing the $r$-equation by $\psi_p$ gives $\frac1{\bar r}+\zeta=\frac1{\bar q}+\frac{\mu_1^2(e^{-2t}+\Delta_t/m)}{\bar L_1}+\frac{(m-1)\mu_1^2\Delta_t/m}{\bar L_2}$; at $\zeta=0$, $\bar q=1/\tilde\lambda$ this is the scalar fixed point $\frac1{\bar r}=\tilde\lambda+\bar R(\bar r)$. Substituting $r=\bar r/\psi_p$ into $T_1=1-\frac{\mathcal T}{m}+\frac{\mu_1^2\psi_p\Delta_t r}{m^2}\mathcal T^2$, with $\mathcal T\to\bar g(\bar r_\infty)$ and $\psi_p r\to\bar r_\infty$, gives $T_1\to1-\bar\Phi(\bar r_\infty)$, bounded away from $1$: the training error stays positive. Likewise $T_2\to\frac1m\mu_1\sqrt{\Delta_t}\bar r_\infty\bar g(\bar r_\infty)$. For $T_4=\partial_\zeta T_1$, implicit differentiation of $\frac1{\bar r}+\zeta=\tilde\lambda+\bar R(\bar r)$ gives $-\frac1{\bar r^2}\partial_\zeta\bar r+1=\bar R'(\bar r)\partial_\zeta\bar r$, hence $\partial_\zeta\bar r=1/\bar D$ and $T_4=-\bar\Phi'(\bar r_\infty)/\bar D$. For $T_5=\partial_z T_1$: since $\bar q\approx-\psi_p/z$, $\partial_z\bar q=\psi_p/z^2=1/(\tilde\lambda^2\psi_p)\to0$, and as the $z$-dependence of $\bar r$ is mediated by $\bar q$, $T_5\to0$. Assembling $\Ltest=1-2\mu_1\sqrt{\Delta_t}T_2+\mu_1^2 T_4+\mu_*^2 T_5$ yields the stated $\Ltrain$ and $\Ltest$.

Here the field variance of the $T_3$ deformation is small, $\sigma_H^2=\mu_1^2e^{-2t}r+\mu_*^2\kappa q=\bar\sigma_\star^2/\psi_p$ with $\bar\sigma_\star^2(\bar r)=\mu_1^2e^{-2t}\bar r+\frac{\mu_*^2\kappa}{\tilde\lambda}$, so the deformation parameters stay $O(1)$ and \emph{no} rescaling of $\epsilon$ is needed (contrast regime~(i), where $b=O(\psi_p)$):
\begin{equation}
  \bar\gamma=1-\tfrac{\bar\sigma_\star^2}{\psi_n\bar L_1^\star},\qquad
  \bar b=\tfrac{\bar\sigma_\star^2}{\psi_n}\bar\gamma,\qquad
  \bar\chi=\tfrac{\bar\sigma_\star^4}{\psi_n^2 m(\bar L_1^\star)^2},\qquad
  \bar K=\mu_1^2e^{-2t}\bar\gamma+\bar\sigma_\star^2\bar\gamma',\quad \bar\gamma'=\partial_{\bar r}\bar\gamma.
\end{equation}
The deformed saddle becomes $\frac1{\bar r}=\tilde\lambda+\bar R(\bar r)+\frac{\epsilon\bar K(\bar r)}{1+\epsilon\bar b}$, giving $\bar r'(0)=-\bar K(\bar r_\infty)/\bar D$. With $\bar G(\epsilon)=\bar g(\bar r(\epsilon))+m\frac{\epsilon\bar\chi}{1+\epsilon\bar b}$ (so $\bar G(0)=\bar g(\bar r_\infty)$, $\bar G'(0)=\bar g'(\bar r_\infty)\bar r'(0)+m\bar\chi(\bar r_\infty)$) the deformed prefactor is $T_1(\epsilon)\to1-\bar\Phi_\epsilon$, $\bar\Phi_\epsilon=\frac{\bar G(\epsilon)}{m}-\frac{\mu_1^2\Delta_t\bar r(\epsilon)}{m^2}\bar G(\epsilon)^2$, and since $T_3=-\frac{\dd T_1(\epsilon)}{\dd\epsilon}|_0$,
\begin{equation}
  \bar T_3^\infty=\frac{\dd\bar\Phi_\epsilon}{\dd\epsilon}\Big|_0=\frac{\bar G'(0)}{m}-\frac{\mu_1^2\Delta_t}{m^2}\big[\bar r'(0)\,\bar g(\bar r_\infty)^2+2\bar r_\infty\bar g(\bar r_\infty)\,\bar G'(0)\big].
\end{equation}
Inserting $T_2$, $\bar T_3^\infty$ and $T_4$, and using $T_5\to0$, into $\Linfty=1-2\mu_1\sqrt{\Delta_t}T_2+T_3+\Delta_t\mu_1^2T_4+\mu_*^2(1-\kappa)T_5$ gives the stated $\Linfty(\psi_p=\infty)$.
\end{proof}

\subsection{The \texorpdfstring{$m\to\infty$}{m -> infinity} limit of the equations}
\label{app:m_infty_limit}
\begin{proposition}[$m\to\infty$ limit]\label{prop:app-minf}
As $m\to\infty$ at fixed $(\psi_p,\psi_n,t,\lambda)$,  the order parameters $(q,r)=(q(z,\zeta),r(z,\zeta))$ satisfy
\begin{align}
    \frac1r+\psi_p\zeta&=\frac1q+\frac{\mu_1^2\psi_pe^{-2t}}{L_1^\infty}+\psi_p\mu_1^2\Delta_t,\\
    \psi_pz&=\frac{1-\psi_p}{q}-\frac{r}{q^2}+\frac{\psi_p\mu_*^2\kappa}{L_1^\infty}+\psi_p\mu_*^2(1-\kappa).
\end{align}
The training loss and the empirical test loss coincide, $\Ltrain=\Linfty$, with $\Linfty=1-\mu_1^2\psi_p\Delta_tr$, and
\begin{equation}
\Ltest(m=\infty)=1-2\mu_1^2\Delta_t\psi_pr+\mu_1^2\psi_p\Delta_t\bigl(\mu_1^2\partial_\zeta r+\mu_*^2\partial_z r\bigr),
\end{equation}
with $r=r(-\lambda,0)$ and the derivatives evaluated at $(z,\zeta)=(-\lambda,0)$.
\end{proposition}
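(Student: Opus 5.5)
The plan is to take $m\to\infty$ in the finite-$m$ isotropic system of Proposition~\ref{prop:app-saddle} (Remark) and in the explicit trace formulas $f_1,f_2$ and $\mathcal{T}(\epsilon)$ of Appendix~\ref{app:comp-traces}, at fixed $(\psi_p,\psi_n,t,\lambda)$ and after $d\to\infty$.

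\textbf{Step 1: asymptotics of the resolvent factors.} Since $\mu_1^2 r\,e^{-2t}m$ and $\mu_*^2 q(m-1)\kappa$ are both $O(m)$, the first factor has a finite limit,
\begin{equation*}
L_1\to L_1^\infty=1+\frac{\psi_p}{\psi_n}\big[\mu_1^2e^{-2t}r+\mu_*^2\kappa q\big]=1+\frac{\psi_p}{\psi_n}\sigma_H^2 ,
\end{equation*}
while $L_2=1+O(1/m)$.

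\textbf{Step 2: limiting saddle-point equations.} In the $r$-equation:
\begin{itemize}
\item the $L_1$ term tends to $\mu_1^2\psi_p e^{-2t}/L_1^\infty$, because $\Delta_t/m\to0$;
\item the $L_2$ term tends to $\psi_p\mu_1^2\Delta_t$, because $(m-1)/m\to1$.
\end{itemize}
In the $q$-equation:
\begin{itemize}
\item $(1+(m-1)\kappa)/m\to\kappa$;
\item $(m-1)(1-\kappa)/m\to 1-\kappa$.
\end{itemize}
These limits give exactly the stated system. Uniqueness and continuity of the solution branch in $1/m$ (for $z=-\lambda<0$) will be taken from the analyticity of the Stieltjes-transform fixed point.

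\textbf{Step 3: traces and losses.} For $\mathcal{T}=1/L_1+(m-1)/L_2$ the key expansion is
\begin{equation*}
\mathcal{T}=m-\frac{\psi_p}{\psi_n}\big[\mu_1^2\Delta_t r+\mu_*^2(1-\kappa)q\big]+\frac{1}{L_1^\infty}-1+O(1/m).
\end{equation*}
Hence $\mathcal{T}/m\to1$, and:
\begin{itemize}
\item $f_1\to\mu_1^2\psi_p\Delta_t r$, so $\Ltrain=1-\mu_1^2\psi_p\Delta_t r$;
\item $f_2\to\mu_1\sqrt{\Delta_t}\psi_p r$, so $T_2=\mu_1\sqrt{\Delta_t}\psi_p r$.
\end{itemize}
Then $T_4=\mu_1^2\psi_p\Delta_t\,\partial_\zeta r$ and $T_5=\mu_1^2\psi_p\Delta_t\,\partial_z r$, with $r=r(z,\zeta)$ solving the limiting system. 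Inserting these into Eq.~\ref{eq:loss_test} gives the stated $\Ltest$.

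For $\Ltrain=\Linfty$, the cleanest argument is directly at the level of the loss: at fixed $\mathcal{D}$, $\Ltrain\to\Linfty$ as $m\to\infty$ by the law of large numbers over the noises. Both losses are continuous functionals of the ridge minimizer, which itself converges ($\vU_n^m\to\vU_n^\infty$, $\vV_n^m\to\vV_n^\infty$). As a consistency check, I will verify it through Eq.~\ref{eq:loss_traininf}. This requires $T_3$ from $\mathcal{T}(\epsilon)$ with the term $m\epsilon\chi/(1+\epsilon b)$. Here $\chi=O(1/m)$, so $m\chi$ stays finite and contributes an $O(1)$ amount to $\mathcal{T}(\epsilon)/m$ only at order $1/m$. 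However, its square in $f_1$ is multiplied by $\mu_1^2\psi_p\Delta_t r\,\mathcal{T}^2/m^2$, which does produce $O(1)$ contributions.

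\textbf{Main obstacle.} Step 3 is delicate because the limit must commute with the derivatives $\partial_z,\partial_\zeta,\partial_\epsilon$ taken through the saddle. Several quantities appear as $m\cdot O(1/m)$ products, and in $f_1$ the $O(1)$ part of $\mathcal{T}$ is multiplied by $\mathcal{T}/m^2\cdot m$, so subleading terms can survive. I will expand $\mathcal{T}=m+\tau_0+O(1/m)$ and track these terms carefully, checking that the $\tau_0$-dependent contributions to $f_1$, namely $-\tau_0/m$ and $2\mu_1^2\psi_p\Delta_t r\,\tau_0/m$, vanish as $m\to\infty$. Uniform convergence of the analytic saddle in a neighborhood of $(-\lambda,0,0)$ then justifies exchanging the limit with differentiation.
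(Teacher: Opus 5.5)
Your route is the same as the paper's: $L_1\to L_1^\infty$, $L_2\to1$, and $\mathcal{T}=m+\tau_0+O(1/m)$. This gives $T_1\to\mu_1^2\psip\Delta_t r$, $T_2\to\mu_1\sqrt{\Delta_t}\psip r$, $T_4\to\mu_1^2\psip\Delta_t\,\partial_\zeta r$ and $T_5\to\mu_1^2\psip\Delta_t\,\partial_z r$. The limiting system and the formula for $\Ltest$ then follow as you say, and your care about exchanging $m\to\infty$ with $\partial_z,\partial_\zeta$ improves on the paper's one-line argument. The problem is the step $\Ltrain=\Linfty$.

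\textbf{The ridge penalty is dropped.} The law-of-large-numbers argument (which is what the paper means by ``by definition'') identifies the \emph{unpenalized} empirical DSM loss with $\Linfty$ at $m=\infty$. But the $\Ltrain=1-T_1(-\lambda,0,0)$ from which you read off $1-\mu_1^2\psip\Delta_t r$ is the full ridge objective. In the appendix decomposition it contains $\frac{\Delta_t\lambda}{d}\Tr\big((\vV_n^m)^T(\vU_n^m+\lambda\vI_p)^{-2}\vV_n^m\big)=\lambda T_5$, whereas $\Linfty$ has no penalty. The unpenalized training loss equals $1-T_1-\lambda T_5$, so what your argument actually gives is
\[
\Linfty=1-\mu_1^2\psip\Delta_t\,r-\lambda\,\mu_1^2\psip\Delta_t\,\partial_z r ,\qquad \partial_z r>0 .
\]
Your planned consistency check through the $\Linfty$ formula confirms this, not the statement. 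As $m\to\infty$, the noise-fluctuation part of $\vF\vF^T/(nm)$ is Wishart-like with aspect ratio $\psip/(\psin m)\to0$. This, rather than a fixed-$d$ law of large numbers, is also how to justify taking the limit after $d\to\infty$. Hence $\vF\vF^T/(nm)\to\vH\vH^T/n+\mu_1^2\Delta_t\vW\vW^T/d+\mu_*^2(1-\kappa)\vI_p$, and $\epsilon$ enters $\mathcal{G}_{z,\zeta,\epsilon}^{-1}$ only through $(1+\epsilon)\vH\vH^T/n$. A scaling argument then gives
\[
-\partial_\epsilon r\big|_{\epsilon=0}=r-\mu_1^2\Delta_t\,\partial_\zeta r-\big(\mu_*^2(1-\kappa)+\lambda\big)\,\partial_z r .
\]
Inserting $T_3=-\mu_1^2\psip\Delta_t\,\partial_\epsilon r$ into the $\Linfty$ decomposition leaves exactly the residual $-\lambda T_5$. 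So at fixed $\lambda>0$, ``$\Ltrain=\Linfty=1-\mu_1^2\psip\Delta_t r$'' holds only up to this $O(\lambda)$ penalty term. The term is bounded uniformly in $\lambda$ thanks to the $\mu_*^2(1-\kappa)\vI_p$ shift, so it vanishes as $\lambda\to0^+$. The paper's proof has the same blind spot. You should state the result as $\Linfty=\Ltrain-\lambda T_5$, or restrict it to the unpenalized loss, rather than expect the check to close.

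\textbf{The claimed $O(1)$ contribution from $\chi$ is wrong.} You assert that the explicit term $m\epsilon\chi/(1+\epsilon b)$ produces $O(1)$ contributions through $\mathcal{T}^2/m^2$. It does not. Since $\mathcal{T}(\epsilon)/m-\mathcal{T}(0)/m=\epsilon\chi/(1+\epsilon b)$ with $\chi=O(1/m)$, the explicit $\epsilon$-derivative of $f_1$ at $\epsilon=0$ is $-\chi+2\mu_1^2\psip\Delta_t r\,\chi\,\mathcal{T}/m=O(1/m)$. The entire $O(1)$ value of $T_3$ comes from the implicit dependence through the deformed saddle, $T_3\to-\mu_1^2\psip\Delta_t\,\partial_\epsilon r|_{\epsilon=0}$. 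This is finite because the $\epsilon$-terms of the deformed equations stay $O(1)$, with $\gamma\to1/L_1^\infty$ and $b\to\psip\sigma_H^2/(\psin L_1^\infty)$.
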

We recover the results of \citet{george_2025, bonnaire2025diffusionmodelsdontmemorize}\footnote{In Theorem 3.2 of \citet{george_2025}, they denote $r$ by $\mathcal{K}$.}.
\begin{proof}
By Proposition~\ref{prop:app-saddle} (isotropic case), at finite $m$ the order parameters obey
\begin{align}
  \frac1r+\psi_p\zeta &= \frac1q+\frac{\mu_1^2\psi_p\!\left(e^{-2t}+\frac{\Delta_t}{m}\right)}{L_1}+\frac{(m-1)\mu_1^2\psi_p\frac{\Delta_t}{m}}{L_2},\\
  \psi_p z &= \frac{1-\psi_p}{q}-\frac{r}{q^2}+\frac{\psi_p\mu_*^2\frac{1+(m-1)\kappa}{m}}{L_1}+\frac{(m-1)\psi_p\mu_*^2\frac{1-\kappa}{m}}{L_2},
\end{align}
with $L_1=1+\frac{\psi_p}{\psi_nm}[\mu_1^2 r(e^{-2t}m+\Delta_t)+\mu_*^2 q(1+(m-1)\kappa)]$ and $L_2=1+\frac{\psi_p}{\psi_nm}[\mu_1^2 r\Delta_t+\mu_*^2 q(1-\kappa)]$. In $L_1$ the bracket grows linearly in $m$, since $\frac{1}{m}[\mu_1^2 r(e^{-2t}m+\Delta_t)+\mu_*^2 q(1+(m-1)\kappa)]\to\mu_1^2 e^{-2t}r+\mu_*^2\kappa q$; hence $L_1\to L_1^\infty=1+\frac{\psi_p}{\psi_n}(\mu_1^2 e^{-2t}r+\mu_*^2\kappa q)$, whereas the $L_2$-bracket stays $O(1)$ so $L_2\to1$. Using in addition $e^{-2t}+\frac{\Delta_t}{m}\to e^{-2t}$, $\frac{m-1}{m}\to1$, $\frac{1+(m-1)\kappa}{m}\to\kappa$ and $\frac{(m-1)(1-\kappa)}{m}\to1-\kappa$, the four fractions tend to $\frac{\mu_1^2\psi_p e^{-2t}}{L_1^\infty}$, $\psi_p\mu_1^2\Delta_t$, $\frac{\psi_p\mu_*^2\kappa}{L_1^\infty}$ and $\psi_p\mu_*^2(1-\kappa)$, giving the stated equations.  
For the losses, $\tau=\mathcal{T}/m\to1$ with $\tau-1=O(1/m)$, so $\partial_z\tau,\partial_\zeta\tau\to0$ and $f_1(q,r,0)\to\mu_1^2\psi_p\Delta_tr$, whence
\begin{equation}
    T_1\to\mu_1^2\psi_p\Delta_tr,
    \quad
    T_2\to\mu_1\sqrt{\Delta_t}\psi_pr,
    \quad
    T_4\to\mu_1^2\psi_p\Delta_t\,\partial_\zeta r,
    \quad
    T_5\to\mu_1^2\psi_p\Delta_t\,\partial_z r.
\end{equation}
At $m=\infty$ the empirical noise average in $\Ltrain$ coincides with the exact one, so $\Ltrain=\Linfty$ by definition and the common value follows from $\Ltrain=1-T_1$. Substituting into the loss decomposition gives the stated expressions, the derivatives $\partial_\zeta r,\partial_z r$ being obtained by implicit differentiation of the two self-consistent equations.
\end{proof}

\subsection{Extension to sub-Gaussian data}
\label{app:subgaussian}

Throughout the main text we assumed $\vx^\nu\sim\mathcal{N}(0,\vI_d)$. We show here that the results on the losses extend to any zero-mean distribution with covariance $\vSigma$ and sub-Gaussian tails, provided that $\Tr(\vSigma)/d\to\sigma_x^2=O(1)$ as $d\to\infty$, and that the activation satisfies $\mu_0:=\mathbb{E}_{z\sim\mathcal{N}(0,1)}[\sigma(\sqrt{e^{-2t}\sigma_x^2+\Delta_t}\,z)]=0$. We emphasize that the expressions for the bias-variance decomposition relies on an isotropy argument and thus cannot be extended to the anisotropy case. The argument has two steps.

\paragraph*{The preactivation is asymptotically Gaussian.}
The preactivation of neuron $\alpha$ on sample $\nu$ at noise level $t$ is $h_\alpha^{\nu\mu} = e^{-t}\vW_\alpha\cdot\vx^\nu/\sqrt{d}+\sqrt{\Delta_t}\vW_\alpha\cdot\vxi^{\nu\mu}/\sqrt{d}$. The noise term is exactly Gaussian. For the data term $g_\alpha^\nu=e^{-t}\vW_\alpha\cdot\vx^\nu/\sqrt{d}$, conditioning on $\vW$, the CLT for linear forms of sub-Gaussian vectors yields that $\vg^\nu=(g_\alpha^\nu)_\alpha$ is asymptotically Gaussian with covariance $e^{-2t}\vW\vSigma\vW^T/d$. This covariance matrix concentrates to a diagonal one: its diagonal entries converge to $e^{-2t}\Tr(\vSigma)/d=:e^{-2t}\sigma_x^2$ while its off-diagonal entries vanish, since $\vW_\alpha$ and $\vW_\beta$ are independent for $\alpha\neq\beta$. Setting $\Gamma_t=e^{-2t}\sigma_x^2+\Delta_t$, the full preactivation is therefore asymptotically $h_\alpha^{\nu\mu}\sim\mathcal{N}(0,\Gamma_t)$ with independent components across neurons, and the GEP can be applied.

\paragraph*{Redefinition of the GEP coefficients.}
Since the marginal distribution of $h_\alpha^{\nu\mu}$ is $\mathcal{N}(0,\Gamma_t)$, the GEP coefficients must be recomputed accordingly. Writing $h=\sqrt{\Gamma_t}\,z$ with $z\sim\mathcal{N}(0,1)$ and the data preactivation as $e^{-t}\sigma_x u$ with $u\sim\mathcal{N}(0,1)$,

\begin{align}
\mu_1&=\mathbb{E}\big[\sigma'(\sqrt{\Gamma_t}\,z)\big],\\
\mu_*^2&=\mathbb{E}\big[\sigma(\sqrt{\Gamma_t}\,z)^2\big]-\mu_1^2\Gamma_t,\\
\kappa&=\frac{1}{\mu_*^2}\,\mathbb{E}_u\Big[\big(\mathbb{E}_v\big[\sigma(e^{-t}\sigma_x u+\sqrt{\Delta_t}\,v)\big]-\mu_1 e^{-t}\sigma_x u\big)^2\Big],
\end{align}
where $z,u,v\sim\mathcal{N}(0,1)$ are independent.

For $\sigma_x=1$ one recovers $\Gamma_t=1$ and the original definitions of the main text. The GEP itself---the equivalence of resolvent traces between $\vF=\sigma(\vW\vY/\sqrt{d})$ and its linear Gaussian surrogate---extends to sub-Gaussian data by universality \citep{Peche2019,goldt_2021,hu2023}. The data covariance $\vSigma$ enters the self-consistent equations only through the spectral measure $\rho_{\vSigma}$ in Proposition~\ref{prop:app-saddle}.

\paragraph*{Conclusion.} All results of the main text of the three losses extend to zero-mean sub-Gaussian data with covariance $\vSigma$ and $\Tr(\vSigma)/d\to\sigma_x^2=O(1)$, with $\mu_1,\mu_*,\kappa$ redefined via $\Gamma_t=e^{-2t}\sigma_x^2+\Delta_t$, and $\rho_{\vSigma}$ replacing $\delta(\lambda-1)$ in the resolvent integrals.

\subsection{Additional Results and Figures}
\label{app:additional_analytical_figures}

In this section we present additional results and figures on the analytical part.

\subsubsection{Extended discussion on the effect of regularization.}
\label{App:analytical_discussion_t}

In the RFNN model, the optimal training strategy depends critically on the diffusion time $t$. At large $t$, we recover the benign overfitting of the classical regression setting \citep{mei2020} (inset of the right panel of Fig.~\ref{fig:effect_regularization_appendix}): for every fixed $\psi_p$, the test loss is a strictly increasing function of $\lambda$, so zero regularization is always optimal. The global minimum over $(\psi_p,\lambda)$ is attained at $\psi_p\to\infty$ and $\lambda=0$: overparameterization is unconditionally beneficial, and any regularization strictly hurts. On the other hand, at small $t$, malign overfitting fundamentally alters this picture. Without regularization ($\lambda\to0^+$), the optimal model size remains $\psi_p\to\infty$ for $m=1$ (inset of the left panel of Fig.~\ref{fig:effect_regularization_appendix}), but shifts to the underparameterized regime ($\psi_p^*<\psi_n$) for $m>1$ (left panel of Fig.~\ref{fig:effect_regularization_appendix}): overparameterizing an unregularized model is detrimental. However, jointly optimizing over $(\psi_p,\lambda)$ restores the benefit of large models: the global optimum is always $\psi_p\to\infty$ with $\lambda/\psi_p=O(1)$ (right panel of Fig.~\ref{fig:effect_regularization_appendix}), so a large well-regularized model always outperforms a small unregularized one.

\begin{figure}
    \centering
    
    \includegraphics[width=\linewidth]{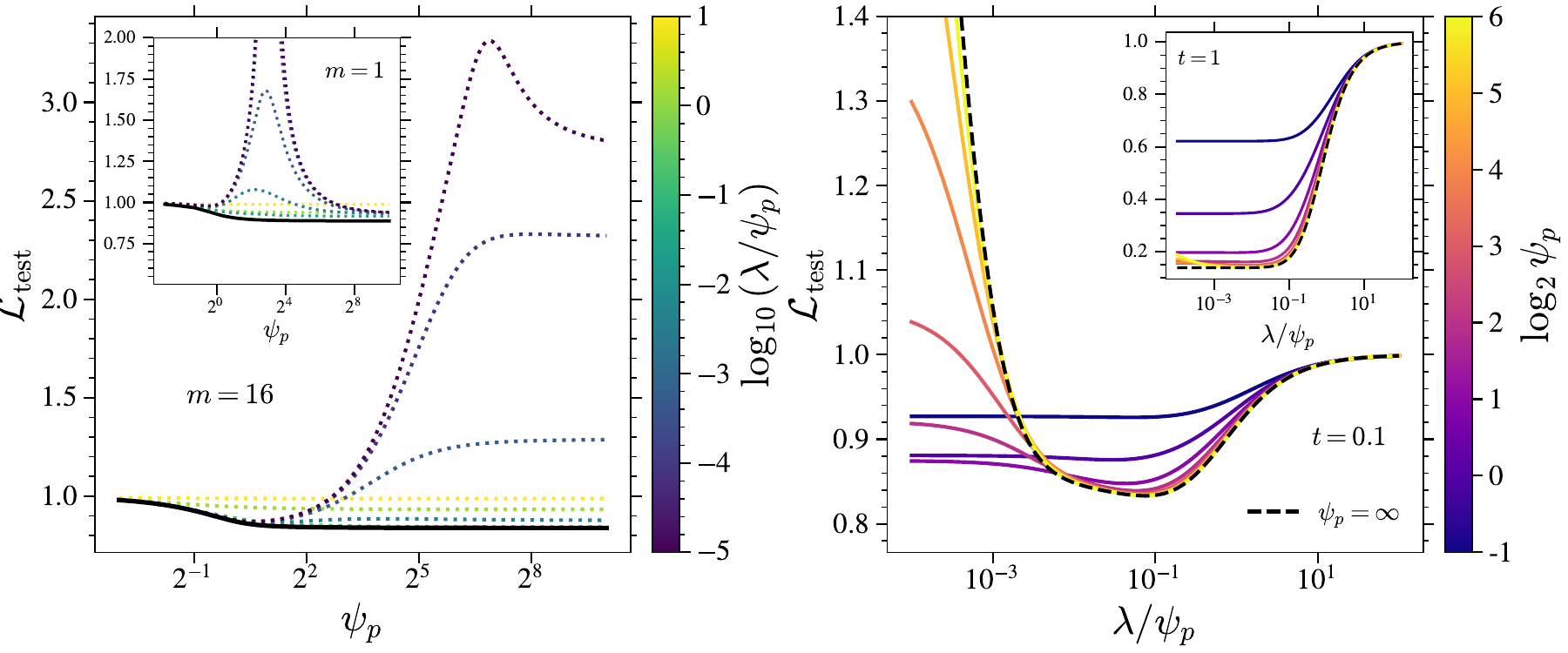}

    \caption{\textbf{Benefits of regularization on generalization.} \emph{(Left)} Test loss $\Ltest$ of the RFNN as a function of $\psi_p=p/d$ for several values of $\lambda_p=\psi_p \tilde{\lambda}$ with $\psi_n=n/d=8,\ \sigma=\tanh,\ t=0.1$ and $m=16$; inset: same at $m=1$. The full black line is the lower envelope. \emph{(Right)} Test loss as a function of $\lambda/\psi_p$ for several values of $\psi_p$ at $m=16,\ t=0.1$, together with the $\psi_p=\infty$ limit (black dashed); inset: same at $t=1.0$.}
    \label{fig:effect_regularization_appendix}
\end{figure}

\subsubsection{Additional results on the bias and variance decomposition}

In Fig.~\ref{fig:Bias_variance_m_1} we show the bias and the variance as functions of the model size $\psi_p$, comparing the analytical curves to numerical experiments at finite $d$, for $m=1$ (\emph{left}) and $m=4$ (\emph{middle}). At $m=1$ the bias decreases monotonically with $\psi_p$, whereas at $m=4$ it passes through a minimum and increases again, before saturating at a finite value. In Fig.~\ref{fig:Bias_variance_m_1} (\emph{right}) we plot the $\psi_p\to\infty$ limit of both quantities as a function of $\psi_n$: the bias saturates at a value that remains $O_{\psi_n}(1)$ for $m>1$, while it decays as a power law in $\psi_n$ for $m=1$; the variance decays as a power law in $\psi_n$ for both $m=1$ and $m>1$.

\begin{figure}
 \centering
    \includegraphics[width=0.3\linewidth]{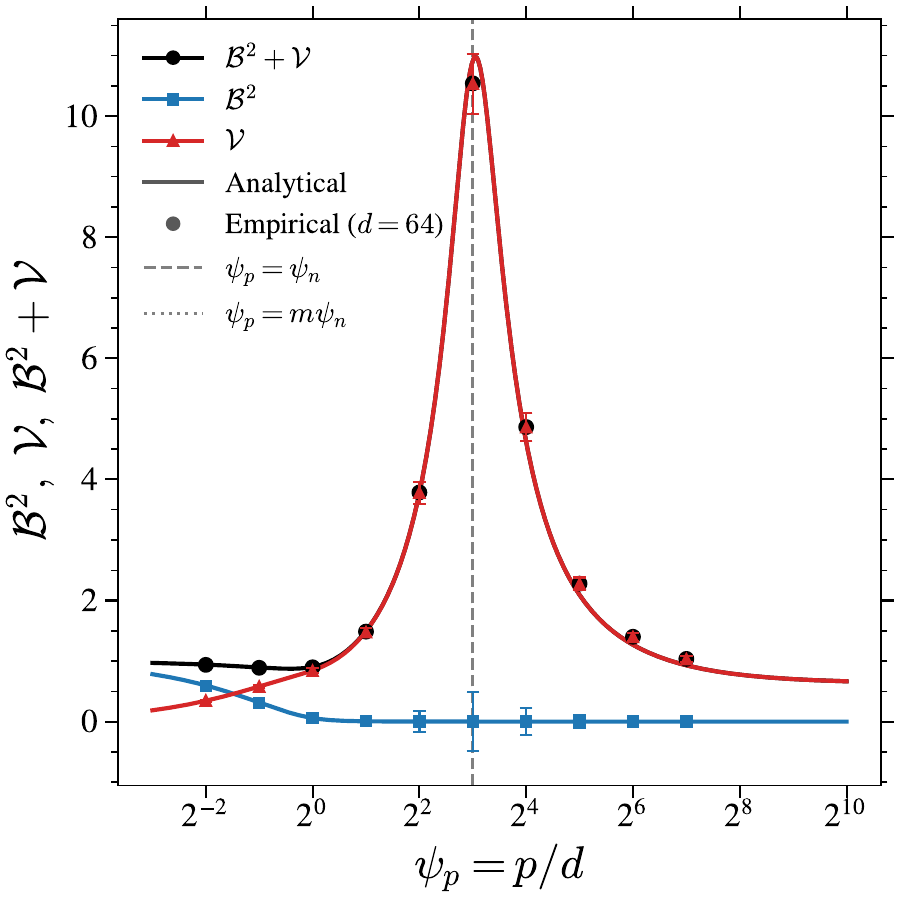}
  \includegraphics[width=0.3\linewidth]{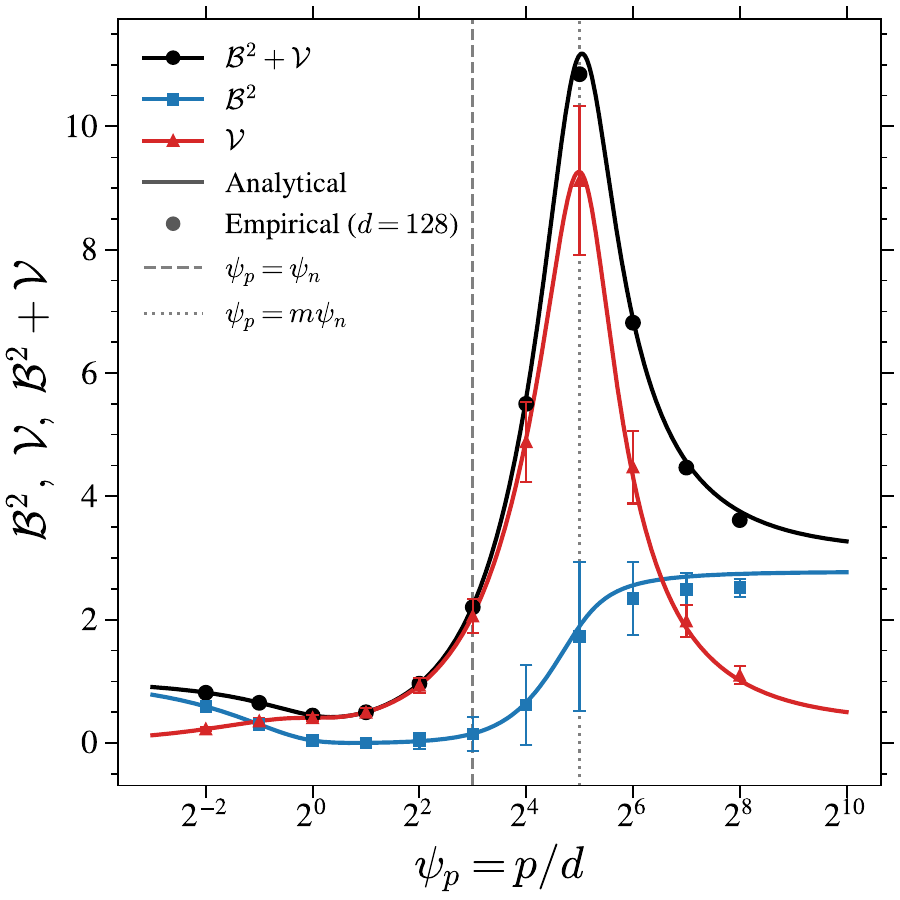}
\includegraphics[width=0.3\linewidth]{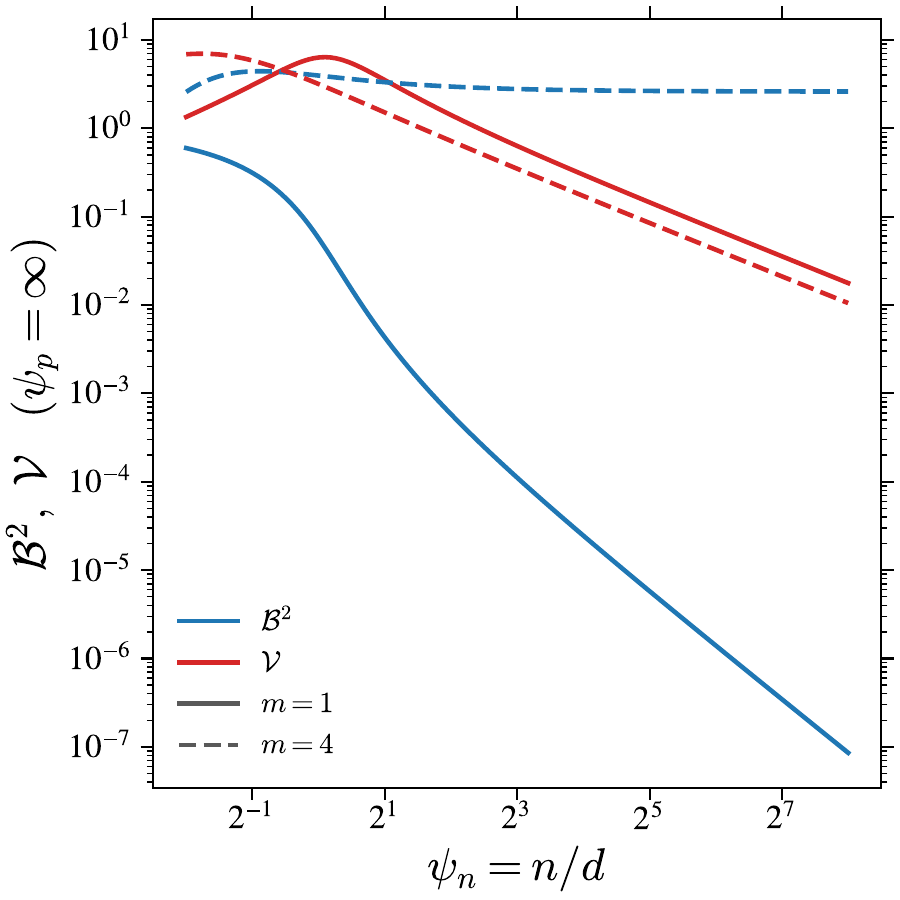}
  
    \caption{(\textit{Left}) Bias and variance for the RFNN model as a function of $\psi_p=p/d$ for $\psi_n=8,\ t=0.1,\ m=1,\ \lambda=10^{-3}$ and finite dimension $d=64$. Solid curves are the analytical predictions; markers are numerical estimates averaged over 20 independent runs. Error bars correspond to $\pm3$SE. (\textit{Middle}) Bias and variance for the RFNN model as a function of $\psi_p=p/d$ for $\psi_n=8,\ t=0.1,\ m=4,\ \lambda=10^{-3}$ and finite dimension $d=128$. Solid curves are the analytical predictions obtained from Theorem~\ref{thm:main} through the bias--variance expressions of Appendix~\ref{app:bias_variance}; markers are numerical estimates from 10 independent runs. Error bars correspond to $\pm3$SE. \emph{(Right)} Bias (full line) and variance (dotted line) as a function of the sample complexity $\psi_n$ for $m=1$ and $m=4$ obtained by solving the analytical equations at  $\psi_p=\infty$ for $t=0.1$.}
    \label{fig:Bias_variance_m_1}
\end{figure}

\subsubsection{Additional Figures}
In this section, we present some additional figures on the RFNN model. The activation function taken for the figures is always $\sigma=\tanh$.

\paragraph*{Colormaps of the losses.} Fig.\ref{fig:colormap_analytical} shows colormaps of the three losses in the $(\psi_n,\psi_p)$ space. We observe that the line $\psi_p=m\psi_n$ delimits a region with low $\Linfty$ and $\Ltrain$ but large $\Ltest$. We also observe that the test loss presents two peaks: one at $\psi_p=m\psi_n$ as discussed in the main text as well as another peak at $m\psi_n=1$ which is related to the two peaks found in \citet{d_Ascoli_triple_2021} that were located at $\psi_n=1$ for supervised learning.

\paragraph*{Scaling of the generalization and noise gaps.} Fig.\ref{fig:generalization_gap_analytical} plots the generalization gap $\Ltest-\Ltrain$ and the noise gap $\Linfty-\Ltrain$ as a function of $\psi_p$ and shows that they are actually functions of $\psi_p/\psi_n$ for a fixed $m$.

\begin{figure}[h]
    \centering
    \includegraphics[width=1.\linewidth]{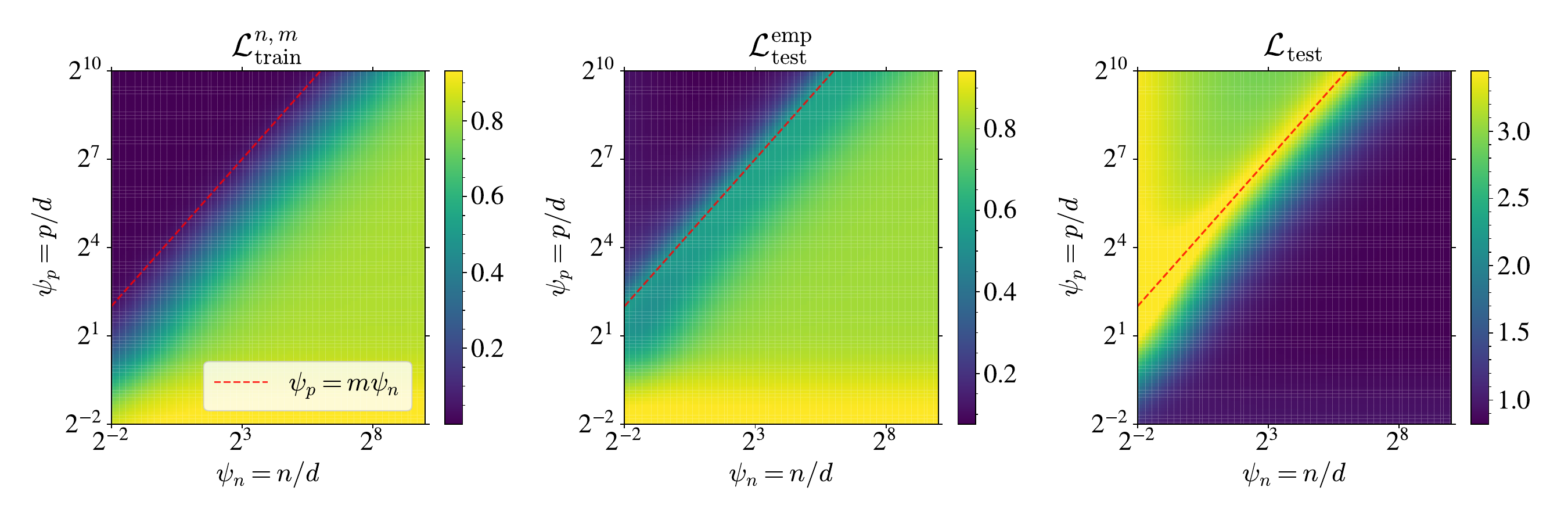}
    \caption{Colormap of the analytical solutions of the three losses for $m=16,\ t=0.1$ as a function of $\psi_n$ and $\psi_p$.}
    \label{fig:colormap_analytical}
\end{figure}

\begin{figure}[h]
    \centering
    \includegraphics[width=0.8\linewidth]{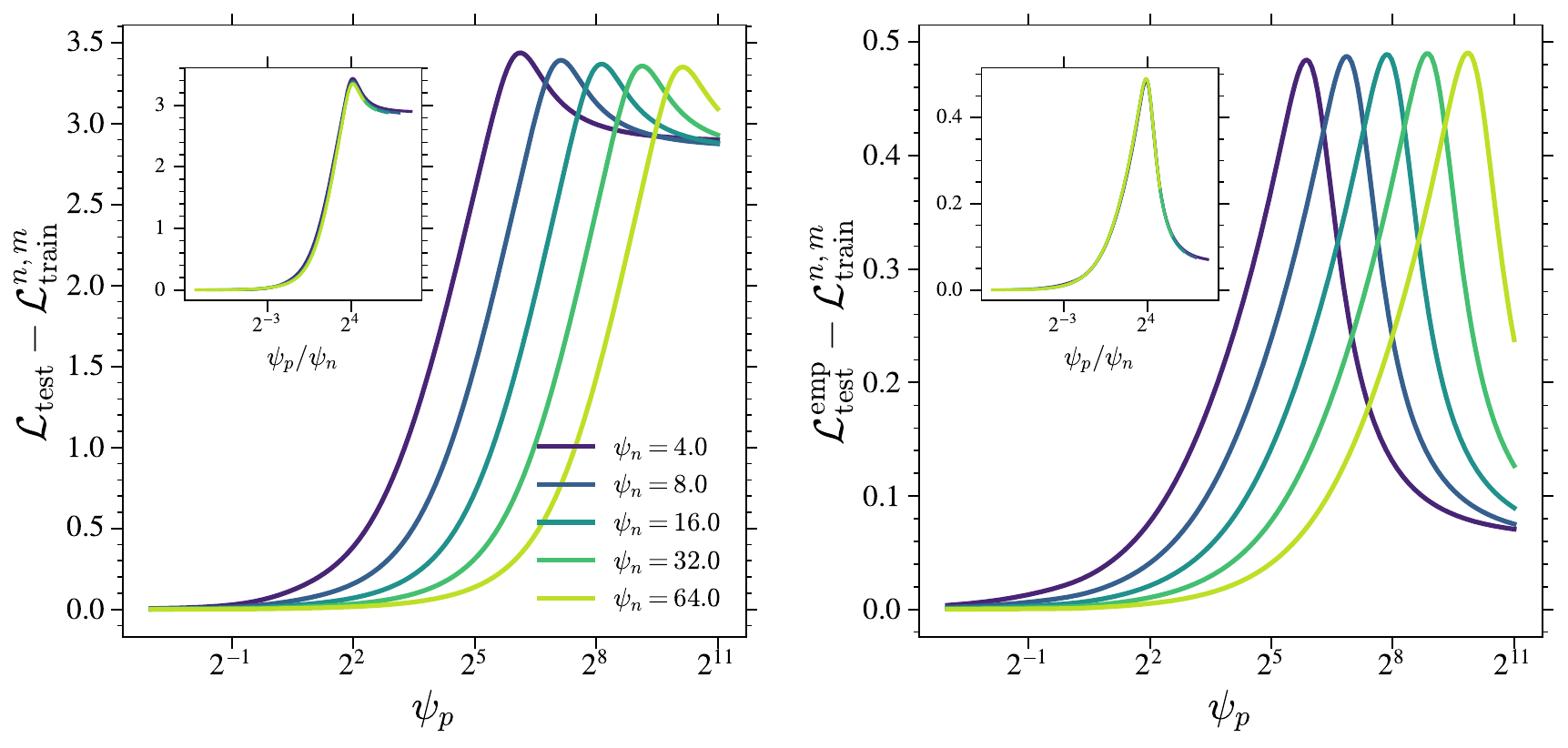}
    \caption{Evolution of the \emph{(Left)} generalization and \emph{(Right)} noise gaps as a function of $\psi_p$ for the RFNN model at $t=0.1,\ \lambda=10^{-3},\ \sigma=\tanh$ and $m=16$. We observe that both gaps scale as $\psi_p/\psi_n$.}
    \label{fig:generalization_gap_analytical}
\end{figure}

\section{LLM Usage}
\label{app:llm_usage}

We describe here the role played by large
language models (LLMs) in the preparation of this work. 
  
\paragraph*{Code.} LLMs were used to write and debug parts of the code, both for the
numerical experiments and for the scripts producing the figures of this paper.

\paragraph*{Writing and presentation.} LLMs were used for copy-editing throughout:
correcting grammar, finding typographical errors, and uniformizing notation, style and
cross-referencing across sections.

\paragraph*{Checking the derivations.} We used LLMs to re-check the algebra of our
analytical results and to audit the manuscript for internal inconsistencies. In particular, an LLM located a typo in an intermediate step of
our computation of the trace $T_1(z,\zeta,0)$ (Appendix~\ref{app:comp-traces}), which had been blocking our progress on the remainder of the derivation.

\paragraph*{Responsibility.} Every LLM-assisted derivation was checked independently and thoroughly by
the authors, and every suggested edit was reviewed before inclusion. The authors take
full responsibility for the content, the originality and the scientific integrity of this
work, including any remaining errors.

%% file: Bibliography_arxiv.bib
@inproceedings{neyshabur2015search,
  title={In Search of the Real Inductive Bias: On the Role of Implicit Regularization in Deep Learning},
  author={Neyshabur, Behnam and Tomioka, Ryota and Srebro, Nathan},
  booktitle={International Conference on Learning Representations (ICLR) Workshop},
  year={2015},
  url={https://arxiv.org/abs/1412.6614}
}

@book{wasserman2004all,
  title={All of statistics: a concise course in statistical inference},
  author={Wasserman, Larry},
  volume={26},
  year={2004},
  publisher={Springer}
}

@article{geiger2020scaling,
  title={Scaling description of generalization with number of parameters in deep learning},
  author={Geiger, Mario and Jacot, Arthur and Spigler, Stefano and Gabriel, Franck and Sagun, Levent and d'Ascoli, Stéphane and Biroli, Giulio and Hongler, Clément and Wyart, Matthieu},
  journal={Journal of Statistical Mechanics: Theory and Experiment},
  volume={2020},
  number={2},
  pages={023401},
  year={2020},
  publisher={IOP Publishing},
  url={https://arxiv.org/abs/1909.11572}
}

@inproceedings{lawrence1997lessons,
  title={Lessons in Neural Network Training: Overfitting May Be Harder than Expected},
  author={Lawrence, Steve and Giles, C. Lee and Tsoi, Ah Chung},
  booktitle={Proceedings of the Fourteenth National Conference on Artificial Intelligence (AAAI-97)},
  pages={540--545},
  year={1997},
  organization={AAAI Press}
}

@article{Biroli_2024,
  title={Dynamical regimes of diffusion models},
  author={Giulio Biroli and Tony Bonnaire and Valentin de Bortoli and Marc M{\'e}zard},
  journal={Nature Communications},
  volume={15},
  number={9957},
  year={2024},
  publisher={Nature Publishing Group},
  url={https://www.nature.com/articles/s41467-024-9957-y},
  doi={10.1038/s41467-024-9957-y}
}

@inproceedings{
george_2025,
title={Denoising Score Matching with Random Features: Insights on Diffusion Models From Precise Learning Curves},
author={Anand Jerry George and Rodrigo Veiga and Nicolas Macris},
booktitle={The 29th International Conference on Artificial Intelligence and Statistics},
year={2026},
url={https://openreview.net/forum?id=ZnplHm2uRt}
}

@InProceedings{Gerace_2020,
  title = 	 {Generalisation error in learning with random features and the hidden manifold model},
  author =       {Gerace, Federica and Loureiro, Bruno and Krzakala, Florent and Mezard, Marc and Zdeborova, Lenka},
  booktitle = 	 {Proceedings of the 37th International Conference on Machine Learning},
  pages = 	 {3452--3462},
  year = 	 {2020},
  editor = 	 {III, Hal Daumé and Singh, Aarti},
  volume = 	 {119},
  series = 	 {Proceedings of Machine Learning Research},
  month = 	 {13--18 Jul},
  publisher =    {PMLR},
  url = 	 {https://proceedings.mlr.press/v119/gerace20a.html}
}

@article{goldt2020modeling,
  title={Modeling the influence of data structure on learning in neural networks: The hidden manifold model},
  author={Goldt, Sebastian and M{\'e}zard, Marc and Krzakala, Florent and Zdeborov{\'a}, Lenka},
  journal={Physical Review X},
  volume={10},
  number={4},
  pages={041044},
  year={2020},
  publisher={APS}
}

@inproceedings{goldt_2021,
  author       = {Sebastian Goldt and
                  Bruno Loureiro and
                  Galen Reeves and
                  Florent Krzakala and
                  Marc M{\'{e}}zard and
                  Lenka Zdeborov{\'{a}}},
  editor       = {Joan Bruna and
                  Jan S. Hesthaven and
                  Lenka Zdeborov{\'{a}}},
  title        = {The Gaussian equivalence of generative models for learning with shallow
                  neural networks},
  booktitle    = {Mathematical and Scientific Machine Learning, 16-19 August 2021, Virtual
                  Conference / Lausanne, Switzerland},
  series       = {Proceedings of Machine Learning Research},
  volume       = {145},
  pages        = {426--471},
  publisher    = {{PMLR}},
  year         = {2021},
  url          = {https://proceedings.mlr.press/v145/goldt22a.html},
}

@InProceedings{Ascoli_2020,
  title = 	 {Double Trouble in Double Descent: Bias and Variance(s) in the Lazy Regime},
  author =       {D'Ascoli, St{\'e}phane and Refinetti, Maria and Biroli, Giulio and Krzakala, Florent},
  booktitle = 	 {Proceedings of the 37th International Conference on Machine Learning},
  pages = 	 {2280--2290},
  year = 	 {2020},
  editor = 	 {III, Hal Daumé and Singh, Aarti},
  volume = 	 {119},
  series = 	 {Proceedings of Machine Learning Research},
  month = 	 {13--18 Jul},
  publisher =    {PMLR},
  url = 	 {https://proceedings.mlr.press/v119/d-ascoli20a.html}
}

@misc{baptista_2025,
      title={Memorization and Regularization in Generative Diffusion Models}, 
      author={Ricardo Baptista and Agnimitra Dasgupta and Nikola B. Kovachki and Assad Oberai and Andrew M. Stuart},
      year={2025},
      eprint={2501.15785},
      archivePrefix={arXiv},
      primaryClass={cs.LG},
      url={https://arxiv.org/abs/2501.15785}, 
}

@inproceedings{Rahimi_2007,
 author = {Rahimi, Ali and Recht, Benjamin},
 booktitle = {Advances in Neural Information Processing Systems},
 editor = {J. Platt and D. Koller and Y. Singer and S. Roweis},
 pages = {},
 publisher = {Curran Associates, Inc.},
 title = {Random Features for Large-Scale Kernel Machines},
 url = {https://proceedings.neurips.cc/paper_files/paper/2007/file/013a006f03dbc5392effeb8f18fda755-Paper.pdf},
 volume = {20},
 year = {2007}
}

@ARTICLE{Vincent_2011,
  author={Vincent, Pascal},
  journal={Neural Computation}, 
  title={A Connection Between Score Matching and Denoising Autoencoders}, 
  year={2011},
  volume={23},
  number={7},
  pages={1661-1674},
  doi={10.1162/NECO_a_00142}}

@inproceedings{kadkhodaie_2024,
title={Generalization in diffusion models arises from geometry-adaptive harmonic representations},
author={Zahra Kadkhodaie and Florentin Guth and Eero P Simoncelli and St{\'e}phane Mallat},
booktitle={The Twelfth International Conference on Learning Representations},
year={2024},
url={https://openreview.net/forum?id=ANvmVS2Yr0}
}

@InProceedings{sohl-dickstein_15,
  title = 	 {Deep Unsupervised Learning using Nonequilibrium Thermodynamics},
  author = 	 {Sohl-Dickstein, Jascha and Weiss, Eric and Maheswaranathan, Niru and Ganguli, Surya},
  booktitle = 	 {Proceedings of the 32nd International Conference on Machine Learning},
  pages = 	 {2256--2265},
  year = 	 {2015},
  editor = 	 {Bach, Francis and Blei, David},
  volume = 	 {37},
  series = 	 {Proceedings of Machine Learning Research},
  address = 	 {Lille, France},
  month = 	 {07--09 Jul},
  publisher =    {PMLR},
  url = 	 {https://proceedings.mlr.press/v37/sohl-dickstein15.html},
}

@article{hyvarinen_05,
  author  = {Aapo Hyv{{\"a}}rinen},
  title   = {Estimation of Non-Normalized Statistical Models by Score Matching},
  journal = {Journal of Machine Learning Research},
  year    = {2005},
  volume  = {6},
  number  = {24},
  pages   = {695--709},
  url     = {http://jmlr.org/papers/v6/hyvarinen05a.html}
}

@inproceedings{
song2021b,
title={Score-Based Generative Modeling through Stochastic Differential Equations},
author={Yang Song and Jascha Sohl-Dickstein and Diederik P Kingma and Abhishek Kumar and Stefano Ermon and Ben Poole},
booktitle={International Conference on Learning Representations},
year={2021},
url={https://openreview.net/forum?id=PxTIG12RRHS}
}

@inproceedings{song2019,
 author = {Song, Yang and Ermon, Stefano},
 booktitle = {Advances in Neural Information Processing Systems},
 editor = {H. Wallach and H. Larochelle and A. Beygelzimer and F. d\textquotesingle Alch\'{e}-Buc and E. Fox and R. Garnett},
 pages = {},
 publisher = {Curran Associates, Inc.},
 title = {Generative Modeling by Estimating Gradients of the Data Distribution},
 url = {https://proceedings.neurips.cc/paper_files/paper/2019/file/3001ef257407d5a371a96dcd947c7d93-Paper.pdf},
 volume = {32},
 year = {2019}
}

@inproceedings{ho2020,
 author = {Ho, Jonathan and Jain, Ajay and Abbeel, Pieter},
 booktitle = {Advances in Neural Information Processing Systems},
 editor = {H. Larochelle and M. Ranzato and R. Hadsell and M.F. Balcan and H. Lin},
 pages = {6840--6851},
 publisher = {Curran Associates, Inc.},
 title = {Denoising Diffusion Probabilistic Models},
 url = {https://proceedings.neurips.cc/paper_files/paper/2020/file/4c5bcfec8584af0d967f1ab10179ca4b-Paper.pdf},
 volume = {33},
 year = {2020}
}

@article{haussmann_1986,
  title={Time reversal of diffusions},
  author={Haussmann, U.G. and Pardoux, E.},
  journal={The Annals of Probability},
  volume={14},
  number={4},
  pages={1188--1205},
  year={1986},
  publisher={Institute of Mathematical Statistics},
  doi={10.1214/aop/1176992362}
}

@inproceedings{
cui_2024,
title={Analysis of Learning a Flow-based Generative Model from Limited Sample Complexity},
author={Hugo Cui and Florent Krzakala and Eric Vanden-Eijnden and Lenka Zdeborova},
booktitle={The Twelfth International Conference on Learning Representations},
year={2024},
url={https://openreview.net/forum?id=ndCJeysCPe}
}

@inproceedings{cui_2025,
 author = {Cui, Hugo and Pehlevan, Cengiz and Lu, Yue},
 booktitle = {Advances in Neural Information Processing Systems},
 doi = {10.52202/085713-0187},
 editor = {D. Belgrave and C. Zhang and H. Lin and R. Pascanu and P. Koniusz and M. Ghassemi and N. Chen},
 pages = {5253--5296},
 publisher = {Curran Associates, Inc.},
 title = {A solvable model of learning generative diffusion: theory and insights},
 url = {https://proceedings.neurips.cc/paper_files/paper/2025/file/082d3d795520c43214da5123e56a3a34-Paper-Conference.pdf},
 volume = {38, Main Conference},
 year = {2025}
}

@misc{li_2024_good_score,
      title={A Good Score Does not Lead to A Good Generative Model}, 
      author={Sixu Li and Shi Chen and Qin Li},
      year={2024},
      eprint={2401.04856},
      archivePrefix={arXiv},
      primaryClass={cs.LG},
      url={https://arxiv.org/abs/2401.04856}, 
}

@inproceedings{somepalli_2022,
    title={Diffusion Art or Digital Forgery? Investigating Data Replication in Diffusion Models},
    author={Somepalli, Gowthami and Singla, Vasu and Goldblum, Micah and Geiping, Jonas and Goldstein, Tom},
    booktitle={Proceedings of the IEEE/CVF Conference on Computer Vision and Pattern Recognition},
    year={2023}
  }

@article{somepalli_2023,
title={Understanding and mitigating copying in diffusion models},
author={Somepalli, Gowthami and Singla, Vasu and Goldblum, Micah and Geiping, Jonas and Goldstein, Tom},
journal={Advances in Neural Information Processing Systems},
volume={36},
pages={47783--47803},
year={2023}
}

@inproceedings{Carlini_2023, author = {Carlini, Nicholas and Hayes, Jamie and Nasr, Milad and Jagielski, Matthew and Sehwag, Vikash and Tram\`{e}r, Florian and Balle, Borja and Ippolito, Daphne and Wallace, Eric}, title = {Extracting training data from diffusion models}, year = {2023}, isbn = {978-1-939133-37-3}, publisher = {USENIX Association}, address = {USA}, booktitle = {Proceedings of the 32nd USENIX Conference on Security Symposium}, articleno = {294}, numpages = {18}, location = {Anaheim, CA, USA}, series = {SEC '23} }

@misc{ventura2025,
      title={Manifolds, Random Matrices and Spectral Gaps: The geometric phases of generative diffusion}, 
      author={Enrico Ventura and Beatrice Achilli and Gianluigi Silvestri and Carlo Lucibello and Luca Ambrogioni},
      year={2025},
      eprint={2410.05898},
      archivePrefix={arXiv},
      primaryClass={stat.ML},
      url={https://arxiv.org/abs/2410.05898}, 
}

@misc{achilli2024,
      title={Losing dimensions: Geometric memorization in generative diffusion}, 
      author={Beatrice Achilli and Enrico Ventura and Gianluigi Silvestri and Bao Pham and Gabriel Raya and Dmitry Krotov and Carlo Lucibello and Luca Ambrogioni},
      year={2024},
      eprint={2410.08727},
      archivePrefix={arXiv},
      primaryClass={stat.ML},
      url={https://arxiv.org/abs/2410.08727}, 
}

@article{mei2020,
  title={The Generalization Error of Random Features Regression: Precise Asymptotics and the Double Descent Curve},
  author={Song Mei and Andrea Montanari},
  journal={Communications on Pure and Applied Mathematics},
  year={2019},
  volume={75},
  url={https://api.semanticscholar.org/CorpusID:199668852}
}

@inproceedings{
song2022DDIM,
title={Denoising Diffusion Implicit Models},
author={Jiaming Song and Chenlin Meng and Stefano Ermon},
booktitle={International Conference on Learning Representations},
year={2021},
url={https://openreview.net/forum?id=St1giarCHLP}
}

@inproceedings{yoon2023diffusion,
title={Diffusion Probabilistic Models Generalize when They Fail to Memorize},
author={TaeHo Yoon and Joo Young Choi and Sehyun Kwon and Ernest K. Ryu},
booktitle={ICML 2023 Workshop on Structured Probabilistic Inference {\&} Generative Modeling},
year={2023},
url={https://openreview.net/forum?id=shciCbSk9h}
}

@article{
gu2023memorization,
title={On Memorization in Diffusion Models},
author={Xiangming Gu and Chao Du and Tianyu Pang and Chongxuan Li and Min Lin and Ye Wang},
journal={Transactions on Machine Learning Research},
issn={2835-8856},
year={2025},
url={https://openreview.net/forum?id=D3DBqvSDbj},
note={}
}

@inproceedings{heusel2017gans,
author = {Heusel, Martin and Ramsauer, Hubert and Unterthiner, Thomas and Nessler, Bernhard and Hochreiter, Sepp},
title = {GANs trained by a two time-scale update rule converge to a local nash equilibrium},
year = {2017},
isbn = {9781510860964},
publisher = {Curran Associates Inc.},
address = {Red Hook, NY, USA},
booktitle = {Proceedings of the 31st International Conference on Neural Information Processing Systems},
pages = {6629–6640},
numpages = {12},
location = {Long Beach, California, USA},
series = {NIPS'17}
}

@article{kibble1945,
  author    = {W. F. Kibble},
  title     = {An extension of a theorem of Mehler’s on Hermite polynomials},
  journal   = {Mathematical Proceedings of the Cambridge Philosophical Society},
  volume    = {41},
  number    = {1},
  pages     = {12--15},
  year      = {1945},
  month     = jun,
  issn      = {0305-0041, 1469-8064},
  doi       = {10.1017/S0305004100022313}
}

@InProceedings{Ronneberger2015,
author="Ronneberger, Olaf
and Fischer, Philipp
and Brox, Thomas",
editor="Navab, Nassir
and Hornegger, Joachim
and Wells, William M.
and Frangi, Alejandro F.",
title="U-Net: Convolutional Networks for Biomedical Image Segmentation",
booktitle="Medical Image Computing and Computer-Assisted Intervention -- MICCAI 2015",
year="2015",
publisher="Springer International Publishing",
address="Cham",
pages="234--241"
}

@inproceedings{CelebA,
  title = {Deep Learning Face Attributes in the Wild},
  author = {Liu, Ziwei and Luo, Ping and Wang, Xiaogang and Tang, Xiaoou},
  booktitle = {Proceedings of International Conference on Computer Vision (ICCV)},
  month = {December},
  year = {2015} 
}

@article{Peche2019,
  author       = {S. Péché},
  title        = {A note on the Pennington-Worah distribution},
  journal      = {Electronic Communications in Probability},
  volume       = {24},
  pages        = {1--7},
  year         = {2019},
  publisher    = {Institute of Mathematical Statistics},
  doi          = {10.1214/19-ECP255},
  url          = {https://doi.org/10.1214/19-ECP255}
}

@inproceedings{karras2022elucidatingdesignspacediffusionbased,
author = {Karras, Tero and Aittala, Miika and Laine, Samuli and Aila, Timo},
title = {Elucidating the design space of diffusion-based generative models},
year = {2022},
isbn = {9781713871088},
publisher = {Curran Associates Inc.},
address = {Red Hook, NY, USA},
booktitle = {Proceedings of the 36th International Conference on Neural Information Processing Systems},
articleno = {1926},
numpages = {13},
location = {New Orleans, LA, USA},
series = {NIPS '22}
}

@book{mezard1987spin,
  title     = {Spin Glass Theory and Beyond: An Introduction to the Replica Method and Its Applications},
  author    = {M{\'e}zard, Marc and Parisi, Giorgio and Virasoro, Miguel Angel},
  volume    = {9},
  year      = {1987},
  publisher = {World Scientific Publishing Company},
  series    = {Lecture Notes in Physics},
  address   = {Singapore}
}

@misc{Favero2025_bigger,
    title={Bigger Isn't Always Memorizing: Early Stopping Overparameterized Diffusion Models},
    author={Alessandro Favero and Antonio Sclocchi and Matthieu Wyart},
    year={2025},
    eprint={2505.16959},
    archivePrefix={arXiv},
    primaryClass={cs.LG}
}

@inproceedings{
bonnaire2025diffusionmodelsdontmemorize,
title={Why Diffusion Models Don{\textquoteright}t Memorize:  The Role of Implicit Dynamical Regularization in Training},
author={Tony Bonnaire and Rapha{\"e}l Urfin and Giulio Biroli and Marc Mezard},
booktitle={The Thirty-ninth Annual Conference on Neural Information Processing Systems},
year={2025},
url={https://openreview.net/forum?id=BSZqpqgqM0}
}

@article{hu2023,
  author       = {Hong Hu and Yue M. Lu},
  title        = {Universality Laws for High-Dimensional Learning with Random Features},
  journal      = {IEEE Transactions on Information Theory},
  year         = {2023},
  volume       = {69},
  number       = {3},
  pages        = {1932--1964},
  doi          = {10.1109/TIT.2022.3217698},
}

@inproceedings{
buchanan2025,
title={On the Edge of Memorization in Diffusion Models},
author={Sam Buchanan and Druv Pai and Yi Ma and Valentin De Bortoli},
booktitle={The Thirty-ninth Annual Conference on Neural Information Processing Systems},
year={2025},
url={https://openreview.net/forum?id=rWW5wdECl8}
}

@inproceedings{bodin2021model,
 author = {Bodin, Antoine and Macris, Nicolas},
 booktitle = {Advances in Neural Information Processing Systems},
 editor = {M. Ranzato and A. Beygelzimer and Y. Dauphin and P.S. Liang and J. Wortman Vaughan},
 pages = {21605--21617},
 publisher = {Curran Associates, Inc.},
 title = {Model, sample, and epoch-wise descents: exact solution of gradient flow in the random feature model},
 url = {https://proceedings.neurips.cc/paper_files/paper/2021/file/b4f8e5c5fb53f5ba81072451531d5460-Paper.pdf},
 volume = {34},
 year = {2021}
}

@misc{merger2026,
      title={Generalization Dynamics of Linear Diffusion Models}, 
      author={Claudia Merger and Sebastian Goldt},
      year={2026},
      eprint={2505.24769},
      archivePrefix={arXiv},
      primaryClass={stat.ML},
      url={https://arxiv.org/abs/2505.24769}, 
}

@article{d_Ascoli_triple_2021,
   title={Triple descent and the two kinds of overfitting: where and why do they appear?*},
   volume={2021},
   ISSN={1742-5468},
   url={http://dx.doi.org/10.1088/1742-5468/ac3909},
   DOI={10.1088/1742-5468/ac3909},
   number={12},
   journal={Journal of Statistical Mechanics: Theory and Experiment},
   publisher={IOP Publishing},
   author={d’Ascoli, Stéphane and Sagun, Levent and Biroli, Giulio},
   year={2021},
   month=Dec, pages={124002} }

@inproceedings{
farghly2026benign,
title={Benign Overfitting Does Not Occur in Diffusion Models},
author={Tyler Farghly and Benjamin Dupuis and Alain Oliviero Durmus and Umut Simsekli},
booktitle={ICML 2026 Workshop on Foundations of Deep Generative Models: Understanding Memorization, Generalization, and Reasoning},
year={2026},
url={https://openreview.net/forum?id=QwP5eaXJTj}
}

@misc{marion2026understandingdiffusionmodelsrequires,
      title={Understanding diffusion models requires rethinking (again) generalization}, 
      author={Pierre Marion and Yu-Han Wu},
      year={2026},
      eprint={2605.06077},
      archivePrefix={arXiv},
      primaryClass={cs.LG},
      url={https://arxiv.org/abs/2605.06077}, 
}

@inproceedings{Pennington_2017,
 author = {Pennington, Jeffrey and Worah, Pratik},
 booktitle = {Advances in Neural Information Processing Systems},
 editor = {I. Guyon and U. Von Luxburg and S. Bengio and H. Wallach and R. Fergus and S. Vishwanathan and R. Garnett},
 pages = {},
 publisher = {Curran Associates, Inc.},
 title = {Nonlinear random matrix theory for deep learning},
 url = {https://proceedings.neurips.cc/paper_files/paper/2017/file/0f3d014eead934bbdbacb62a01dc4831-Paper.pdf},
 volume = {30},
 year = {2017}
}

@article{Nakkiran_2021,
doi = {10.1088/1742-5468/ac3a74},
url = {https://doi.org/10.1088/1742-5468/ac3a74},
year = {2021},
month = {dec},
publisher = {IOP Publishing and SISSA},
volume = {2021},
number = {12},
pages = {124003},
author = {Nakkiran, Preetum and Kaplun, Gal and Bansal, Yamini and Yang, Tristan and Barak, Boaz and Sutskever, Ilya},
title = {Deep double descent: where bigger models and more data hurt*},
journal = {Journal of Statistical Mechanics: Theory and Experiment}
}

@article{Anderson_1982,
title = {Reverse-time diffusion equation models},
journal = {Stochastic Processes and their Applications},
volume = {12},
number = {3},
pages = {313-326},
year = {1982},
issn = {0304-4149},
doi = {https://doi.org/10.1016/0304-4149(82)90051-5},
url = {https://www.sciencedirect.com/science/article/pii/0304414982900515},
author = {Brian D.O. Anderson}
}

@inproceedings{
zhang2017understanding,
title={Understanding deep learning requires rethinking generalization},
author={Chiyuan Zhang and Samy Bengio and Moritz Hardt and Benjamin Recht and Oriol Vinyals},
booktitle={International Conference on Learning Representations},
year={2017},
url={https://openreview.net/forum?id=Sy8gdB9xx}
}

@article{
Belkin_2019,
author = {Mikhail Belkin  and Daniel Hsu  and Siyuan Ma  and Soumik Mandal },
title = {Reconciling modern machine-learning practice and the classical bias–variance trade-off},
journal = {Proceedings of the National Academy of Sciences},
volume = {116},
number = {32},
pages = {15849-15854},
year = {2019},
doi = {10.1073/pnas.1903070116},
URL = {https://www.pnas.org/doi/abs/10.1073/pnas.1903070116},
eprint = {https://www.pnas.org/doi/pdf/10.1073/pnas.1903070116}}

@article{
Barlett_benign,
author = {Peter L. Bartlett  and Philip M. Long  and Gábor Lugosi  and Alexander Tsigler },
title = {Benign overfitting in linear regression},
journal = {Proceedings of the National Academy of Sciences},
volume = {117},
number = {48},
pages = {30063-30070},
year = {2020},
doi = {10.1073/pnas.1907378117},
URL = {https://www.pnas.org/doi/abs/10.1073/pnas.1907378117},
eprint = {https://www.pnas.org/doi/pdf/10.1073/pnas.1907378117}}

@article{hastie2022surprises,
author = {Trevor Hastie and Andrea Montanari and Saharon Rosset and Ryan J. Tibshirani},
title = {{Surprises in high-dimensional ridgeless least squares interpolation}},
volume = {50},
journal = {The Annals of Statistics},
number = {2},
publisher = {Institute of Mathematical Statistics},
pages = {949 -- 986},
year = {2022},
doi = {10.1214/21-AOS2133},
URL = {https://doi.org/10.1214/21-AOS2133}
}

@article{guerra2002thermodynamic,
  title={The thermodynamic limit in mean field spin glass models},
  author={Guerra, Francesco and Toninelli, Fabio Lucio},
  journal={Communications in Mathematical Physics},
  volume={230},
  pages={71--79},
  year={2002}
}

@article{talagrand2006parisi,
  title={The Parisi formula},
  author={Talagrand, Michel},
  journal={Annals of mathematics},
  pages={221--263},
  year={2006}
}

@article{barbier2019optimal,
  title={Optimal errors and phase transitions in high-dimensional generalized linear models},
  author={Barbier, Jean and Krzakala, Florent and Macris, Nicolas and Miolane, L{\'e}o and Zdeborov{\'a}, Lenka},
  journal={Proceedings of the National Academy of Sciences},
  volume={116},
  pages={5451--5460},
  year={2019}
}

@article{gerbelot2023asymptotic,
  title={Asymptotic Errors for Teacher-Student Convex Generalized Linear Models (Or: How to Prove Kabashima’s Replica Formula)},
  author={Gerbelot, C{\'e}dric and Abbara, Alia and Krzakala, Florent},
  journal={IEEE Transactions on Information Theory},
  volume={69},
  pages={1824--1852},
  year={2023}
}

@article{vilucchio2025asymptotics,
  title={Asymptotics of non-convex generalized linear models in high-dimensions: A proof of the replica formula},
  author={Vilucchio, M. and Dandi, Y. and Rossignol, M. P. and Gerbelot, C. and Krzakala, F.},
  journal={arXiv preprint arXiv:2502.20003},
  year={2025}
}

@misc{esser2024scalingrectifiedflowtransformers,
      title={Scaling Rectified Flow Transformers for High-Resolution Image Synthesis}, 
      author={Patrick Esser and Sumith Kulal and Andreas Blattmann and Rahim Entezari and Jonas Müller and Harry Saini and Yam Levi and Dominik Lorenz and Axel Sauer and Frederic Boesel and Dustin Podell and Tim Dockhorn and Zion English and Kyle Lacey and Alex Goodwin and Yannik Marek and Robin Rombach},
      year={2024},
      eprint={2403.03206},
      archivePrefix={arXiv},
      primaryClass={cs.CV},
      url={https://arxiv.org/abs/2403.03206}, 
}

@inproceedings{
yang2025cogvideox,
title={CogVideoX: Text-to-Video Diffusion Models with An Expert Transformer},
author={Zhuoyi Yang and Jiayan Teng and Wendi Zheng and Ming Ding and Shiyu Huang and Jiazheng Xu and Yuanming Yang and Wenyi Hong and Xiaohan Zhang and Guanyu Feng and Da Yin and Yuxuan Zhang and Weihan Wang and Yean Cheng and Bin Xu and Xiaotao Gu and Yuxiao Dong and Jie Tang},
booktitle={The Thirteenth International Conference on Learning Representations},
year={2025},
url={https://openreview.net/forum?id=LQzN6TRFg9}
}

@inproceedings{
kong2021diffwave,
title={DiffWave: A Versatile Diffusion Model for Audio Synthesis},
author={Zhifeng Kong and Wei Ping and Jiaji Huang and Kexin Zhao and Bryan Catanzaro},
booktitle={International Conference on Learning Representations},
year={2021},
url={https://openreview.net/forum?id=a-xFK8Ymz5J}
}

@InProceedings{ali2019_earlystopping,
  title = 	 {A Continuous-Time View of Early Stopping for Least Squares Regression},
  author =       {Ali, Alnur and Kolter, J. Zico and Tibshirani, Ryan J.},
  booktitle = 	 {Proceedings of the Twenty-Second International Conference on Artificial Intelligence and Statistics},
  pages = 	 {1370--1378},
  year = 	 {2019},
  editor = 	 {Chaudhuri, Kamalika and Sugiyama, Masashi},
  volume = 	 {89},
  series = 	 {Proceedings of Machine Learning Research},
  month = 	 {16--18 Apr},
  publisher =    {PMLR},
  url = 	 {https://proceedings.mlr.press/v89/ali19a.html}
}

@inproceedings{
neal2019a,
title={A Modern Take on the Bias-Variance Tradeoff in Neural Networks},
author={Brady Neal and Sarthak Mittal and Aristide Baratin and Vinayak Tantia and Matthew Scicluna and Simon Lacoste-Julien and Ioannis Mitliagkas},
booktitle={ICML 2019 Workshop on Identifying and Understanding Deep Learning Phenomena},
year={2019},
url={https://openreview.net/forum?id=B1guPVr2h4}
}

@inproceedings{nakkiran2021optimal,
  title     = {Optimal Regularization Can Mitigate Double Descent},
  author    = {Nakkiran, Preetum and Venkat, Prayaag and Kakade, Sham M. and Ma, Tengyu},
  booktitle = {International Conference on Learning Representations (ICLR)},
  year      = {2021},
  eprint    = {2003.01897},
  archivePrefix = {arXiv},
  url       = {https://arxiv.org/abs/2003.01897}
}

@misc{latourellevigeant2026generalizationmemorizationoverfittingdiffusion,
      title={Generalization, memorization, and overfitting for diffusion models trained in the lazy high-dimensional regime}, 
      author={Hugo Latourelle-Vigeant and Sinho Chewi and Aram-Alexandre Pooladian and John Sous and Theodor Misiakiewicz},
      year={2026},
      eprint={2608.23938},
      archivePrefix={arXiv},
      primaryClass={stat.ML},
      url={https://arxiv.org/abs/2608.23938}, 
}
